\DeclareMathOperator*{\argmin}{arg\,min}
\newtheorem{theorem}{Theorem} 
\newtheorem{lemma}{Lemma}
\newtheorem{definition}{Definition}
\def\E{\mathbb{E}}
\def\Tau{\mathcal{T}}
\newcommand{\B}{\mathcal{B}}
\newcommand{\D}{\mathcal{D}}
\newcommand{\G}{\mathcal{G}}
\newcommand{\M}{\mathcal{M}}
\newcommand{\X}{\mathcal{X}}
\def\blfootnote{\xdef\@thefnmark{}\@footnotetext}
\title{Provably Personalized and Robust Federated Learning}
\begin{document}
\author{Mariel Werner$^1$ \and Lie He$^2$ \and Michael Jordan$^1$ \and \  Martin Jaggi$^2$ \and Sai Praneeth Karimireddy$^1$}
\date{
$^1$Department of Electrical Engineering and Computer Sciences, \\ University of California, Berkeley\\
$^2$Machine Learning and Optimization Laboratory (MLO),\\
EPFL, Switzerland
}
\maketitle
\begin{abstract}
Identifying clients with similar objectives and learning a model-per-cluster is an intuitive and interpretable approach to personalization in federated learning. However, doing so with provable and optimal guarantees has remained an open challenge. We formalize this problem as a stochastic optimization problem, achieving optimal convergence rates for a large class of loss functions. We propose simple iterative algorithms which identify clusters of similar clients and train a personalized model-per-cluster, using local client gradients and flexible constraints on the clusters. The convergence rates of our algorithms asymptotically match those obtained if we knew the true underlying clustering of the clients and are provably robust in the Byzantine setting where some fraction of the clients are malicious.
\end{abstract}

\section{Introduction}
We consider the federated learning setting in which there are $N$ clients with individual loss functions $\{f_i\}_{i \in [N]}$ who seek to jointly train a model or multiple models. The defacto algorithm for problems in this setting is FedAvg \citep{mcmahan2017communication} which has an objective of the form
\begin{align}
    x^*_{\text{FedAvg}} = \argmin_{x \in \X} \frac{1}{N}\sum_{i \in [N]} f_i(x)
    \label{eq:global-model}.
\end{align}
From \eqref{eq:global-model}, we see that FedAvg optimizes the average of the client losses. In many real-world cases however, clients' data distributions are heterogeneous, making such an approach unsuitable since the global optimum \eqref{eq:global-model} may be very far from the optima of individual clients. Rather, we want algorithms which identify clusters of the clients that have relevant data for each other and that only perform training within each cluster. However, this is a challenging exercise since 1) it is unclear what it means for data distributions of two clients to be useful for each other, or 2) how to automatically identify such subsets without expensive multiple retraining \citep{zamir2018taskonomy}. In this work we propose algorithms which iteratively and simultaneously 1) identify $K$ clusters amongst the clients by clustering their gradients and 2) optimize the clients' losses within each cluster.

\subsection{Related Work}
\paragraph{Personalization via Clustering.}
Personalization in federated learning has recently enjoyed tremendous attention (see~\cite{tan2022towards,kulkarni2020survey} for surveys). We focus on gradient-based clustering methods for personalized federated learning. Several recent works propose and analyze clustering methods. ~\cite{sattler2019} alternately train a global model with FedAvg and partition clients into smaller clusters based on the global model's performance on their local data. \cite{mansour2020} and \cite{ghosh2020efficient} instead train personalized models from the start (as we do) without maintaining a global model. They iteratively update $K$ models and, using empirical risk minimization, assign each of $N$ clients one of the models at every step. In Section \ref{sec:clustering-methods} we analyze these algorithms on constructed examples and in Section \ref{sec:federated-clustering on failure-mode examples} compare them to our method.

Since our work is closest to \cite{ghosh2020efficient}, we highlight key similarities and differences. \textbf{Similarities}: 1) We both design stochastic gradient descent- and clustering-based algorithms for personalized federated learning. 2) We both assume sufficient intra-cluster closeness and inter-cluster separation of clients for the clustering task (Assumptions 1 and 2 in their work; Assumptions 4 and 5 in ours). 3) Our convergence rates both scale inversely with the number of clients and the inter-cluster separation parameter $\Delta$.
\textbf{Differences}: 1) They assume strong convexity of the clients' loss objectives, while our guarantees hold for all smooth (convex and non-convex) functions. 2) They cluster clients based on similarity of loss-function values whereas we cluster clients based on similarity of gradients. We show that clustering based on loss-function values instead of gradients can be overly sensitive to model initialization (see Fig.~\ref{fig:ifca}). 3) Since we determine clusters based on distances in gradient space, we are able to apply an aggregation rule which makes our algorithm robust to some fraction of malicious clients. They determine cluster identity based on loss-function value and do not provide robustness guarantees.

Recently, \cite{even2022sample} established lower bounds showing that the optimal strategy is to cluster clients who share the same optimum. Our algorithms and theoretical analysis are inspired by this lower-bound, and our gradient-based clustering approach makes our algorithms amenable to analysis \`a la their framework.

\paragraph{Multitask learning.} Our work is closely related to multitask learning, which simultaneously trains separate models for different-but-related tasks. \cite{smith2017mocha} and \cite{smith2021ditto} both cast personalized federated learning as a multitask learning problem. In the first, the per-task models jointly minimize an objective that encodes relationships between the tasks. In the second, models are trained locally (for personalization) but regularized to be close to an optimal global model (for task-relatedness). These settings are quite similar to our setting. However, we use assumptions on gradient (dis)similarity across the domain space to encode relationships between tasks, and we do not maintain a global model.

\paragraph{Robustness.} Our methods are provably robust in the Byzantine \citep{lamport2019byzantine,blanchard2017machine} setting, where clients can make arbitrary updates to their gradients to corrupt the training process. Several works on Byzantine robust distributed optimization \citep{blanchard2017machine,yin2018byzantine,damaskinos2019byzantine,guerraoui2018byzantine,pillutla2022robust} propose aggregation rules in lieu of averaging as a step towards robustness. However, \cite{baruch2019defense,xie2020byzantine} show that these rules are not in fact robust and perform poorly in practice. \cite{karimireddy2021} are the first to provide a provably Byzantine-robust distributed optimization framework by combining a novel aggregation rule with momentum-based stochastic gradient descent. We use a version of their centered-clipping aggregation rule to update client gradients. Due to this overlap in aggregation rule, components of our convergence results are similar to their Theorem 6. However, our analysis is significantly complicated by our personalization and clustering structure. In particular, all non-malicious clients in \cite{karimireddy2021,karimireddy2022byzantine} have the same optimum and therefore can be viewed as comprising a single cluster, whereas we consider multiple clusters of clients (without necessarily assuming clients are i.i.d. within a cluster). The personalization algorithm in \cite{smith2021ditto} also has robustness properties, but they are only demonstrated empirically and analyzed on toy examples.

\paragraph{Recent Empirical Approaches.} Two recent works \citep{wang2022modular,wu2023mixture} examine the setting in which clients' marginal distributions $p(x)$ differ, whereas most prior work only allows their conditional distributions $p(y|x)$ to differ. One of our experiments (Section \ref{sec:synthetic experiment}) assumes heterogeneity between clients' marginal distributions, while the others (Section \ref{sec:mnist experiment}) assume heterogeneity only between their conditional distributions. In \cite{wang2022modular}, the server maintains a global pool of modules (neural networks) from which clients, via a routing algorithm, efficiently select and combine sub-modules to create personalized models that perform well on their individual distributions. Extending the work of \cite{marfoq2021mixture}, \cite{wu2023mixture} model each client's joint distribution as a mixture of Gaussian distributions, with the weights of the mixture personalized to each client. They then propose a federated Expectation-Maximization algorithm to optimize the parameters of the mixture model. In general, the contributions and style of our work and these others differ significantly. We focus on achieving and proving optimal theoretical convergence rates which we verify empirically, whereas \cite{wang2022modular} and \cite{wu2023mixture} emphasize empirical application over theoretical analysis.

\subsection{Our Contributions}
To address the shortcomings in current approaches, we propose two personalized federated learning algorithms, which simultaneously cluster similar clients and optimize their loss objectives in a personalized manner. In each round of the procedure, we examine the client gradients to identify the cluster structure as well as to update the model parameters. Importantly, ours is the first method with theoretical guarantees for general non-convex loss functions, and not just restrictive toy settings. We show that our method enjoys both nearly optimal convergence, while also being robust to some malicious (Byzantine) client updates. This is again the first theoretical proof of the utility of personalization for Byzantine robustness. Specifically in this work,
\begin{itemize}
    \item We show that existing or naive clustering methods for personalized learning, with stronger assumptions than ours, can fail in simple settings (Fig.~\ref{fig:examples}).
    \item We design a robust clustering subroutine (Algorithm \ref{alg:threshold-clustering}) whose performance improves with the separation between the cluster means and the number of data points being clustered. We prove nearly matching lower bounds showing its near-optimality (Theorem \ref{thm:thresholding-lower-bound}), and we show that the error due to malicious clients scales smoothly with the fraction of such clients (Theorem \ref{thm:threshold-clustering}). 
    \item We propose two personalized learning algorithms (Algorithm \ref{alg:federated-clustering} and Algorithm \ref{alg:momentum-clustering}) which converge at the optimal $\mathcal{O}(1/\sqrt{n_i T})$ rate in $T$ for stochastic gradient descent for smooth non-convex functions and linearly scale with $n_i$, the number of clients in client $i$'s cluster.
    \item We empirically verify our theoretical assumptions and demonstrate experimentally that our learning algorithms benefit from collaboration, scale with the number of collaborators, are competitive with SOTA personalized federated learning algorithms, and are not sensitive to the model's initial weights (Section \ref{sec:experiments}).
\end{itemize}

\section{Existing Clustering Methods for Personalized Federated Learning}\label{sec:clustering-methods}
Our task at hand in this work is to simultaneously learn the clustering structure amongst clients and minimize their losses. Current methods do not rigorously check similarity of clients throughout the training process. Therefore they are not able to correct for early-on erroneous clustering (e.g. due to gradient stochasticity, model initialization, or the form of loss-functions far from their optima). In the next section we demonstrate such failure modes of existing algorithms.

\subsection{Failure Modes of Existing Methods}\label{sec:failure-modes}
The first algorithm we discuss, Myopic-Clustering, does not appear in the existing literature, but we create it in order to motivate the design of our method (Algorithm \ref{alg:federated-clustering}). In particular, it is a natural first step towards our method, but has limitations which we correct when designing our algorithm.
\paragraph{Myopic-Clustering (Algorithm \ref{alg:myopic-clustering}).} At every step, each client computes their gradient at their current model and sends the gradient to a central server. The server clusters the gradients and sends each cluster center to the clients assigned to that cluster. Each client then performs a gradient descent update on their model with their received cluster center. This is a natural federated clustering procedure and it is communication-efficient ($\mathcal{O}(N)$). However, it has two issues: 1) If it makes a clustering mistake at one step, models will be updated with the wrong set of clients. This can cause models to diverge from their optima, gradients of clients in the same cluster to drift apart, and gradients of clients in different clusters to drift together, thus obscuring the correct clustering going forward. Furthermore, these errors can compound over rounds. 2) Even if Myopic-Clustering clusters clients perfectly at each step, the clients' gradients will approach zero as the models converge to their optima. This means that clients from different clusters will appear to belong to the same cluster as the algorithm converges and all clients will collapse into a single cluster.
 \begin{figure}[t!]
    \centering
    \begin{subfigure}[t]{0.3\textwidth}
        \centering
        \includegraphics[height=2in]{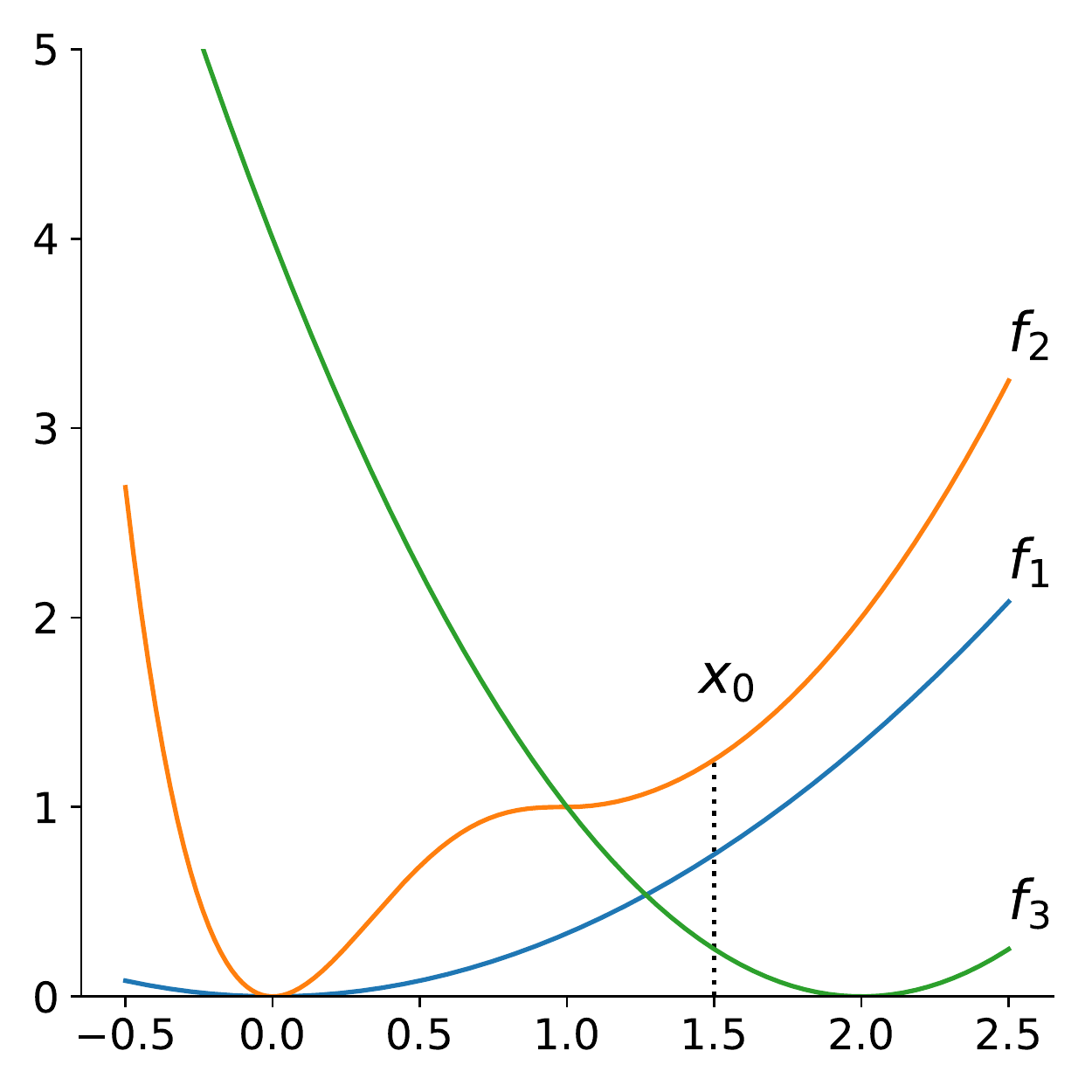}
        \caption{\textbf{Myopic-Clustering ($\eta=0.5$)}. \textbf{Correct clustering}: $\{1,2\}$ and $\{3\}$. Client $\{2\}$ gets stuck at $x=1$, not reaching its optimum, and clients $\{1,3\}$ converge to their optima. All gradients being $0$ at this point, the clients are incorrectly clustered together: $\{1,2,3\}$.}
        \label{fig:myopic-clustering}
    \end{subfigure}
    \begin{subfigure}[t]{0.3\textwidth}
        \centering
        \includegraphics[height=2in]{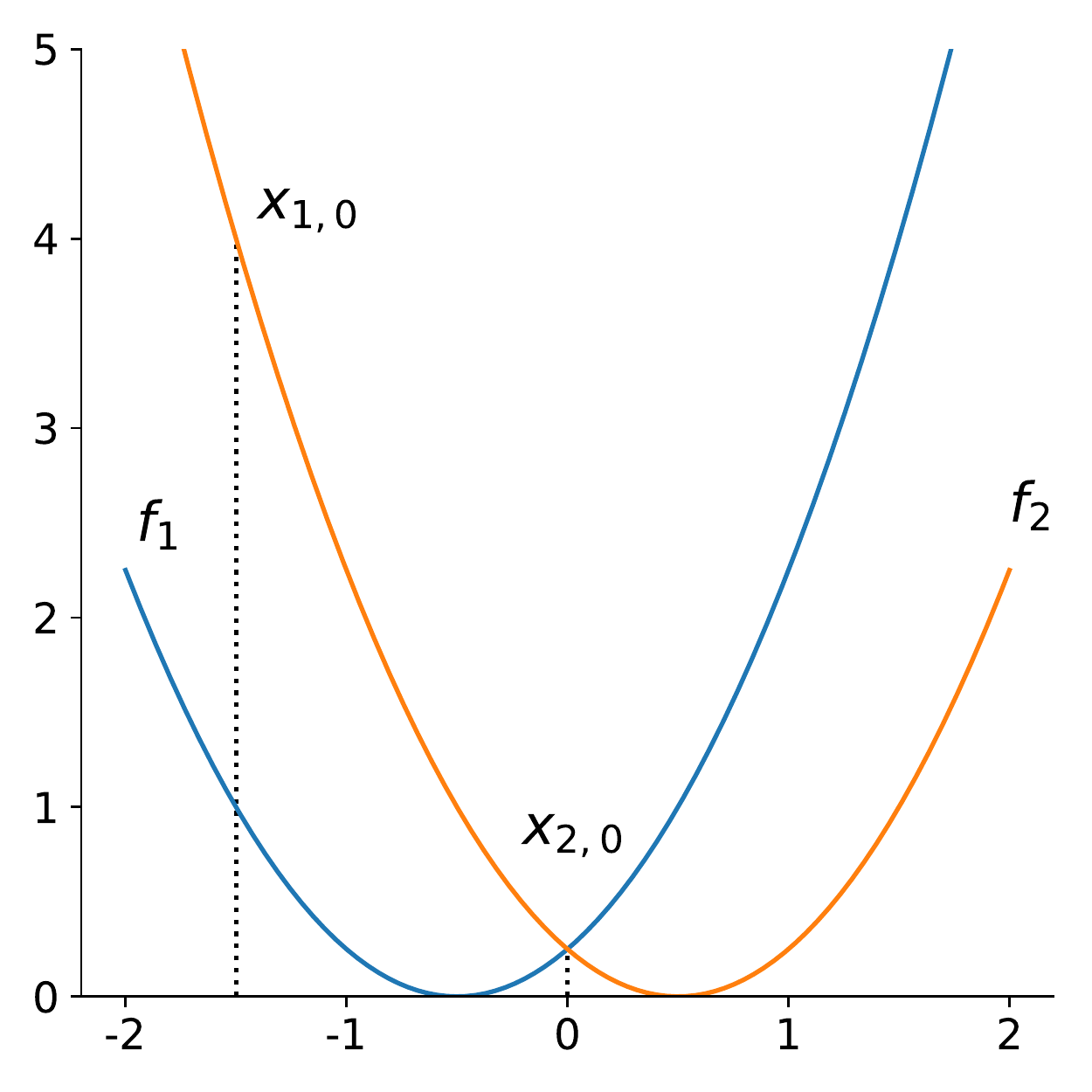}
        \caption{\textbf{IFCA/HypCluster}. \textbf{Correct clustering}: $\{1\},\{2\}$. Both clients' function values are smaller at initialization point $x_{2,0}$ than $x_{1,0}$ causing IFCA/HypCluster to initially cluster them together. Since the average of the clients' gradients at $x_{2,0}$ is $0$, the models never update and the algorithm thinks the initial erroneous clustering, $\{1,2\}$ is correct.}
        \label{fig:ifca}
    \end{subfigure}
    \begin{subfigure}[t]{0.3\textwidth}
        \centering
        \includegraphics[height=2in]{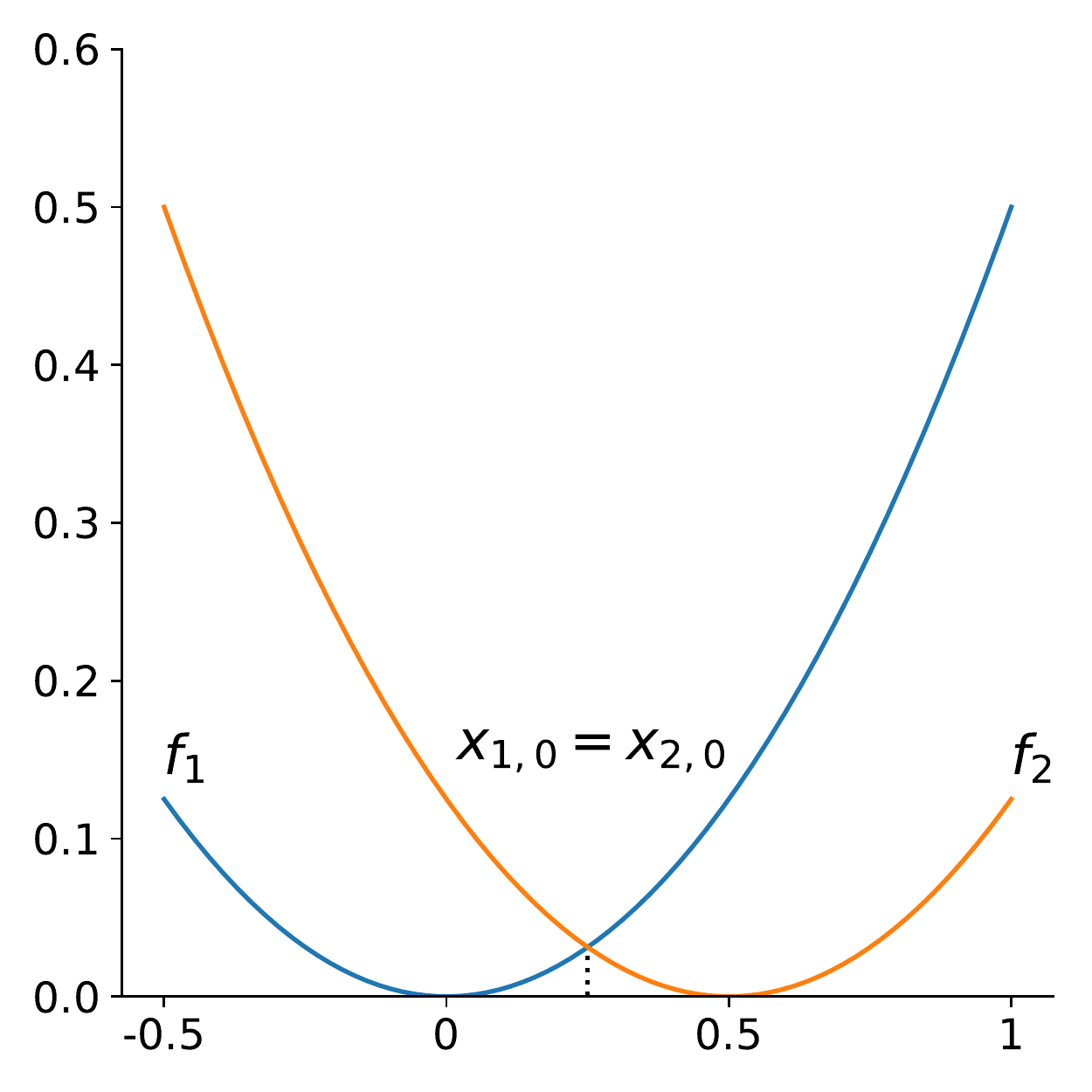}
        \caption{\textbf{Clustered Federated Learning}. \textbf{Correct clustering: $\{1\}$, $\{2,3\}$} (client 3 not drawn due to its stochastic gradient -- details on pg. 5). Clustered FL averages gradients of clients $\{1\}$ and $\{2\}$ to $0$, clustering them together, and with non-$0$ probability clusters $\{3\}$ separately due to its stochastic gradient. Based on this initial erroneous, the algorithm partitions the clients $\{1,2\}$ and $\{3\}$ and recursively runs on each group, never recovering the correct clustering.}
        \label{fig:cfl}
    \end{subfigure}
    \caption{We show how existing personalized FL algorithms miscluster and fail to converge on constructed examples.}\label{fig:examples}
\end{figure}
The following example (Fig.~\ref{fig:myopic-clustering}) demonstrates these failure modes of Myopic-Clustering. 

Let $N=3$ and $K=2$, with client loss functions
\begin{align}
    f_1(x) &= \frac{1}{6\eta}x^2\\
    f_2(x) &=  
    \begin{cases}
        4(x-1)^3 + 3(x-1)^4 + 1 & x < 1\\
        \frac{1}{2\eta}(x-1)^2 + 1 & x \geq 1,
    \end{cases}\\
    f_3(x) &= \frac{1}{2\eta}(x-2)^2,
\end{align}
where $\eta$ is the learning rate of the algorithm. With this structure, clients $\{1,2\}$ share the same global minimum and belong to the same cluster, and client $\{3\}$ belongs to its own cluster. Suppose Myopic-Clustering is initialized at $x_0=1.5$. At step 1, the client gradients computed at $x_0=1.5$ are $\nicefrac{1}{2\eta}$, $\nicefrac{1}{2\eta}$, and $\nicefrac{-1}{2\eta}$ respectively. Therefore, clients $\{1,2\}$ are correctly clustered together and client $\{3\}$ alone at this step. After updates, the clients' parameters will next be $x_{1,1} = 1, x_{2,1} = 1, x_{3,1} = 2$ respectively. At this point, clients $\{2,3\}$ will be incorrectly clustered together since their gradients will both be $0$, while client $\{1\}$ will be clustered alone. As the algorithm proceeds, clients $\{2,3\}$ will always be clustered together and will remain at $x=1$ and $x=2$ respectively, while client $\{1\}$ will converge to its optimum at $x=0$. Consequently, two undesirable things happen: 1) Client $\{2\}$ gets stuck at the saddle point at $x=1$ which occurred when it was incorrectly clustered with client $\{3\}$ at $t=1$ and subsequently did not recover. 2) All gradients converge to $0$, so at the end of the algorithm all clients are clustered together. 


\begin{algorithm}[!t]
\caption{Myopic-Clustering}\label{alg:myopic-clustering}
\textbf{Input} Learning rate: $\eta$. Initial parameters: $\{x_{1,0}=...=x_{N,0}=x_{0}\}$.
\begin{algorithmic}[1]
\For {round $t \in [T]$}
\For {client $i$ in [N]}
\State Client $i$ sends $g_i(x_{i,t-1})$ to server.
\State Server clusters $\{g_i(x_{i,t-1})\}_{i \in [N]}$, generating cluster centers $\{v_{k,t}\}_{k \in [K]}$.
\State Server sends $v_{k_i,t}$ to client $i$, where $k_i$ denotes the cluster to which client $i$ is assigned. 
\State Client $i$ computes update: $x_{i,t} = x_{i,t-1} - \eta v_{k_i,t}$.
\EndFor
\EndFor
\State{\bfseries Output:} Personalized parameters: $\{x_{1,T},...,x_{N,T}\}$.
\end{algorithmic}
\end{algorithm}

To further motivate the design choice for our algorithms, we now discuss three clustering-based algorithms in the literature on personalized federated learning. In particular, we generate counter-examples on which they fail and show how our algorithm avoids such pitfalls. 

The first two algorithms IFCA \citep{ghosh2020efficient} and HypCluster \citep{mansour2020} are closely related. They both cluster loss function values rather than gradients, and like our algorithm they avoid the myopic nature of Myopic-Clustering by, at each step, computing all client losses at all current cluster parameters to determine the clustering. However, as we show in the next example (Fig.~\ref{fig:ifca}), they are brittle and sensitive to initialization. 
\paragraph{IFCA \citep{ghosh2020efficient}.}
Let $N=2$, $K=2$ with loss functions
\begin{align}
    f_1(x) &= (x+0.5)^2\\
    f_2(x) &= (x-0.5)^2,
\end{align}
and initialize clusters 1 and 2 at $x_{1,0} = -1.5$ and $x_{2,0} = 0$ respectively. Given this setup, both clients initially select cluster 2 since their losses at $x_{2,0}$ are smaller than at $x_{1,0}$.
\newline
\newline
\underline{Option I}:
At $x_{2,0}=0$, the client gradients will average to 0. Consequently the models will remain stuck at their initializations, and both clients will be incorrectly assigned to cluster 2. 
\newline
\newline
\underline{Option II}: Both clients individually run $\tau$ steps of gradient descent starting at their selected model $x_{2,0}$ (i.e. perform the \texttt{LocalUpdate} function in line 18 of IFCA). Since the clients' individually updated models will be symmetric around $0$ after this process, the server will compute cluster 2's model update in line 15 of IFCA as: $x_{2,1} \leftarrow 0 = x_{2,0}$. Consequently, the outcome is the same as in Option I: the models never update and both clients are incorrectly assigned to cluster 2.
\paragraph{HypCluster \citep{mansour2020}.} This algorithm is a centralized version of Option II of IFCA. The server alternately clusters clients by loss function value and runs stochastic gradient descent per-cluster using the clients' data. It performs as Option II of IFCA on the example above.


Finally, we discuss Clustered Federated Learning, the algorithm proposed in \cite{sattler2019}, which runs the risk of clustering too finely, as in the next example (Fig.~\ref{fig:cfl}). 

\paragraph{Clustered Federated Learning \citep{sattler2019}.}
Clustered Federated Learning operates by recursively bi-partitioning the set of clients based on the clients' gradient values at the FedAvg optimum. Consider the following example. Let $N=3$ and $K=2$ with client gradients
\begin{align}
    g_1(x) &= x\\
    g_2(x) &= x-\nicefrac{1}{2}\\
    g_3(x) &= 
    \begin{cases}
        x & \text{w.p. } \nicefrac{1}{2}\\
        x-1 & \text{w.p. } \nicefrac{1}{2}.
    \end{cases}
\end{align}
Therefore the correct clustering here is $\{1\}$ and $\{2,3\}$. The FedAvg optimum is $x_{\text{FedAvg}}^* = \nicefrac{1}{4}$, at which the clients' gradient values are $g_1(\nicefrac{1}{4}) = \nicefrac{1}{4}$, $g_2(\nicefrac{1}{4}) = -\nicefrac{1}{4}$ and $g_3 = \nicefrac{1}{4}$ w.p. $\nicefrac{1}{2}$. Based on this computation, Clustered Federated Learning partitions the client set into $\{1,2\}$ and $\{3\}$  w.p. $\nicefrac{1}{2}$ and then proceeds to run the algorithm separately on each sub-cluster. Therefore, the algorithm never corrects its initial error in separating clients $\{2\}$ and $\{3\}$. 

The behaviour of these algorithms motivates our method Federated-Clustering, which by rigorously checking client similarity at every step of the training process can recover from past clustering errors.

\section{Proposed Method: Federated-Clustering (Algorithm \ref{alg:federated-clustering})}
At a high level, Federated-Clustering works as follows. Each client $i$ maintains a personalized model which, at every step, it broadcasts to the other clients $j \neq i$. Then each client $j$ computes its gradient on clients $i$'s model parameters and sends the gradient to client $i$. Finally, client $i$ runs a clustering procedure on the received gradients, determines which other clients have gradients closest to its own at its current model, and updates its current model by averaging the gradients of these similar clients. By the end of the algorithm, ideally each client has a model which has been trained only on the data of similar clients.

The core of Federated-Clustering is a clustering procedure, Threshold-Clustering (Algorithm \ref{alg:threshold-clustering}), which identifies clients with similar gradients at each step. This clustering procedure, which we discuss in the next section, has two important properties: it is robust and its error rate is near-optimal. 

\paragraph{Notation.} For an arbitrary integer $N$, we let $[N] = \{1,...,N\}$. We take $a \gtrsim b$ to mean there is a sufficiently large constant $c$ such that $a \geq cb$, $a \lesssim b$ to mean there is a sufficiently small constant $c$ such that $a \leq cb$, and $a \approx b$ to mean there is a constant $c$ such that $a=cb$. We write $i \sim j$ if clients $i$ and $j$ belong to the same cluster, $i \overset{i.i.d.}{\sim} j$ if they belong to the same cluster and their data is drawn independently from identical distributions (we will sometimes equivalently write $z_i \overset{i.i.d.}{\sim} z_j$, where $z_i$ and $z_j$ are arbitrary points drawn from clients $i$'s and $j$'s distributions), and $i \not\sim j$ if they belong to different clusters. For two different clients $i$ and $j$, same cluster or not, we write $i \neq j$. Finally, $n_i$ denotes the number of clients in client $i$'s cluster, $\delta_i = \nicefrac{n_i}{N}$ denotes the fraction of clients in client $i$'s cluster, and $\beta_i$ denotes the fraction of clients that are malicious from client $i$'s perspective.

\subsection{Analysis of Clustering Procedure}
Given the task of clustering $N$ points into $K$ clusters, at step $l$ our clustering procedure has current estimates of the $K$ cluster-centers, $v_{1,l},...,v_{K,l}$. To update each estimate $v_{k,l+1} \leftarrow v_{k,l}$, it constructs a ball of radius $\tau_{k,l}$ around $v_{k,l}$. If a point falls inside the ball, the point retains its value; if it falls outside the ball, its value is mapped to the current cluster-center estimate. The values of all the points are then averaged to set $v_{k,l+1}$ (update rule \eqref{eq:t-c-update-rule}). The advantage of this rule is that it is very conservative. If our algorithm is confident that its current cluster-center estimate is close to the true cluster mean (i.e. there are many points nearby), it will confidently improve its estimate by taking a large step in the right direction (where the step size and direction are determined mainly by the nearby points). If our algorithm is not confident about being close to the cluster mean, it will tentatively improve its estimate by taking a small step in the right direction (where the step size and direction are small since the majority of points are far away and thus do not change the current estimate).

To analyze the theoretical properties of this procedure, we look at a natural setting in which clients within the same cluster have i.i.d. data (for analysis of our federated learning algorithm, we will relax this strong notion of intra-cluster similarity). In particular, in our setting there are $N$ points $\{z_1,...,z_N\}$ which can be partitioned into $K$ clusters within which points are i.i.d.. We assume the following.
\begin{itemize}
    \item \textbf{Assumption 1} (Intra-cluster Similarity): For all $i \sim j$,
    \begin{align}
        z_i \overset{i.i.d.}{\sim} z_j.
    \end{align}
    \item \textbf{Assumption 2} (Inter-cluster Separation): For all $i \not\sim j$,
    \begin{align}
        \|\E z_i - \E z_j\|^2
        &\geq
        \Delta^2.
    \end{align}
    \item \textbf{Assumption 3} (Bounded Variance): For all $z_i$,
    \begin{align}
        \E\|z_i - \E z_i\|^2
        &\leq
        \sigma^2.
    \end{align}
\end{itemize}
\begin{theorem}\label{thm:threshold-clustering}
    Suppose there $N$ points $\{z_i\}_{i \in [N]}$ for which Assumptions [1-3] hold with inter-cluster separation parameter $\Delta \gtrsim \nicefrac{\sigma}{\delta_i}$. Running Algorithm \ref{alg:threshold-clustering} for 
    \begin{align}
        l
        &\gtrsim
        \max\bigg\{1,\max_{i \in [N]}\frac{\log(\nicefrac{\sigma}{\Delta})}{\log(1-\nicefrac{\delta_i}{2})}\bigg\}
    \end{align}
    steps with fraction of malicious clients $\beta_i \lesssim \delta_i$ and thresholding radius $\tau \approx \sqrt{\delta_i\sigma\Delta}$ guarantees that
    \begin{align}
        \E\|v_{k_i,l} - \E z_i\|^2
        &\lesssim
        \frac{\sigma^2}{n_i} + \frac{\sigma^3}{\Delta} + \beta_i\sigma\Delta.\label{eq:threshold-clustering-rate}
    \end{align}
\end{theorem}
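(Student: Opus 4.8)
The plan is to reduce the theorem to a one-step contraction for a single cluster and then unroll it. Fix the cluster of client $i$, write $\mu=\E z_i$ for its true mean, $n=n_i$, $\delta=\delta_i=n/N$, $\beta=\beta_i$, and let $e_l=\|v_{k_i,l}-\mu\|$ be the error of the estimate that tracks this cluster. Writing the threshold update \eqref{eq:t-c-update-rule} as $v_{k_i,l+1}=v_{k_i,l}+\tfrac1N\sum_{j}(z_j-v_{k_i,l})\mathbbm 1\{\|z_j-v_{k_i,l}\|\le\tau\}$ and subtracting $\mu$ gives $v_{k_i,l+1}-\mu=(1-\pi_l)(v_{k_i,l}-\mu)+\tfrac1N\sum_{j}(z_j-\mu)\mathbbm 1\{\|z_j-v_{k_i,l}\|\le\tau\}$, where $\pi_l$ is the fraction of the $N$ points inside the ball of radius $\tau$ around $v_{k_i,l}$. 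The goal is the recursion $\E[e_{l+1}^2]\lesssim(1-\delta/2)\,\E[e_l^2]+\mathrm{Floor}$ with $\mathrm{Floor}\lesssim\sigma^2/n+\sigma^3/\Delta+\beta\sigma\Delta$, from which $\E[e_l^2]\lesssim(1-\delta/2)^l\E[e_0^2]+\mathrm{Floor}$; the stated number of steps is exactly what makes $(1-\delta/2)^l\le\sigma/\Delta$, so that from the algorithm's initialization of each center at a point of its cluster ($\E[e_0^2]\le\sigma^2$ by Assumption~3) the transient term is $\le\sigma^3/\Delta$ and is absorbed into the floor.

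For the contraction factor I would lower-bound $\pi_l$: by Assumption~3 and Markov a non-malicious point of the cluster lies within $\tau/2$ of $\mu$ — hence inside the ball whenever $e_l\le\tau/2$ — except with probability $\lesssim\sigma^2/\tau^2$, and since $\tau\approx\sqrt{\delta\sigma\Delta}\gtrsim\sigma$ under the hypothesis $\Delta\gtrsim\sigma/\delta$ this is a small constant; subtracting the at most $\beta N$ malicious points leaves $\pi_l\ge\delta-\beta-O(\delta\sigma^2/\tau^2)\ge\delta/2$, using $\beta\lesssim\delta$. For the floor I would split the residual sum $\tfrac1N\sum_j(z_j-\mu)\mathbbm 1\{\cdot\}$ into three groups and bound each contribution after inverting the $\delta/2$ gap (a $1/\delta^2=N^2/n^2$ amplification at the fixed point). \emph{Own-cluster non-malicious points:} their centered average has second moment $\le n\sigma^2/N^2$ by Assumption~1 (independence), amplifying to $\sigma^2/n$; the correction from those lying \emph{outside} the ball is controlled by $\|z\|\mathbbm 1\{\|z\|\gtrsim\tau\}\le\|z\|^2/\tau$ with Assumption~3, giving a term $\approx\sigma^4/\tau^2$. \emph{Other-cluster non-malicious points:} by Assumption~2 such a point is at distance $\ge\Delta$ from $\mu$, hence $\ge\Delta-\tau-e_l\gtrsim\Delta$ from $v_{k_i,l}$, so it enters the ball with probability only $\lesssim\sigma^2/\Delta^2$ and then contributes $\le\tau+e_l\lesssim\tau$ in norm; summed over $\le N$ points and amplified this is subdominant. \emph{Malicious points:} the point of the hard threshold is that a corrupted point inside the ball displaces the average by at most $\tau/N$, so the total malicious contribution is $\le\beta\tau$ in norm, amplifying to $\beta^2\tau^2/\delta^2=\beta^2\sigma\Delta/\delta\le\beta\sigma\Delta$, again using $\beta\lesssim\delta$. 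Balancing the truncation term $\sigma^4/\tau^2$ against the malicious term $\beta^2\tau^2/\delta^2$ (keeping the other-cluster leakage subdominant) is what pins down $\tau\approx\sqrt{\delta\sigma\Delta}$, after which the floor is $\lesssim\sigma^2/n+\sigma^3/\Delta+\beta\sigma\Delta$.

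To close the argument one must check consistency of the regime used above: the case analysis needs $e_l\lesssim\tau$, which is self-consistent because the floor is a small fraction of $\tau^2=\delta\sigma\Delta$ — indeed $\sigma^3/\Delta\lesssim\delta\sigma\Delta$ and $\beta\sigma\Delta\lesssim\delta\sigma\Delta$ follow from $\Delta\gtrsim\sigma/\delta$ and $\beta\lesssim\delta$ — so an induction simultaneously propagating the moment bound and the event $\{e_l\lesssim\tau\}$ goes through, and the $\max$ with $1$ in the step count merely covers the degenerate case $\Delta\asymp\sigma$.

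The main obstacle I anticipate is making this heuristic recursion genuinely rigorous. First, the in-ball indicators $\mathbbm 1\{\|z_j-v_{k_i,l}\|\le\tau\}$ are not independent of the summands $z_j-\mu$, since $v_{k_i,l}$ is itself a function of all the data; the standard fix is a leave-one-out decoupling — replace $v_{k_i,l}$ by the estimate $v_{k_i,l}^{(-j)}$ computed without point $j$, which differ by $O(\tau/N)$ and which is independent of $z_j$ — but this correction has to be carried through each of the three group bounds. Second, because the statement is in expectation while the group bounds need $e_l\lesssim\tau$ pathwise (otherwise too few own-cluster points sit in the ball and there is no contraction), the induction must couple a second-moment bound with a high-probability good-regime event using the $\mathrm{Floor}\ll\tau^2$ inequality above. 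A last, bookkeeping-level point is to verify that the $K$ running centers lock onto the $K$ distinct clusters — no two tracking the same cluster, none stranded in between — which the $\Delta$-separation together with the data-point initialization guarantees but which must be made explicit for the symbol $v_{k_i,l}$ in the statement to be well defined.
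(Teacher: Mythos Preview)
Your plan is correct in outline and mirrors the paper's proof: both derive a one-step recursion $\E\|v_{k_i,l}-\mu\|^2\lesssim(1-\delta_i/2)\,\E\|v_{k_i,l-1}-\mu\|^2+\bigl(\sigma^2/n_i+\sigma^3/\Delta+\beta_i\sigma\Delta\bigr)$ by splitting into own-cluster, other-cluster, and malicious contributions, and then unroll. Your per-group estimates --- the $\|z\|\mathbbm 1\{\|z\|>\tau\}\le\|z\|^2/\tau$ truncation bound, the $\sigma^2/\Delta^2$ leakage probability via Markov, and the $\tau$-cap on malicious displacement --- are exactly what the paper uses in its Lemmas on $\Tau_1$, $\Tau_3$, and $\mathcal E_2$.

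The organization differs in two respects. First, you obtain the contraction factor by writing $v_{l+1}-\mu=(1-\pi_l)(v_l-\mu)+R_l$ and lower-bounding $\pi_l\ge\delta/2$ pathwise; the paper instead keeps the ``clipped point'' representation $v_l=\tfrac1N\sum_j y_{j,l}$, applies Young's inequality with parameter $\delta_i/2$ to the in-cluster/out-of-cluster split, and uses that out-of-cluster points satisfy $y_{j,l}=v_{l-1}$ except with probability $\lesssim\sigma^2/\Delta^2$, so their piece contributes $(1+\delta_i/2)^2(1-\delta_i)^2c_l^2\le(1-\delta_i)c_l^2$. The contraction emerges from the algebra of the split rather than from a pointwise fraction bound. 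Second --- and this directly addresses the obstacle you flag --- the paper does not use a fixed radius: it takes $\tau_{k_i,l}^2\approx c_{k_i,l}^2+\delta_i\sigma\Delta$ with $c_{k_i,l}^2:=\E\|v_{k_i,l-1}-\mu\|^2$. With this adaptive choice the in-cluster truncation bound becomes $\|\E(y_j-z_j)\|^2\le(\E\|v_{l-1}-z_j\|^2)^2/\tau_l^2\lesssim c_l^2+\sigma^3/(\delta_i\Delta)$ unconditionally, so no good-regime event $\{e_l\lesssim\tau\}$ is needed and the entire argument stays in expectation. Only after the recursion shows $c_l^2\lesssim\sigma^3/\Delta+\cdots$ does one read off $\tau\approx\sqrt{\delta_i\sigma\Delta}$ as stated. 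Your fixed-$\tau$ route is operationally cleaner but, as you note, forces the coupled moment/high-probability induction.

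On the independence obstacle: the paper does not decouple either. In the $\Tau_2$ lemma it explicitly asserts that resampling a fresh batch at each thresholding round would only change constants, and then proceeds as if the $y_{j,l}$ were independent. So your leave-one-out plan would actually be more rigorous than what the paper does; the paper accepts the same gap you identify.
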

\begin{proof}
    See \ref{proof:threshold-clustering}.
\end{proof}

Supposing $\beta_i=0$, if we knew the identity of all points within $z_i$'s cluster, we would simply take their mean as the cluster-center estimate, incurring estimation error of $\nicefrac{\sigma^2}{n_i}$ (i.e. the sample-mean's variance). Since we don't know the identity of points within clusters, the additional factor of $\nicefrac{\sigma^3}{\Delta}$ in~\eqref{eq:threshold-clustering-rate} is the price we pay to learn the clusters. This additional term scales with the difficulty of the clustering problem. If true clusters are well-separated and/or the variance of the points within each cluster is small (i.e. $\Delta$ is large, $\sigma^2$ is small), then the clustering problem is easier and our bound is tighter. If clusters are less-well-separated and/or the variance of the points within each cluster is large, accurate clustering is more difficult and our bound weakens.

\paragraph{Setting $\tau$.}
To achieve the rate in \eqref{eq:threshold-clustering-rate}, we set $\tau \approx \sqrt{\sigma\Delta}$, which is the geometric mean of the standard deviation, $\sigma$, of points belonging to the same cluster and the distance, $\Delta$, to a different cluster. The intuition for this choice is that we want the radius for each cluster to be at least as large as the standard deviation of the points belonging to that cluster in order to capture in-cluster points. The radius could be significantly larger than the standard deviation if $\Delta$ is large, thus capturing many non-cluster points as well. However, the conservative nature of our update rule \eqref{eq:t-c-update-rule} offsets this risk. By only updating the center with a step-size proportional to the \emph{fraction} of points inside the ball, it limits the influence of any mistakenly captured points.

Threshold-Clustering has two important properties which we now discuss: it is Byzantine robust and has a near-optimal error rate. 

\subsubsection{Robustness}
We construct the following definition to characterize the robustness of Algorithm \ref{alg:threshold-clustering}.
\begin{definition}[Robustness]
    An algorithm $\mathcal{A}$ is \emph{robust} if the error introduced by bad clients can be bounded i.e. malicious clients do not have an arbitrarily large effect on the convergence. Specifically, for a specific objective, let $\mathcal{E}_1$ be the base error of $\mathcal{A}$ with no bad clients, let $\beta$ be the fraction of bad clients, and let $\mathcal{E}_2$ be some bounded error added by the bad points. Then $\mathcal{A}$ is \emph{robust} if
    \begin{align*}
        Err(\mathcal{A})
        &\leq
        \mathcal{E}_1 + \beta\mathcal{E}_2.
    \end{align*}
\end{definition}
\paragraph{Threat model.} Our clustering procedure first estimates the centers of the $K$ clusters from the $N$ points and constructs a ball of radius $\tau_k$ around the estimated center of each cluster $k$. If a point falls inside the ball, the point retains its value; if it falls outside the ball, its value is mapped to the current cluster-center estimate. Following the update rule \eqref{eq:t-c-update-rule}, the values of all the points are averaged to update the cluster-center estimate. Therefore, a bad point that wants to distort the estimate of the $k$'th cluster's center has the most influence by placing itself just within the boundary of the ball around that cluster-center, i.e. at $\tau_k$-distance from the cluster-center.  

From \eqref{eq:threshold-clustering-rate} we see that the base squared-error of Algorithm \ref{alg:threshold-clustering} in estimating $z_i$'s cluster-center is $\lesssim \nicefrac{\sigma^2}{n_i} + \nicefrac{\sigma^3}{\Delta}$, and that the bad points introduce extra squared-error of order $\sigma\Delta$. Given our threat model, this is exactly expected. The radius around $z_i$'s cluster-center is order $\sqrt{\sigma\Delta}$. Therefore, bad points placing themselves at the edge of the ball around $z_i$'s cluster-center estimate will be able to distort the estimate by order $\sigma\Delta$. The scaling of this extra error by $\beta_i$ satisfies our definition of robustness, and the error smoothly vanishes as  $\beta_i \rightarrow 0$. 

\subsubsection{Near-Optimality}
The next result shows that the upper bound \eqref{eq:threshold-clustering-rate} on the estimation error of Algorithm \ref{alg:threshold-clustering} nearly matches the best-achievable lower bound. In particular, it is tight within a factor of $\nicefrac{\sigma}{\Delta}$.

\begin{theorem}[Near-optimality of \textbf{Threshold-Clustering}]\label{thm:thresholding-lower-bound}
For any algorithm $\mathcal{A}$, there exists a mixture of distributions $\D_1 = (\mu_1,\sigma^2)$ and $\D_2 = (\mu_2,\sigma^2)$ with $\|\mu_1 - \mu_2 \|\geq \Delta$ such that the estimator $\hat{\mu}_1$ produced by $\mathcal{A}$ has error
\begin{align}
    \E\|\hat{\mu}_1-\mu_1\|^2 
    \geq
    \Omega\bigg(\frac{\sigma^4}{\Delta^2} + \frac{\sigma^2}{n_i}\bigg).
\end{align}
\end{theorem}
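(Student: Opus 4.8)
The plan is to prove the two terms $\tfrac{\sigma^2}{n_i}$ and $\tfrac{\sigma^4}{\Delta^2}$ by two separate constructions and then combine them: the worst‑case squared error over all admissible instances is at least the maximum of the errors on the two constructions, and the maximum of two nonnegative quantities is at least their average, so it suffices to exhibit, for each term, a mixture with $\|\mu_1-\mu_2\|\ge\Delta$ and per‑cluster variance $\le\sigma^2$ on which every estimator of $\mu_1$ incurs squared error of that order. Both constructions are instances of Le Cam's two‑point method: I build two mixtures $P,Q$, bound the single‑sample Hellinger (or KL) divergence and tensorize to get $\mathrm{TV}(P^{\otimes N},Q^{\otimes N})\le\tfrac12$, and then conclude that the minimax squared error is $\gtrsim\|\mu_1^P-\mu_1^Q\|^2$. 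For the $\tfrac{\sigma^2}{n_i}$ term this is routine: place cluster $2$ extremely far away (any separation $\ge\Delta$ works, and a large one makes the clustering trivial so that the problem reduces to ordinary mean estimation from the $\sim n_i$ samples of cluster $1$), and take $\mathcal{D}_1^P=\mathcal{N}(0,\sigma^2)$ versus $\mathcal{D}_1^Q=\mathcal{N}(\rho,\sigma^2)$ with $\rho^2\asymp\sigma^2/n_i$; the informative samples carry total KL $\asymp n_i\rho^2/\sigma^2\asymp1$, so Le Cam gives error $\gtrsim\rho^2\asymp\sigma^2/n_i$.

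The $\tfrac{\sigma^4}{\Delta^2}$ term is the crux, and it is where the clustering difficulty must enter. The key point is that $\mathcal{D}_1$ is free to place a small fraction $p$ of its mass at a location $\Delta$ away that coincides \emph{in distribution} with $\mathcal{D}_2$; such ``outlier'' draws of cluster $1$ are then information‑theoretically indistinguishable from genuine cluster‑$2$ points, so no procedure can know how many of the points sitting on top of cluster $2$ actually belong to cluster $1$. Concretely, take $\mathcal{D}_1=(1-p)\,(\text{a lightly smoothed point mass at a ``bulk'' }\mu_b)+p\,\mathcal{D}_2$ with $\mathcal{D}_2=\mathcal{N}(\mu_2,\sigma^2)$ and $\|\mu_b-\mu_2\|\asymp\Delta$. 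The variance budget $\E\|z-\mu_1\|^2\le\sigma^2$ forces $p\Delta^2\lesssim\sigma^2$, i.e.\ $p\lesssim\sigma^2/\Delta^2$, while the true mean is $\mu_1=(1-p)\mu_b+p\mu_2$, displaced from the recoverable bulk location $\mu_b$ by $\|\mu_1-\mu_b\|\asymp p\Delta\asymp\sigma^2/\Delta$. I would then run Le Cam between two such instances differing only in the hidden fraction (say $p$ versus $p/2$), with $\mu_b,\mu_2$ fixed: their $N$‑sample laws differ only through a reweighting of order $p$ between the bulk and the cluster‑$2$ clump, so the single‑sample Hellinger$^2$ is $\asymp (\text{cluster }1\text{ weight})\cdot p^2$; the instances are indistinguishable whenever $n_i p^2\lesssim1$, which is compatible with $p\asymp\sigma^2/\Delta^2$ exactly in the regime where $\sigma^4/\Delta^2$ dominates, and the induced gap $\|\mu_1^P-\mu_1^Q\|\asymp\sigma^2/\Delta$ then yields error $\gtrsim\sigma^4/\Delta^2$. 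This heuristic also explains the stated $\sigma/\Delta$ slack against the $\sigma^3/\Delta$ upper bound.

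The main obstacle I anticipate is precisely this last divergence estimate: showing rigorously that a displacement of $\Theta(\sigma^2/\Delta)$ in $\mu_1$ can be ``hidden'' inside the cluster‑$2$ clump and survive detection over $N$ i.i.d.\ samples. The delicate point is that changing the hidden‑mass fraction simultaneously changes bump weights and, if one is not careful, the width of the bulk, and a naïve version of the construction only survives detection for a displacement of order $\sigma^2/(\Delta\sqrt{n_i})$ — too small by a factor $\sqrt{n_i}$. Making the full $\sigma^2/\Delta$ go through seems to require confining the perturbation to a region already ``saturated'' by cluster $2$ (so that extra mass there is genuinely invisible), and may well require splitting the analysis into regimes of $\Delta$ relative to $\sigma n_i^{1/4}$, or a more global two‑point choice; I expect most of the work to be in this verification, with the variance‑budget argument above pinning down the $\sigma^4/\Delta^2$ scaling.
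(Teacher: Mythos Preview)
Your two–term decomposition and the treatment of the $\sigma^2/n_i$ term are fine and match the paper. The gap is in the $\sigma^4/\Delta^2$ term: the obstacle you flag — that changing the hidden fraction by $\Theta(p)$ perturbs the observed mixture by $\Theta(p)$ and hence is detectable after $\Theta(1/p^2)$ samples — is real, and your proposed fixes (regime splitting, saturating the $\mu_2$ bump) do not obviously close it. In your construction you hold $\D_2$ fixed and only vary the outlier mass in $\D_1$; that is exactly why the two observed mixtures differ and why a divergence calculation is needed at all.

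The paper avoids the difficulty entirely by varying $\D_1$ and $\D_2$ \emph{symmetrically} so that the induced mixture is \emph{literally identical} under the two hypotheses (total variation $0$, hence no $N$-dependence). Concretely, take two-point distributions on $\{0,\delta\}$: let $\D_1$ put mass $(1-p,p)$ on $(0,\delta)$ and $\D_2$ put mass $(p,1-p)$, versus the alternative $\tilde\D_1=\text{point mass at }0$, $\tilde\D_2=\text{point mass at }\delta$. In both cases the $50$--$50$ mixture is uniform on $\{0,\delta\}$, so the data distributions coincide exactly regardless of sample size. The two $\mu_1$-values are $\delta p$ and $0$, so any estimator incurs squared error at least $(\delta p/2)^2$. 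Solving $\delta(1-2p)=\Delta$ and $\delta^2 p(1-p)=\sigma^2$ then yields $(\delta p/2)^2\gtrsim\sigma^4/\Delta^2$ whenever $\Delta^2\gtrsim\sigma^2$. This is your intuition that some cluster-$1$ mass can sit indistinguishably on top of cluster $2$, but executed so that the ambiguity is perfect: you transfer the hidden mass \emph{from} $\D_2$ rather than creating it, so the marginal never moves. With this change no Hellinger/KL computation is required and the $\sqrt{n_i}$ loss disappears.
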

\begin{proof}
    See \ref{proof:thresholding-lower-bound}.
\end{proof}

\subsubsection{Federated-Clustering on Examples in Section \ref{sec:failure-modes}}\label{sec:federated-clustering on failure-mode examples}
We describe how Federated-Clustering successfully handles the examples in Section \ref{sec:failure-modes}.

\paragraph{Example 1: Fig.~\ref{fig:myopic-clustering}.} 
Federated-Clustering checks at every step the gradient values of all $N$ clients at the current parameters of all $K$ clusters. This verification process avoids the type of errors made by Myopic-Clustering. For instance, at $t=1$ when Myopic-Clustering makes its error, Federated-Clustering computes the gradients of all clients at client $\{1\}$'s current parameters: $g_1(1) = \nicefrac{1}{3\eta}$, $g_2(1) = 0$, and $g_3(1) = -\nicefrac{1}{\eta}$. Therefore it correctly clusters $\{1,2\}$ together at this point, and client $\{2\}$'s parameters update beyond the saddle-point and converge to the global minimum at $x=0$. 

\paragraph{Example 2: Fig.~\ref{fig:ifca}.} By clustering clients based on gradient instead of loss value, Federated-Clustering initially computes the clients' gradients of $+1$ and $-1$ respectively at $x_{2,0} = 0$, and given the continued separation of their gradients around $0$ as the algorithm converges, correctly identifies that they belong to different clusters.

\paragraph{Example 3: Fig.~\ref{fig:cfl}.}
Recall how Clustered FL fails on this example. Based on an initial clustering error, it partitions the clients incorrectly early on and then evaluates each subset separately going forward, thus never recovering the correct clustering. Our algorithm avoids this type of mistake by considering all clients during each clustering at every step.

\subsection{Analysis of Federated-Clustering}
We now proceed with the analysis of Federated-Clustering. First, we establish necessary assumptions: intra-cluster similarity, inter-cluster separation, bounded variance of stochastic gradients, and smoothness of loss objectives.
\begin{itemize}
    \item \textbf{Assumption 4} (Intra-cluster Similarity): For all $x$, $i \sim j$, and some constant $A \geq 0$,
    \begin{align}
        \|\nabla f_i(x) - \nabla \bar{f}_i(x)\|^2 \leq 
        A^2\|\nabla \bar{f}_i(x)\|^2,
    \end{align}
    where $\bar{f}_i(x) \triangleq \frac{1}{n_i}\sum_{j \sim i} f_j(x)$.
    \item \textbf{Assumption 5} (Inter-cluster Separation): For all $x$, $i \not\sim j$, and some constant $D \geq 0$,
    \begin{align}
        \|\nabla f_i(x) - \nabla f_j(x)\|^2
        \geq
        \Delta^2 - D^2\|\nabla f_i(x)\|^2.
    \end{align}
\end{itemize}

This formulation is motivated by the information theoretic lower-bounds of \cite{even2022sample} who show that the optimal clustering strategy is to group all clients with the same optimum (even if they are non-iid). Assumptions 4 and 5 are in fact a slight strengthening of this very statement. To see this, note that for a client with loss function $f_i(x)$, belonging to cluster $\bar f_i(x)$ with a first-order stationary points ${\bar x}^\ast$, Assumption 4 implies that if $\nabla \bar f_i({\bar x}^\ast) =0 \Rightarrow \nabla f_i({\bar x}^\ast) = 0$, and so ${\bar x}^\ast$ is also a stationary point for client $i$. Thus, all clients within a cluster have shared stationary points. Assumption 4 further implies that the gradient difference elsewhere away from the optima is also bounded. This latter strengthening is motivated by the fact the the loss functions are smooth, and so the gradients cannot diverge arbitrarily as we move away from the shared optima. In fact, it is closely related to the strong growth condition (equation (1) in \cite{vaswani2019fast}), which is shown to be a very useful notion in practical deep learning. We also empirically verify its validity in Fig.~\ref{fig:verify_assumption} (left). 

Similarly, Assumption 5 is a strengthening of the condition that clients across different clusters need to have different optima. For two clients $i$ and $j$ who belong to different clusters and with first-order stationary points $x_i^\ast$ and $x_j^\ast$, Assumption 5 implies that $\|\nabla f_i(x_j^\ast)\|^2 \geq \Delta$ and $\|\nabla f_j(x_i^\ast)\|^2 \geq \Delta$. Thus, they do not share any common optimum. Similar to the Assumption 4, Assumption 5 also describes what happens elsewhere away from the optima - it allows for the difference between the gradients to be smaller than $\Delta$ as we move away from the stationary points. Again, this specific formulation is motivated by smoothness of the loss function, and empirical validation (Fig.~\ref{fig:verify_assumption}, right).

In the following lemma, we give a specific setting in which Assumptions 4 and 5 hold.

\begin{lemma}\label{lemma:assumptions 4,5 lemma}
Suppose losses $f_i$ are $L$-smooth and $\mu$-strongly convex and clients in the same cluster have the same optima. Then for all clients $i$
\begin{align}
    \|\nabla f_i(x) - \nabla\bar{f}_i(x)\|^2
    &\leq
    (\nicefrac{2L}{\mu})^2\|\nabla\bar{f}_i(x)\|^2,
\end{align}
and for clients $i \not\sim j$ in different clusters
\begin{align}
    \|\nabla f_i(x) - \nabla f_j(x)\|^2 
    &\geq
    \frac{1}{2}(\max_{j \not\sim i}\|\nabla f_j(x_i^*)\|)^2 - 2(1+(\nicefrac{L}{\mu})^2)\|\nabla f_i(x)\|^2
\end{align}
for all $x$.
\end{lemma}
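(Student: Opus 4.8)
The plan is to derive both bounds directly from the two elementary consequences of $L$-smoothness and $\mu$-strong convexity --- for any $g$ that is $L$-smooth and $\mu$-strongly convex with minimizer $y^*$, one has $\mu\|x-y^*\| \le \|\nabla g(x)\| \le L\|x-y^*\|$ --- combined with the hypothesis that all clients in a cluster share the same minimizer. No genuine optimization is needed beyond triangle inequalities and one application of $(a+b)^2 \le 2a^2+2b^2$.

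\emph{The intra-cluster bound.} Let $x_i^*$ be the common minimizer of $\{f_j : j \sim i\}$. Since $\bar f_i = \nicefrac{1}{n_i}\sum_{j \sim i} f_j$ is an average of $L$-smooth, $\mu$-strongly convex functions it is itself $L$-smooth and $\mu$-strongly convex, and since $\nabla \bar f_i(x_i^*) = \nicefrac{1}{n_i}\sum_{j \sim i}\nabla f_j(x_i^*) = 0$, the point $x_i^*$ is also the minimizer of $\bar f_i$. I would then bound $\|\nabla f_i(x)\| \le L\|x - x_i^*\|$ and $\|\nabla \bar f_i(x)\| \le L\|x-x_i^*\|$ by smoothness, $\|x-x_i^*\| \le \nicefrac{1}{\mu}\,\|\nabla \bar f_i(x)\|$ by strong convexity of $\bar f_i$, and combine via $\|\nabla f_i(x) - \nabla \bar f_i(x)\| \le \|\nabla f_i(x)\| + \|\nabla\bar f_i(x)\| \le 2L\|x-x_i^*\| \le \nicefrac{2L}{\mu}\,\|\nabla\bar f_i(x)\|$; squaring gives the first claim with $A = \nicefrac{2L}{\mu}$.

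\emph{The inter-cluster bound.} Fix $i \not\sim j$ and let $x_i^*$ be the minimizer of $f_i$, so $\nabla f_i(x_i^*) = 0$. The idea is to transfer the separation from $x_i^*$, where $\|\nabla f_j(x_i^*)\|$ is large because $j$'s cluster does not contain $x_i^*$, to an arbitrary point $x$, paying only a smoothness cost that can be re-expressed through $\|\nabla f_i(x)\|$. Concretely I would chain the reverse triangle inequality $\|\nabla f_i(x) - \nabla f_j(x)\| \ge \|\nabla f_j(x)\| - \|\nabla f_i(x)\|$, the smoothness estimate $\|\nabla f_j(x)\| \ge \|\nabla f_j(x_i^*)\| - L\|x - x_i^*\|$, and the strong-convexity estimate $\|x - x_i^*\| \le \nicefrac{1}{\mu}\,\|\nabla f_i(x)\|$ to obtain $\|\nabla f_i(x) - \nabla f_j(x)\| \ge \|\nabla f_j(x_i^*)\| - (1 + \nicefrac{L}{\mu})\|\nabla f_i(x)\|$. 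Rearranging to put $\|\nabla f_j(x_i^*)\|$ on the left, squaring via $(a+b)^2 \le 2a^2 + 2b^2$, and rearranging again yields $\|\nabla f_i(x) - \nabla f_j(x)\|^2 \ge \nicefrac{1}{2}\,\|\nabla f_j(x_i^*)\|^2 - (1+\nicefrac{L}{\mu})^2\|\nabla f_i(x)\|^2$; finally $(1+\nicefrac{L}{\mu})^2 \le 2(1 + (\nicefrac{L}{\mu})^2)$ by AM--GM. Choosing $j$ among the clients not in $i$'s cluster so as to maximize $\|\nabla f_j(x_i^*)\|$ gives the displayed form, and identifies the constants $\Delta^2 = \nicefrac{1}{2}(\max_{j\not\sim i}\|\nabla f_j(x_i^*)\|)^2$ and $D^2 = 2(1+(\nicefrac{L}{\mu})^2)$ in the role of Assumption 5.

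\emph{Main obstacle.} The first part is essentially immediate once one notices that $\bar f_i$ inherits both regularity properties and the shared minimizer, so the only care needed is in the second part: the triangle inequalities must be oriented so that every bracketed scalar stays nonnegative, and the passage from the norm inequality to the squared inequality must go through $(a+b)^2 \le 2a^2 + 2b^2$ (rather than squaring a possibly-negative difference) --- this is exactly what produces the factor $\nicefrac{1}{2}$ on the separation term and the clean $D^2$. It is also worth double-checking that no convexity is invoked beyond the smoothness/strong-convexity sandwich, so that the resulting bounds really do have the structural form required by Assumptions 4 and 5.
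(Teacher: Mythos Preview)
Your proof is correct and rests on the same core ingredients as the paper's: the sandwich $\mu\|x-x^*\|\le\|\nabla g(x)\|\le L\|x-x^*\|$ for smooth, strongly convex $g$ with minimizer $x^*$, combined with the shared-optimum hypothesis. For the intra-cluster bound the arguments are identical (the paper just packages the sandwich as a separate sub-inequality applied to $h=f_i-\bar f_i$, which it shows is $2L$-smooth). For the inter-cluster bound there is a minor tactical difference: the paper works at the vector level throughout, decomposing $\nabla f_i(x)-\nabla f_j(x)=\big[\nabla f_i(x)-(\nabla f_j(x)-\nabla f_j(x_i^*))\big]-\nabla f_j(x_i^*)$ and applying $\|a-b\|^2\ge\tfrac12\|b\|^2-\|a\|^2$ followed by Young's inequality, whereas you chain scalar reverse-triangle inequalities and square only at the end. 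Your route is slightly more elementary and in fact produces the marginally sharper coefficient $(1+L/\mu)^2$ before you relax it to $2(1+(L/\mu)^2)$ to match the stated form.
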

\begin{proof}
    See \ref{proof:assumptions 4,5 lemma}.
\end{proof}

\begin{itemize}
    \item \textbf{Assumption 6} (Bounded Variance of Stochastic Gradients): For all $x$,
    \begin{align}
        \E\|g_i(x) - \nabla f_i(x)\|^2 \leq \sigma^2,
    \end{align}
    where $\E[g_i(x)|x] = \nabla f_i(x)$ and each client $i$'s stochastic gradients $g_i(x)$ are independent.
    \item \textbf{Assumption 7} (Smoothness of Loss Functions): For any $x,y$,
    \begin{align}
        \|\nabla f_i(x) - \nabla f_i(y)\| \leq L\|x-y\|.
    \end{align}
\end{itemize}

\begin{theorem}\label{thm:federated-clustering}
    Let \emph{Assumptions [4-7]} hold with inter-cluster separation parameter $\Delta \gtrsim \nicefrac{\max(1,A^4)\max(1,D^2)\sigma}{\delta_i}$. Under these conditions, suppose we run Algorithm \ref{alg:federated-clustering} for $T$ rounds with learning rate $\eta \leq \nicefrac{1}{L}$, fraction of malicious clients $\beta_i \lesssim \delta_i$, and batch size $|B_i| \gtrsim \min(
    \sqrt{\max(1,A^2)(\nicefrac{\sigma^2}{n_i} + \nicefrac{\sigma^3}{\Delta} + \beta_i\sigma\Delta)m_i},m_i)$, where $m_i$ is the size of client $i$'s training dataset. If, in each round $t \in [T]$, we cluster with radius $\tau \approx \sqrt{\delta_i\sigma\Delta}$ for 
    \begin{align}
    l
    &\geq
    \max\bigg\{1, \max_{i \in N]}\frac{\log(\nicefrac{\sigma}{\sqrt{|B_i|}\Delta})}{\log(1-\nicefrac{\delta_i}{2})}\bigg\}
    \end{align}
    steps, then
    \begin{align}
        \frac{1}{T}\sum_{t=1}^T\E\|\nabla f_i(x_{i,t-1})\|^2
        &\lesssim
        \sqrt{\frac{\max(1,A^2)(\nicefrac{\sigma^2}{n_i} + \nicefrac{\sigma^3}{\Delta} + \beta_i\sigma\Delta)}{T}}.
        \label{eq:fed-cluster-rate}
    \end{align}
\end{theorem}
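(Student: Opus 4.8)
The plan is to run a standard descent-lemma argument on each client's personalized model $x_{i,t}$, but with the key twist that the "gradient estimate" used in the update is not $\nabla f_i(x_{i,t-1})$ but rather the output $v_{k_i,t}$ of Threshold-Clustering applied to the batch stochastic gradients $\{g_j(x_{i,t-1})\}_{j\in[N]}$. So the first step is to quantify how close $v_{k_i,t}$ is to $\nabla\bar f_i(x_{i,t-1})$ (the true cluster-mean gradient at client $i$'s current model). This is where Theorem~\ref{thm:threshold-clustering} is invoked, but it cannot be applied as a black box: Theorem~\ref{thm:threshold-clustering} assumes i.i.d.\ points within a cluster (Assumption~1), whereas here we only have Assumption~4 (intra-cluster similarity up to a multiplicative $A^2\|\nabla\bar f_i\|^2$ slack) and Assumption~5 (inter-cluster separation with a $-D^2\|\nabla f_i\|^2$ slack). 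So I would first re-derive the clustering guarantee in this relaxed setting: treat $z_j := g_j(x_{i,t-1})$, note that $\E z_j = \nabla f_j(x_{i,t-1})$, so the within-cluster "bias" $\|\nabla f_j - \nabla\bar f_i\|$ is controlled by $A\|\nabla\bar f_i\|$ and the effective variance of each $z_j$ is $\sigma^2/|B_i|$ (batching). The inter-cluster separation $\Delta$ gets degraded to something like $\Delta^2 - D^2\|\nabla f_i\|^2$, which is why the theorem needs $\Delta \gtrsim \max(1,A^4)\max(1,D^2)\sigma/\delta_i$ and why the batch-size condition $|B_i|\gtrsim \sqrt{\max(1,A^2)(\cdots)m_i}$ appears — it ensures the reduced variance $\sigma^2/|B_i|$ is small enough that the clustering-error term from Theorem~\ref{thm:threshold-clustering}, namely $\sigma^2/n_i + \sigma^3/\Delta + \beta_i\sigma\Delta$ (now with the batched variance), dominates the extra bias. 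The upshot I would aim to establish is a bound of the form
\begin{align}
    \E\|v_{k_i,t} - \nabla\bar f_i(x_{i,t-1})\|^2
    &\lesssim
    \max(1,A^2)\Big(\tfrac{\sigma^2}{n_i} + \tfrac{\sigma^3}{\Delta} + \beta_i\sigma\Delta\Big) + (\text{lower-order terms in }\|\nabla f_i(x_{i,t-1})\|^2).
\end{align}

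Next I would feed this into the $L$-smoothness descent inequality. Writing $x_{i,t} = x_{i,t-1} - \eta v_{k_i,t}$ and using Assumption~7,
\begin{align}
    \E f_i(x_{i,t})
    &\leq
    \E f_i(x_{i,t-1}) - \eta\,\E\langle \nabla f_i(x_{i,t-1}), v_{k_i,t}\rangle + \tfrac{L\eta^2}{2}\E\|v_{k_i,t}\|^2,
\end{align}
then decompose $v_{k_i,t} = \nabla f_i(x_{i,t-1}) + (\nabla\bar f_i(x_{i,t-1}) - \nabla f_i(x_{i,t-1})) + (v_{k_i,t} - \nabla\bar f_i(x_{i,t-1}))$. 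The first piece gives the desired $-\eta\|\nabla f_i\|^2$ descent; the second is controlled by Assumption~4 ($\|\nabla f_i - \nabla\bar f_i\|^2 \le A^2\|\nabla\bar f_i\|^2$, and $\|\nabla\bar f_i\|$ is in turn comparable to $\|\nabla f_i\|$ up to the same $A$); the third is the clustering error just bounded. With $\eta \le 1/L$, the $\|\nabla f_i\|^2$ coefficients can be balanced so that a constant fraction of the $-\eta\|\nabla f_i(x_{i,t-1})\|^2$ term survives on the right-hand side — this is the step where the constant in $\Delta \gtrsim \cdots$ and the constant in $\eta$ must be chosen carefully so that the slack terms $D^2\|\nabla f_i\|^2$, $A^2\|\nabla f_i\|^2$ don't swamp the descent. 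Telescoping over $t=1,\dots,T$ and dividing by $\eta T$ then yields
\begin{align}
    \frac{1}{T}\sum_{t=1}^T \E\|\nabla f_i(x_{i,t-1})\|^2
    &\lesssim
    \frac{f_i(x_{i,0}) - f_i^\star}{\eta T} + L\eta\cdot\max(1,A^2)\Big(\tfrac{\sigma^2}{n_i} + \tfrac{\sigma^3}{\Delta} + \beta_i\sigma\Delta\Big).
\end{align}

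Finally I would optimize over the learning rate $\eta \le 1/L$: choosing $\eta \approx \min\{1/L,\ \sqrt{(f_i(x_{i,0})-f_i^\star)/(LT\cdot\max(1,A^2)(\sigma^2/n_i + \sigma^3/\Delta + \beta_i\sigma\Delta))}\}$ balances the two terms and produces exactly the claimed rate
\begin{align}
    \frac{1}{T}\sum_{t=1}^T \E\|\nabla f_i(x_{i,t-1})\|^2
    &\lesssim
    \sqrt{\frac{\max(1,A^2)(\nicefrac{\sigma^2}{n_i} + \nicefrac{\sigma^3}{\Delta} + \beta_i\sigma\Delta)}{T}},
\end{align}
with the $1/(\eta T)$ term contributing the $\sqrt{1/T}$ scaling and constants absorbed into $\lesssim$.

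I expect the main obstacle to be the first part — re-proving the clustering guarantee under the relaxed, non-i.i.d.\ Assumptions~4--5 with the multiplicative-slack terms, and in particular tracking how those $\|\nabla f_i\|^2$-proportional error terms propagate through the per-step analysis without breaking the telescoping. A secondary subtlety is that the clustering error at round $t$ is itself a random quantity depending on the batch draws and on $x_{i,t-1}$, which is correlated with the history; handling this requires conditioning on the filtration up to round $t-1$ and using that the batch gradients at round $t$ are fresh, plus being careful that Threshold-Clustering's guarantee is in expectation over those fresh draws. The descent and learning-rate-tuning steps are routine by comparison.
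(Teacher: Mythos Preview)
Your high-level plan (re-prove the clustering guarantee under Assumptions~4--5, feed it into the smoothness descent lemma, telescope) matches the paper's, but two steps in your execution would fail.

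First, and most importantly, your telescoped bound
\[
\frac{1}{T}\sum_{t=1}^T \E\|\nabla f_i(x_{i,t-1})\|^2
\;\lesssim\;
\frac{f_i(x_{i,0})-f_i^\star}{\eta T} \;+\; L\eta\cdot\max(1,A^2)\Big(\tfrac{\sigma^2}{n_i}+\tfrac{\sigma^3}{\Delta}+\beta_i\sigma\Delta\Big)
\]
is not achievable: the clustering output $v_{k_i,t}$ is a \emph{biased} estimator of $\nabla\bar f_i(x_{i,t-1})$ (the bias comes from the $\sigma^3/\Delta$ and $\beta_i\sigma\Delta$ terms in Theorem~\ref{thm:threshold-clustering}), so the cross term $-\eta\langle\nabla f_i, v_{k_i,t}-\nabla f_i\rangle$ does not vanish in conditional expectation. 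The clustering error therefore enters the descent inequality with coefficient $\eta$, not $L\eta^2$, and after dividing by $\eta T$ the noise term is $\eta$-\emph{independent}. Tuning $\eta$ cannot balance it against $1/(\eta T)$; you would get $O(1/T)+\text{const}$, not $O(1/\sqrt{T})$. The paper instead fixes $\eta=1/L$ and reduces the noise via the batch size: with $\rho^2=\sigma^2/|B_i|$ the clustering error scales as $\max(1,A^2)(\sigma^2/n_i+\sigma^3/\Delta+\beta_i\sigma\Delta)/\sqrt{|B_i|}$, and the choice $|B_i|\approx\sqrt{\max(1,A^2)(\cdots)m_i}$ together with $T=m_i/|B_i|$ is precisely what makes this term $O(\sqrt{\max(1,A^2)(\cdots)/T})$. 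So the batch-size condition is not (as you suggest) merely to make the clustering accurate---it is the mechanism that produces the $1/\sqrt{T}$ rate.

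Second, your plan to control $\|\nabla\bar f_i\|^2$ by saying it ``is in turn comparable to $\|\nabla f_i\|$ up to the same $A$'' does not follow from Assumption~4 when $A\ge 1$: the triangle inequality only gives $(1-A)\|\nabla\bar f_i\|\le\|\nabla f_i\|$, which is vacuous for $A\ge1$. The paper handles this by running a \emph{second} descent argument, this time on the cluster-average function $\bar f_i$ along the same iterates $x_{i,t}$, which yields $\overline{\E\|\nabla\bar f_i(x_{i,t-1})\|^2}\lesssim (\bar f_i(x_{i,0})-\bar f_i^\star)/(\eta T)+(\text{noise})$ directly; combining the two descent inequalities is what produces the $\max(1,A^2)$ factor in the final rate.
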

\begin{proof}
    See \ref{proof:federated-clustering}.
\end{proof}
We note a few things. 1) The rate in \eqref{eq:fed-cluster-rate} is the optimal rate in $T$ for stochastic gradient descent on non-convex functions \citep{arjevani2019}. 2) The dependence on $\sqrt{\sigma^2/n_i}$ is intuitive, since convergence error should increase as the variance of points in the cluster increases and decrease as the number of points in the cluster increases. It is also optimal as shown in \cite{even2022sample}. 3) The dependence on $\sqrt{\beta_i\sigma\Delta}$ is also expected. We choose a radius $\tau \approx \sqrt{\sigma\Delta}$ for clustering. Given our threat model, the most adversarial behavior of the bad clients from client $i$'s perspective is to place themselves at the edge of the ball surrounding the estimated location of $i$'s gradient, thus adding error of order $\sqrt{\beta_i\sigma\Delta}$. When there are no malicious clients, this extra error vanishes. 4) If the constraint on batch-size in Theorem \ref{thm:federated-clustering} requires $|B_i|=m_i$, then the variance of stochastic gradients vanishes and the standard $\mathcal{O}(\nicefrac{1}{T})$ rate for deterministic gradient descent is recovered (see equation \eqref{eq:full-batch-gd} in proof). We also note that as long as there are no malicious clients (i.e. $\beta_i\sigma\Delta$ term is $0$), there are a large enough number of clients $n_i$ in the cluster, and inter-cluster separation $\Delta$ is sufficiently larger than the inter-cluster-variance $\sigma^2$, then minimum-batch size will likely be less than $m_i$.

If losses are smooth and strongly convex, our proof techniques for Theorem \ref{thm:federated-clustering} get an $\mathcal{O}(\nicefrac{1}{T})$ convergence rate with the specific constants $A$, $D$, and $\Delta$ stated in Lemma \ref{lemma:assumptions 4,5 lemma}.

\paragraph{Privacy.} Since Federated-Clustering requires clients to compute distances between gradients, they must share their models and gradients which compromises privacy. The focus of our work is not on optimizing privacy, so we accommodate only the lightest layer of privacy for federated learning: sharing of models and gradients rather than raw data. Applying more robust privacy techniques is a direction for future work. In the meantime, we refer the reader to the extensive literature on differential privacy, multi-party computation, and homomorphic encryption in federated learning.

\paragraph{Communication Overhead.} 
At each step, Federated-Clustering requires $\mathcal{O}(N^2)$ rounds of communication since each client sends its model to every other client, evaluates its own gradient at every other client's model and then sends this gradient back to the client who owns the model. We pay this communication price to mitigate the effect of past clustering mistakes. For example, say at one round a client mis-clusters itself and updates its model incorrectly. At the next step, due to communication with all other clients, it can check the gradients of all other clients at its current model, have a chance to cluster correctly at this step, and update its model towards the optimum, regardless of the previous clustering error.  Recall on the other hand that an algorithm like Myopic-Clustering (Alg. \ref{alg:myopic-clustering}), while communication efficient ($N$ rounds per step), may not recover from past clustering mistakes since it doesn't check gradients rigorously in the same way. In the next section, we propose a more communication-efficient algorithm, Momentum-Clustering (Algorithm \ref{alg:momentum-clustering}), which clusters momentums instead of gradients (reducing variance and thus clustering error) and requires only $\mathcal{O}(N)$ communication rounds per step.

\begin{algorithm}[!t]
\caption{Federated-Clustering}\label{alg:federated-clustering}
\textbf{Input} Learning rate: $\eta$. Initial parameters for each client: $\{x_{1,0},...,x_{N,0}\}$. Batch-size $|B_i|$ (see Theorem \ref{thm:federated-clustering} for a lower bound on this quantity)\footnotemark
\begin{algorithmic}[1]
\For {client $i \in [N]$}
\State Send $x_{i,0}$ to all clients $j \neq i$.
\EndFor
\For {round $t \in [T]$}
\For {client $i$ in [N]}
\State Compute $g_i(x_{j,t-1})$ with batch-size $|B_i|$\footnotemark and send to client $j$ for all $j \neq i \in [N]$.
\State Compute $v_{i,t} \leftarrow \texttt{Threshold-Clustering}(\{g_j(x_{i,t-1})\}_{j \in [N]};1 \text{ cluster}; g_i(x_{i,t-1}))$.
\State Update parameter: $x_{i,t} = x_{i,t-1} - \eta v_{i,t}$.
\State Send $x_{i,t}$ to all clients $j \neq i$.
\EndFor
\EndFor
\State{\bfseries Output:} Personalized parameters: $\{x_{1,T},...,x_{N,T}\}$.
\end{algorithmic}
\end{algorithm}
\footnotetext[1]{The batch-size constraint reduces variance of the stochastic gradients (Lemma \ref{lemma:variance-reduction-batches}). In Section \ref{sec:momentum-clustering} we propose another algorithm Momentum-Clustering for which there is no batch-size restriction and which reduces variance by clustering momentums instead of gradients.}
\footnotetext[2]{$g_i(x_{j,t-1}) = \frac{1}{|B_i|}\sum_{b}g_i(x_{j,t-1};b)$, where $g_i(x_{j,t-1};b)$ is the gradient computed using sample $b$ in the batch.}

\begin{algorithm}[!t]
\caption{Threshold-Clustering}\label{alg:threshold-clustering}
\textbf{Input} Points to be clustered: $\{z_1,...,z_N\}$. Number of clusters: $K$. Cluster-center initializations: $\{v_{1,0},...,v_{K,0}\}$. 
\begin{algorithmic}[1]
\For {round $l \in [M]$}
\For {cluster $k$ in [K]}
\State Set radius $\tau_{k,l}$.
\State Update cluster-center estimate:
\begin{align}
    v_{k,l} = \frac{1}{N} \sum_{i=1}^N \bigg(z_i\mathbbm{1}(\|z_i-v_{k,l-1}\| \leq \tau_{k,l})
    + v_{k,l-1}\mathbbm{1}(\|z_i-v_{k,l-1}\| > \tau_{k,l})\bigg).
    \label{eq:t-c-update-rule}
\end{align}
\EndFor
\EndFor
\State{\bfseries Output:} Cluster-center estimates $\{v_1 = v_{1,M},...,v_K = v_{K,M}\}$.
\end{algorithmic}
\end{algorithm}

\subsection{Improving Communication Overhead with Momentum}\label{sec:momentum-clustering}
Federated-Clustering is inefficient, requiring $N^2$ rounds of communication between clients at each step (each client computes their gradient at every other client's parameter).  Since momentums change much more slowly from round-to-round than gradients, a past clustering mistake will not have as much of a harmful impact on future correct clustering and convergence as when clustering gradients. 

In Algorithm \ref{alg:momentum-clustering}, at each step each client computes their momentum and sends it to the server. The server clusters the $N$ momentums, computes an update per-cluster, and sends the update to the clients in each cluster. Therefore, communication is limited to $N$ rounds per step. 
\subsubsection{Analysis of Momentum-Clustering}
The analysis of the momentum based method requires adapting the intra-cluster similarity and inter-cluster separation assumptions from before.
\begin{itemize}
    \item \textbf{Assumption 8} (Intra-cluster Similarity): For all $i \sim j$ and $t\ \in [T]$,
    \begin{align}
        m_{i,t} \overset{i.i.d.}{\sim} m_{j,t},
    \end{align}
    where $m_{i,t}$ is defined as in \eqref{eq:momentum-definition}.
    \item \textbf{Assumption 9} (Inter-cluster Separation): For all $i \not\sim j$ and $t \in [T]$,
    \begin{align}
        \|\E m_{i,t} - \E m_{j,t}\|^2 
        &\geq
        \Delta^2.
    \end{align}
\end{itemize}
Note that the intra-cluster similarity assumption in this momentum setting is stronger than in the gradient setting (Assumption 4): namely we require that the momentum of clients in the same cluster be i.i.d. at all points. This stronger assumption is the price we pay for a simpler and more practical algorithm. Finally, due to the fact that momentums are low-variance counterparts of gradients (Lemma \ref{lemma:momentum-variance-reduction}), we can eliminate constraints on the batch size and still achieve the same rate.

\begin{theorem}\label{thm:momentum-clustering}
    Let \emph{Assumptions [6-9]} hold with inter-cluster separation parameter $\Delta \gtrsim \nicefrac{\sigma}{\delta_i}$. Under these conditions, suppose we run Algorithm \ref{alg:momentum-clustering} for $T$ rounds with learning rate $\eta \lesssim \min\bigg\{\frac{1}{L}, \sqrt{\frac{\E(f_i(x_{i,0}) - f_i^*)}{LT(\nicefrac{\sigma^2}{n_i} + \nicefrac{\sigma^3}{\Delta})}}\bigg\}$ ($f_i^*$ is the global minimum of $f_i$), fraction of bad clients $\beta_i \lesssim \delta_i$, and momentum parameter $\alpha \gtrsim L\eta$. If, in each round $t \in [T]$, we cluster with radius $\tau \approx \sqrt{\delta_i\sigma\Delta}$ for 
    \begin{align}
        l
        &\geq
        \max\bigg\{1, \max_{i \in N]}\frac{\log(\nicefrac{\sqrt{\alpha}\sigma}{\Delta})}{\log(1-\nicefrac{\delta_i}{2})}\bigg\}
    \end{align}
    steps, then for all $i \in [N]$
    \begin{align}
        \frac{1}{T}\sum_{t=1}^T\E\|\nabla f_i(x_{i,t-1})\|^2
        &\lesssim
        \sqrt{\frac{\nicefrac{\sigma^2}{n_i} + \nicefrac{\sigma^3}{\Delta}}{T}} + \frac{\beta_i\sigma\Delta}{T^{\frac{1}{4}}(\nicefrac{\sigma^2}{n_i} + \nicefrac{\sigma^3}{\Delta})^{\frac{1}{4}}}.\label{eq:momentum-clustering-rate}
    \end{align}
\end{theorem}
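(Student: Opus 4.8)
The plan is to view Momentum-Clustering as momentum-SGD run in parallel on each cluster: the clients of a cluster hold a common iterate $x_{i,t}$, and the vector $v_{i,t}$ returned by \texttt{Threshold-Clustering} on the $N$ momentums $\{m_{j,t}\}_j$ serves as a noisy, biased stand-in for $\nabla f_i(x_{i,t-1})$ (Assumption 8 entails that the within-cluster clients share a gradient field along the iterate trajectory, so the target is simply $\nabla f_i$, equivalently $\nabla\bar{f}_i$). I would control the error $e_{i,t}:=v_{i,t}-\nabla f_i(x_{i,t-1})$ via $\E\|e_{i,t}\|^2\lesssim\E\|v_{i,t}-\bar{m}_{i,t}\|^2+\E\|\bar{m}_{i,t}-\nabla f_i(x_{i,t-1})\|^2$ with $\bar{m}_{i,t}:=\frac{1}{n_i}\sum_{j\sim i}m_{j,t}$. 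For the first (clustering) term: Assumption 8 makes $\{m_{j,t}\}_{j\sim i}$ an i.i.d.\ sample, Assumption 9 gives separation $\Delta$, and Lemma \ref{lemma:momentum-variance-reduction} certifies a reduced variance $\tilde\sigma^2\lesssim\alpha\sigma^2$, so invoking Theorem \ref{thm:threshold-clustering} with $\sigma$ replaced by $\sqrt{\alpha}\,\sigma$ — the stated number of inner steps $l$ is precisely what forces \texttt{Threshold-Clustering}'s own iteration error below the induced statistical floor — yields $\E\|v_{i,t}-\bar{m}_{i,t}\|^2\lesssim\frac{\alpha\sigma^2}{n_i}+\frac{(\alpha\sigma^2)^{3/2}}{\Delta}+\beta_i\sqrt{\alpha}\,\sigma\Delta\lesssim\alpha S+\beta_i\sqrt{\alpha}\,\sigma\Delta$, where $S:=\frac{\sigma^2}{n_i}+\frac{\sigma^3}{\Delta}$ and I used $\alpha\le1$.

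For the descent, $L$-smoothness applied to $x_{i,t}=x_{i,t-1}-\eta v_{i,t}$, with $\eta\lesssim1/L$ and Young's inequality (no unbiasedness of $v_{i,t}$ is needed, only a bound on $\E\|e_{i,t}\|^2$), gives a one-step inequality of the form $f_i(x_{i,t})\le f_i(x_{i,t-1})-\tfrac{\eta}{4}\|\nabla f_i(x_{i,t-1})\|^2+\eta\|e_{i,t}\|^2$, which telescopes to
\begin{align}
G_i:=\frac{1}{T}\sum_{t=1}^T\E\|\nabla f_i(x_{i,t-1})\|^2\lesssim\frac{\E[f_i(x_{i,0})-f_i^*]}{\eta T}+\frac{1}{T}\sum_{t=1}^T\E\|e_{i,t}\|^2.
\end{align}
The momentum-tracking term $\mathcal{E}_{i,t}:=\E\|\bar{m}_{i,t}-\nabla f_i(x_{i,t-1})\|^2$ then obeys the usual contraction-with-drift recursion $\mathcal{E}_{i,t}\lesssim(1-\alpha)\mathcal{E}_{i,t-1}+\tfrac{L^2\eta^2}{\alpha}\E\|v_{i,t-1}\|^2+\tfrac{\alpha^2\sigma^2}{n_i}$, where the $(1-\alpha)$ contraction and the $L\eta\|x_{i,t-1}-x_{i,t-2}\|$ drift come from expanding the momentum recursion and using smoothness, and the conditionally-zero-mean fresh gradient noise (at scale $\sigma^2/n_i$, since it is averaged over the cluster) contributes the last term; summing over $t$ and bounding the initialization term via $\|\nabla f_i(x_{i,0})\|^2\lesssim L\,\E[f_i(x_{i,0})-f_i^*]$ gives $\tfrac1T\sum_t\mathcal{E}_{i,t}\lesssim\tfrac{\E[f_i(x_{i,0})-f_i^*]}{\eta T}+\alpha S+\tfrac{L^2\eta^2}{\alpha^2}\cdot\tfrac1T\sum_t\E\|v_{i,t}\|^2$.

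Combining these with $\|v_{i,t}\|^2\lesssim\|\nabla f_i(x_{i,t-1})\|^2+\|e_{i,t}\|^2$, and taking $\alpha\gtrsim L\eta$ with a large enough constant so that $L^2\eta^2/\alpha^2$ is a small constant, the $\tfrac1T\sum_t\E\|v_{i,t}\|^2$ and $\tfrac1T\sum_t\mathcal{E}_{i,t}$ contributions absorb back into $G_i$, leaving $G_i\lesssim\tfrac{\E[f_i(x_{i,0})-f_i^*]}{\eta T}+\alpha S+\beta_i\sqrt{\alpha}\,\sigma\Delta$. Setting $\alpha\approx L\eta$ and then the stated $\eta\approx\sqrt{\E[f_i(x_{i,0})-f_i^*]/(LTS)}$ balances the first two terms to $\sqrt{S/T}$ (up to the constants $L$ and $\E[f_i(x_{i,0})-f_i^*]$ hidden by $\lesssim$) and converts $\beta_i\sqrt{\alpha}\,\sigma\Delta$ into $\beta_i\sigma\Delta/(T^{1/4}S^{1/4})$, which is precisely \eqref{eq:momentum-clustering-rate}.

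I expect the main obstacle to be the coupled-recursion bookkeeping: $\mathcal{E}_{i,t}$ feeds on $\|v_{i,t}\|^2$, which contains $\|e_{i,t}\|^2$ and hence $\mathcal{E}_{i,t}$ itself and the clustering error, so closing the loop requires a careful absorbing argument with precisely tuned constants — this is the role of $\alpha\gtrsim L\eta$, and it is what complicates the otherwise-standard momentum-SGD argument of \cite{karimireddy2021}. A secondary subtlety is invoking Theorem \ref{thm:threshold-clustering}, which is stated for a \emph{fixed} i.i.d.\ sample, separately at each round on the random, evolving momentums: this leans on the \emph{unconditional} i.i.d.\ structure of Assumption 8, needs care about what is held fixed when the clustering bound (and its $\beta_i\sqrt{\alpha}\,\sigma\Delta$ Byzantine contribution) is applied and then propagated through the descent, and requires checking that the stated $l$ and the radius $\tau\approx\sqrt{\delta_i\sigma\Delta}$ actually sit in \texttt{Threshold-Clustering}'s valid regime once the operative variance is $\alpha\sigma^2$ rather than $\sigma^2$ (which holds because $\Delta\gtrsim\sigma/\delta_i$ and $\alpha\le1$).
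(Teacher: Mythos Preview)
Your proposal is correct and takes essentially the same approach as the paper: descent via $L$-smoothness, split $v_{i,t}-\nabla f_i$ into a clustering error (bounded by the Threshold-Clustering guarantee with reduced variance $\alpha\sigma^2$) plus a momentum-tracking error (handled by the standard $(1-\alpha)$-contraction-with-drift recursion), then use $\alpha\gtrsim L\eta$ to absorb the $\|v_{i,t}\|^2$ feedback before balancing $\eta$. The only cosmetic differences are that the paper decomposes through $\E_x\bar m_{i,t}$ rather than your $\bar m_{i,t}$ (so its tracking recursion has no fresh-noise term), re-proves the clustering bound for momentums in a dedicated lemma rather than black-boxing Theorem~\ref{thm:threshold-clustering}, and cancels the tracking-induced $\|v_{i,t}\|^2$ directly against the retained $-\tfrac{\eta}{2}(1-L\eta)\|v_{i,t}\|^2$ term from the descent inequality rather than via your split $\|v_{i,t}\|^2\lesssim\|\nabla f_i\|^2+\|e_{i,t}\|^2$.
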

\begin{proof}
    See \ref{proof:momentum-clustering}.
\end{proof}
We see from \eqref{eq:momentum-clustering-rate} that when there are no malicious clients ($\beta_i = 0$), Momentum-Clustering achieves the same $\sqrt{\nicefrac{\sigma^2}{n_iT}}$ convergence rate observed in \eqref{eq:fed-cluster-rate}, with no restrictions on the batch size.
\begin{algorithm}[!t]
\caption{Momentum-Clustering}\label{alg:momentum-clustering}
\textbf{Input} Learning rate: $\eta$. Initial parameters: $\{x_{1,0}=...=x_{N,0}\}$.
\begin{algorithmic}[1]
\For {round $t \in [T]$}
\For {client $i$ in [N]}
\State Client $i$ sends 
\begin{align}
    m_{i,t} = \alpha g_i(x_{i,t-1}) + (1-\alpha)m_{i,t-1}\label{eq:momentum-definition}
\end{align}
to server.
\State Server generates cluster centers
\begin{align}
    \{v_{k,t}\}_{k \in [K]} 
    \leftarrow 
    \texttt{Threshold-Clustering}(\{m_{i,t}\}; K \text{ clusters}; \{v_{k,t-1}\}_{k \in [K]})
\end{align}
and sends $v_{k_i,t}$ to client $i$, where $k_i$ denotes the cluster to which $i$ is assigned in this step.
\State Client $i$ computes update: $x_{i,t} = x_{i,t-1} - \eta v_{k_i,t}$.
\EndFor
\EndFor
\State{\bfseries Output:} Personalized parameters: $\{x_{1,T},...,x_{N,T}\}$.
\end{algorithmic}
\end{algorithm}

\section{Experiments}\label{sec:experiments}
In this section, we first use a synthetic dataset to verify the assumptions and rates claimed in our theoretical analysis in the previous section; and second, we use the MNIST dataset \citep{lecun2010mnist} and CIFAR dataset \citep{krizhevsky2009learning} to compare our proposed algorithm, Federated-Clustering, with existing state-of-the-art federated learning algorithms. All algorithms are implemented with PyTorch \citep{paszke2017automatic}, and code for all experiments is at this \href{https://github.com/liehe/PRFL}{github repo}.

\subsection{Synthetic dataset}\label{sec:synthetic experiment}
\begin{figure}[!t]
\centering
  \centering
  \includegraphics[width=0.6\linewidth]{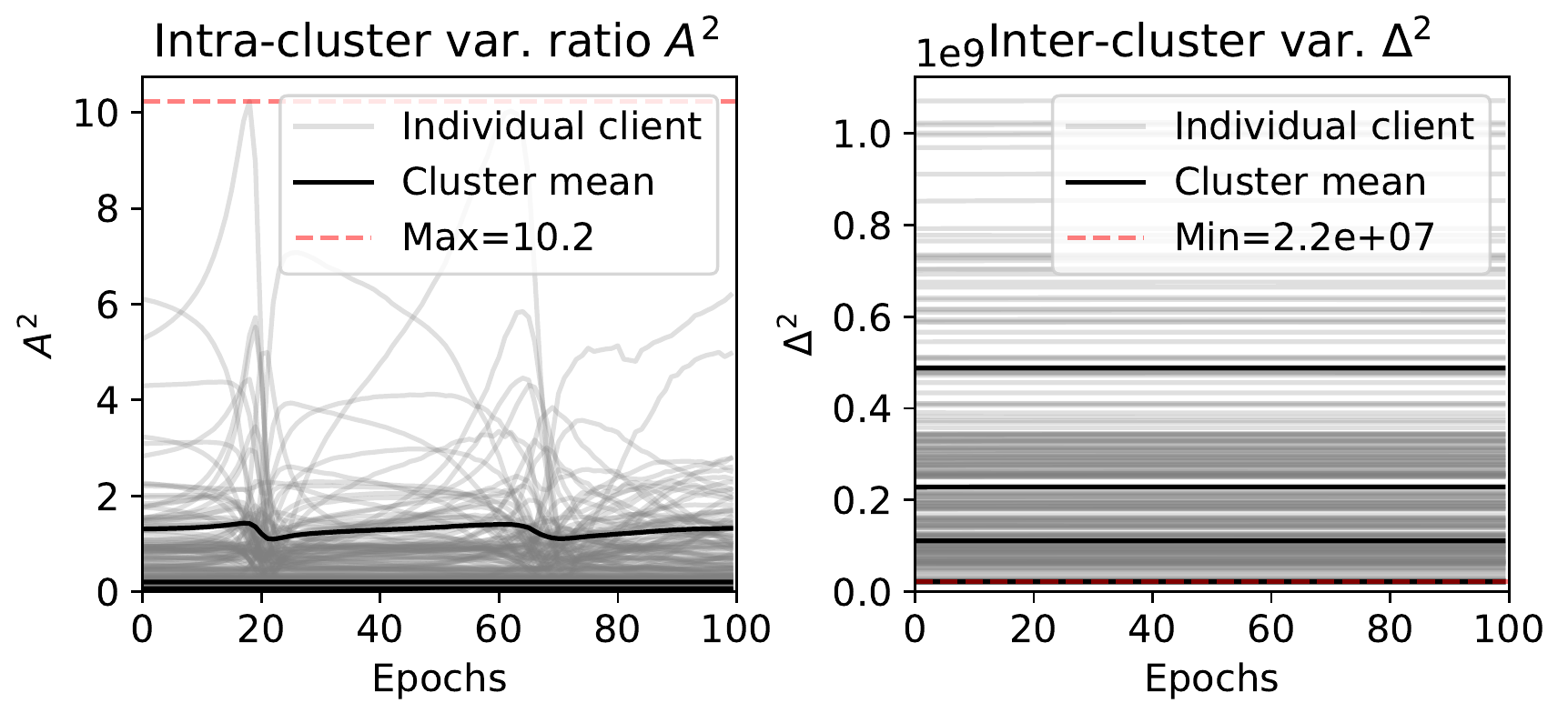}
  \caption{Here, we show empirically on a synthetic dataset that the intra-cluster variance ratio \eqref{eq:exp-intra-cluster-variance} is upper-bounded by a constant (left subplot) and the inter-cluster variance \eqref{eq:exp-inter-cluster-variance} that is lower-bounded by a constant (right subplot).}
  \label{fig:verify_assumption}
\end{figure}

\paragraph{Construction of synthetic dataset.} We consider a synthetic linear regression task with squared loss for which we construct $K=4$ clusters, each with $n_i=75$ clients. Clients in cluster $k\in[K]$ share the same minimizer $x_k^\star\in \mathbb{R}^d$.
For each client $i$ in cluster $k$, we generate a sample matrix $A_i\in\mathbb{R}^{d\times n}$ from $\mathcal{N}(k, \mathbf{1}_{d\times n})$ and compute the associated target as $y_i=A_i^\top x_k^\star \in\mathbb{R}^n$.  We choose the model dimension $d=10$ to be greater than the number of local samples $n=9$ such that the local linear system $y_i=A_i^\top x$ is overdetermined and the error $\lVert x^\star - x\rVert_2^2$ is large. A desired federated clustering algorithm determines the minimizer by incorporating information from other clients $j\sim i$ in the same cluster.

\paragraph{Estimating constants in Assumptions 4 and 5.} In Fig.~\ref{fig:verify_assumption}, using the above synthetic dataset
\begin{itemize}
    \item we estimate the intra-cluster variance ratio $A^2$ by finding the upper bound of \eqref{eq:exp-intra-cluster-variance}
    \begin{align}\label{eq:exp-intra-cluster-variance}
        \frac{\lVert\nabla f_i(x) - \nabla \bar{f}_i(x)\rVert_2^2}{\lVert\nabla \bar{f}_i(x)\rVert_2^2};
    \end{align}
    \item we estimate the inter-cluster variance $\Delta^2$ by setting $D=0$ and computing the lower bound of \eqref{eq:exp-inter-cluster-variance}
    \begin{align}\label{eq:exp-inter-cluster-variance}
        \lVert\nabla f_i(x) - \nabla f_j(x)\rVert_2^2.
    \end{align}
\end{itemize}

We run Federated-Clustering with perfect clustering assignments and estimate these bounds over time. The result is shown in Fig.~\ref{fig:verify_assumption} 
where grey lines are the quantities in \eqref{eq:exp-intra-cluster-variance} and \eqref{eq:exp-inter-cluster-variance} for individual clients, black lines are those quantities averaged within clusters, and red dashed lines are empirical bounds on the quantities. The left figure demonstrates that the intra-cluster variance ratio does not grow with time and can therefore reasonably be upper bounded by a constant $A^2$. Similarly, the right figure shows that the inter-cluster variance can be reasonably lower bounded by a positive constant $\Delta^2$. These figures empirically demonstrate that Assumptions 4 and 5 are realistic in practice.

\begin{figure}[!t]
\centering
  \centering
  \includegraphics[width=0.8\linewidth]{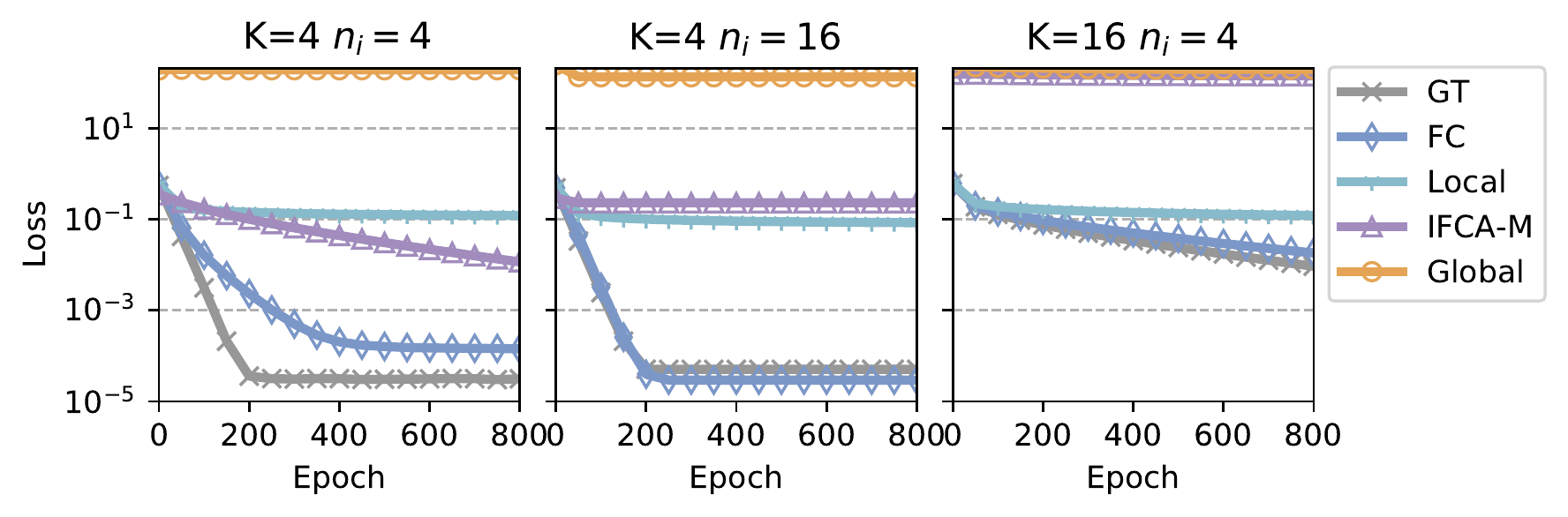}
  \caption{The performance of our algorithm vs. baselines on a synthetic dataset. When $n_i$ is small, the ground-truth outperforms our algorithm, but this difference vanishes with increasing $n_i$. This behavior is consistent with the dependence of the convergence rate on $n_i$ in Theorem \ref{thm:federated-clustering}: increasing $n_i$ improves convergence.}
  \label{fig:synthetic}
\end{figure}

\paragraph{Performance.} In Fig.~\ref{fig:synthetic}, we compare the performance of our algorithm Federated-Clustering (FC) with several baselines: standalone training (Local), IFCA \citep{ghosh2020efficient}, FedAvg (Global) \citep{mcmahan2017communication}, and distributed training with ground truth (GT) cluster information. We consider the synthetic dataset from before, starting with cluster parameters $(K,n_i)=(4,4)$ and observe performance when increasing parameters to $n_i=16$ and $K=16$ separately. In each step of optimization, we run Threshold-Clustering for $l=10$ rounds so that heuristically the outputs are close enough to cluster centers, cf. Fig.~\ref{fig:clipping_l}.
We tune the learning rate separately for each algorithm through grid search, but preserve all other algorithmic setups. Our algorithm outperforms the non-oracle baselines in all cases. While Federated-Clustering is slightly worse than ground truth when $(K,n_i)=(4,4)$, their performances are almost identical in the middle subplot for $n_i=16$. This observation is consistent with the $n_i$-scaling observed in \eqref{eq:fed-cluster-rate}: as the number of clients-per-cluster increases, convergence improves.

\subsection{MNIST experiment}\label{sec:mnist experiment}

\begin{table}[t]
\centering
\caption{Comparison of test losses and accuracies for federated personalization algorithms on MNIST. FC outperforms all non-oracle baselines on two learning tasks.}
\label{tab:rotation-relabel}
{
\begin{tabular}{ccccc}
\toprule
       &   \multicolumn{2}{c}{Rotation} &  \multicolumn{2}{c}{Private label}  \\\cmidrule{2-5} 
       &  Acc.(\%) & Loss  & Acc.(\%) & Loss \\ \midrule
Local  &  71.3 & 0.517  & 75.2  &  0.489 \\
Global & 46.6 & 0.631  & 22.2 & 0.803 \\
 Ditto & 62.0 & 0.576 &61.7& 0.578 \\
IFCA & 54.6  &  0.588 & 65.4  & 0.531 \\
KNN & 52.1 & 2.395 & 63.2& 1.411 \\
 FC (ours) & \textbf{75.4} & \textbf{0.475} & \textbf{77.0} & \textbf{0.468} \\
 \midrule
GT (oracle) &   84.7 &  0.432 &  85.1 & 0.430 \\
 \bottomrule
\end{tabular}
}
\end{table}

In this section, we compare Federated-Clustering to existing federated learning baselines on the MNIST dataset. The dataset is constructed as follows, similar to \cite{ghosh2020efficient}. The data samples are randomly shuffled and split into $K=4$ clusters with $n_i=75$ clients in each cluster. We consider two different tasks: 1) the \textit{rotation} task transforms images in cluster $k$ by $k*90$ degrees; 2) the \textit{private label} task transforms labels in cluster $k$ with $T_k(y):y\mapsto (y+k\mod 10)$, such that the same image may have different labels from cluster-to-cluster. 

\textbf{Algorithm hyperparameters.} For these two experimental tasks, in addition to the baselines from our synthetic experiment, we include the KNN-personalization \citep{marfoq2021mixture} and Ditto \citep{smith2021ditto} algorithms which both interpolate between a local and global model. The KNN-personalization is a linear combination of a global model, trained with FedAvg \citep{mcmahan2017communication}, and a local model which is the aggregation of nearest-neighbor predictions in the client's local dataset to the global model's prediction. We set the coefficients of this linear combination to be $\lambda_\text{knn}=0.5$ and $\lambda_\text{knn}=0.9$ for the \emph{rotation} and \emph{private label} tasks, respectively. The Ditto objective is a personalized loss with an added regularization term that encourages closeness between the personalized and global models. Since tuning this regularization parameter $\lambda_\text{ditto}$ leads to a degenerated "Local" training where $\lambda_\text{ditto}=0$, we fix $\lambda_\text{ditto}=1$ for both \emph{rotation} and \emph{private label} tasks. To reduce the computation cost of our algorithm, in each iteration we randomly divide the $N$ clients into $16$ subgroups and apply Federated-Clustering to each subgroup simultaneously. The clipping radius $\tau_{k,l}$ for each cluster $k$ is adaptively chosen to be the $20$th-percentile of distances to the cluster-center.

\textbf{Performance.} The experimental results are listed in Table \ref{tab:rotation-relabel}. Since an image can have different labels across clusters in the \emph{private label} task, a model trained over the pool of all datasets only admits inferior performance. Therefore, distributed training algorithms that maintain a global model, such as FedAvg, Ditto, and KNN, perform poorly compared to training alone. On the other hand, our algorithm Federated-Clustering outperforms standalone training and all personalization baselines. This experiment suggests that our algorithm successfully explored the cluster structure and benefited from collaborative training.

\subsection{CIFAR experiment}\label{sec:cifar_experiment}
In this section, we evaluate the efficacy of various clustering algorithms on the CIFAR-10 and CIFAR-100 datasets \citep{krizhevsky2009learning}.
\subsubsection{CIFAR-10}\label{sec:cifar10-experiment}
 For the CIFAR-10 experiment, we create 4 clusters, each containing 5 clients and transform the labels in each cluster such that different clusters can have different labels for the same image (the \emph{private label} task in Section \ref{sec:mnist experiment}).
We train a VGG-16 model~\citep{simonyan2015very} with batch size 32, learning rate 0.1, and momentum 0.9. The outcomes are presented in Fig.~\ref{fig:cifar10}. 
The left subplot illustrates that collaborative clustering algorithms designed for global model training (e.g., Ditto, IFCA, Global) yield suboptimal models, as not all participating clients benefit from each other. On the other hand, Local and GroundTruth training are not influenced by the conflicting labels from other clusters so they significantly outperform Ditto and IFCA. Our Federated-Clustering (FC) algorithm also excludes such adversarial influence and, more importantly, outperforms Local training, showing that FC benefits from collaboration. 

In the middle subplot, we examine the impact on accuracy of varying the thresholding radius $\tau$ (i.e. $\tau$ is set as the percentile of gradient distances from the cluster-center, so smaller percentile corresponds to smaller $\tau$). Our findings indicate that adopting a more conservative value for $\tau$ (lower percentile) does not substantially compromise accuracy. 

The right subplot demonstrates the behavior of Federated-Clustering when the clustering oracle is invoked intermittently. The results suggest that increasing the number of local iterations boosts the learning curve early  in training when gradients from different clusters are close together and clusters are ill-defined. However, this improvement plateaus when gradients become separated and clusters become well-defined.

\begin{figure}[!t]
\centering
  \centering
  \includegraphics[width=0.8\linewidth]{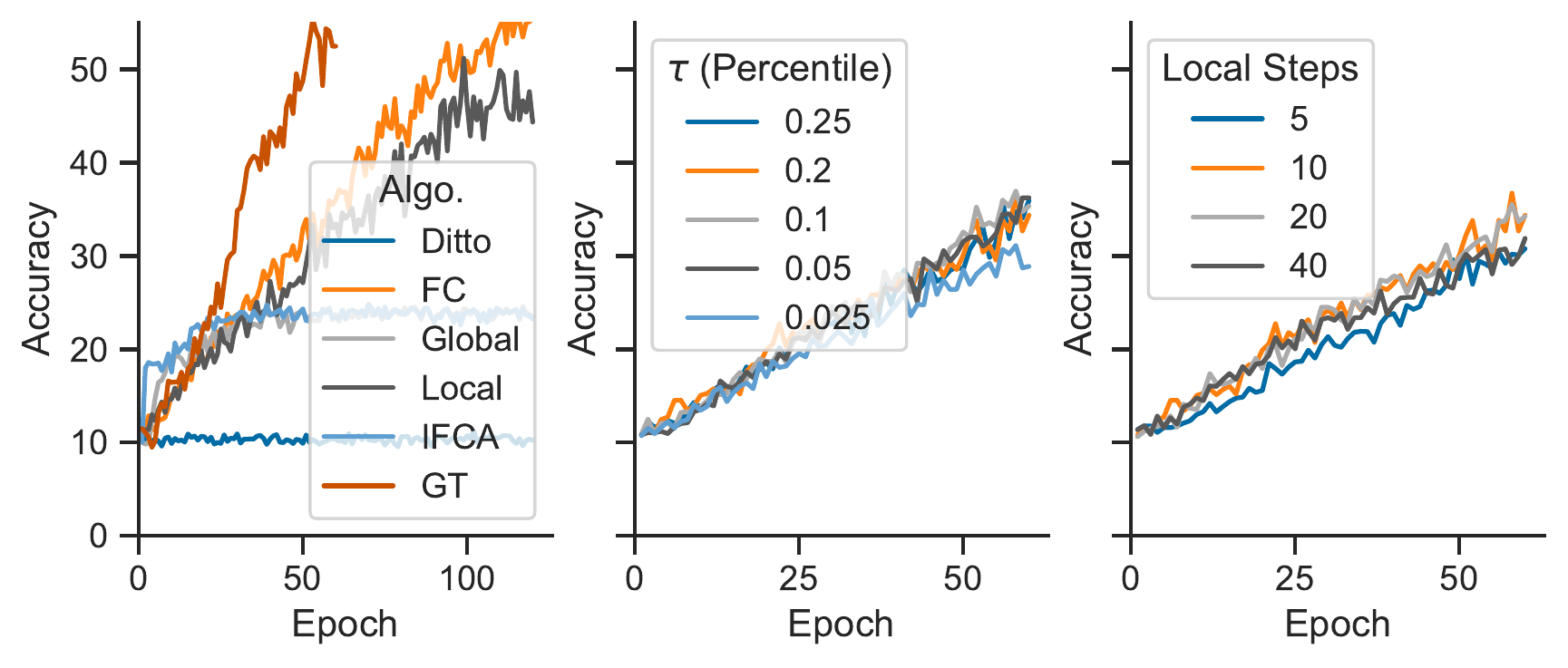}
  \caption{
  Performance of algorithms on CIFAR-10 dataset with the \emph{private labels} task. \textbf{Left}: Relative accuracy of clustering algorithms. Algorithms optimized for global model performance, such as Ditto, IFCA, Global (FedAvg), perform poorly on personalization. FC outperforms Local training, showing that it benefits from collaboration between clients, and is competitive with GroundTruth. \textbf{Middle}: Impact of thresholding radius $\tau$ on accuracy. $\tau$ is the percentile of gradients distances from the cluster-center. \textbf{Right}: Impact of local gradient steps between two clustering calls. Early on in training, clusters are less identifiable so local optimization helps but these gains lessen later on when gradients from different clusters drift apart and clusters are better defined.}
  \label{fig:cifar10}
\end{figure}

\begin{figure}[!t]
\centering
\begin{minipage}{0.35\textwidth}
    \vspace{2em}
  \includegraphics[width=0.9\linewidth]{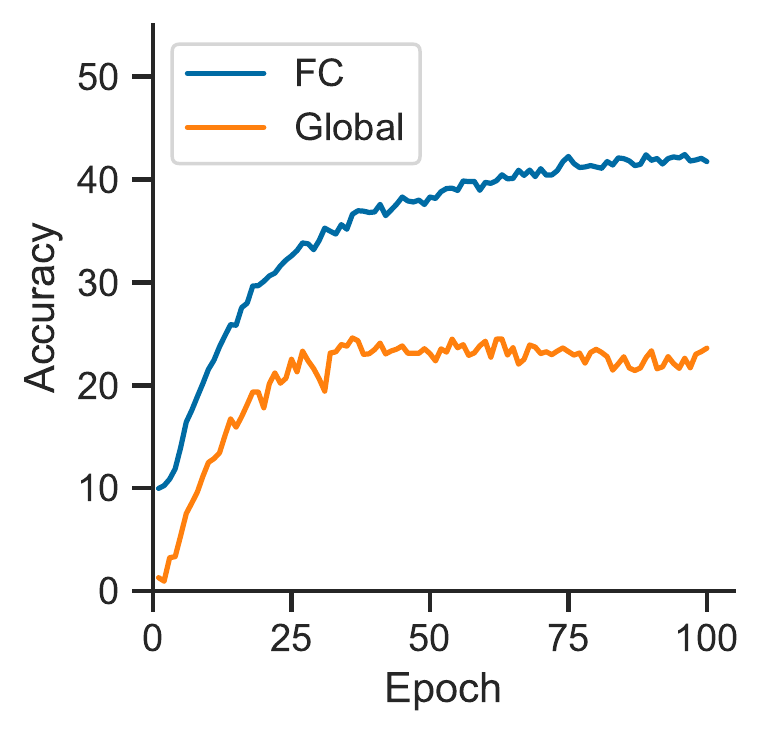}
  \caption{Performance of Federated-Clustering (FC) on the CIFAR-100 dataset. The Global (FedAvg) accuracy plateaus while FC continually improves.}
  \label{fig:cifar100}
\end{minipage}%
~
\begin{minipage}{0.6\textwidth}
  \includegraphics[width=0.9\linewidth]{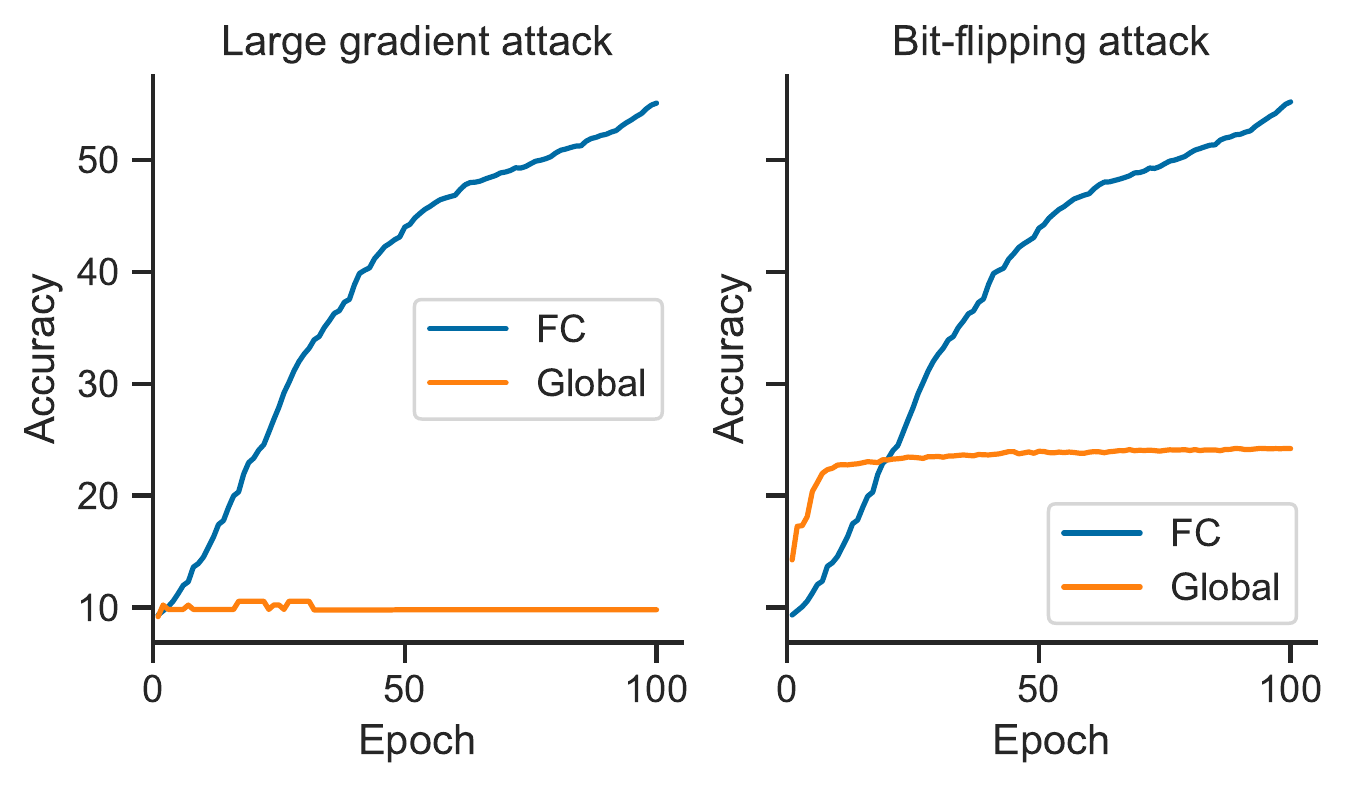}
  \caption{Performance of Federated-Clustering (FC) against a large gradient attack (left) and bit-flipping attack (right). FC is robust to these attacks and significantly outperforms Global (FedAvg) performance.}
  \label{fig:attacks}
\end{minipage}%

\end{figure}

\subsubsection{CIFAR-100}\label{sec:cifar100-experiment}
We consider the CIFAR-100 dataset distributed over 10 clusters so that each cluster contains 10 unique labels. In each cluster, we set 10 clients with IID data. We use a VGG-8 model for training and the same hyperparameters as those in the CIFAR-10 experiment (Section \ref{sec:cifar10-experiment}) and report the results in Fig.~\ref{fig:cifar100}. While clients' data within each cluster share similar features, a small model like VGG-8 cannot sufficiently benefit from intra-cluster collaboration. Therefore the performance of Global training plateaus at a very low level while in contrast Federated-Clustering (FC) still benefits from collaboration and continues to improve over time.

\subsection{Defense against Byzantine attacks}
Byzantine attacks, in which attackers have full knowledge of the system and can deviate from the prescribed algorithm, are prevalent in distributed environments~\citep{lamport2019byzantine}. There are many forms of Byzantine attacks. For example, our \emph{private label} setting in Sections \ref{sec:mnist experiment} and \ref{sec:cifar10-experiment} corresponds to the \emph{label-flipping attack} in the Byzantine-robustness literature, since a malicious client can try to corrupt the model by assigning the wrong label to an image in training data. 

In this section, we investigate two other attacks: Byzantine workers send either very large gradients or gradients with opposite signs. Using the MNIST dataset with the \emph{private label} task, we set 4 clusters with 50 non-malicious clients each (so non-malicious clients from different clusters can have private labels) and add 50 Byzantine works to each cluster. We demonstrate the robustness of Federated-Clustering (FC) in Fig.~\ref{fig:attacks}. In both cases, Global training suffers from serious model degradation while FC successfully reaches high accuracy under these attacks, demonstrating its robustness.

\subsection{Empirical Study on Clipping Iterations in Algorithm~\ref{alg:threshold-clustering}}
We employ Algorithm~\ref{alg:threshold-clustering} (Threshold-Clustering) on datasets to discern the effectiveness of the clipping iterations in identifying optimal cluster centers. These datasets share the same groundtruth cluster centers, and thus the same \(\Delta\), but vary in their inter-cluster standard deviations, with \(\sigma\) values of \(0.5,1,2,4\). Each dataset is made up of 90 ten-dimensional samples from 10 clusters, generated using the scikit-learn package~\citep{scikit-learn}. For each iteration \(l\) within cluster \(k\), the clipping radius \(\tau_{k,l}\) is defined as the $10$th percentile of gradients' distances from the cluster-center. We repeat this experimental setup ten times for consistency.

The outcomes, presented in Fig.~\ref{fig:clipping_l}, show that the average distances initially decrease rapidly, then steadily approach convergence. To identify the \textit{elbow} of a given curve \(f\), we use the formula \(\tfrac{f(l) - f(l-1)}{f(l-1) - \min_{l} f(l)}\), where curves post-\textit{elbow} are notably flat. These \textit{elbows} elucidate the correlation between \(\sigma\) and \(l\), indicating that for a fixed dataset (and its corresponding \(\sigma\)), one can pinpoint the minimal iterations \(l\) needed for convergence. Notably, this observation appears to align with the \(l\gtrsim\log \sigma\) lower bound stated in Theorem~\ref{thm:threshold-clustering}.

\begin{figure}[!t]
\centering
  \centering
  \includegraphics[width=0.5\linewidth]{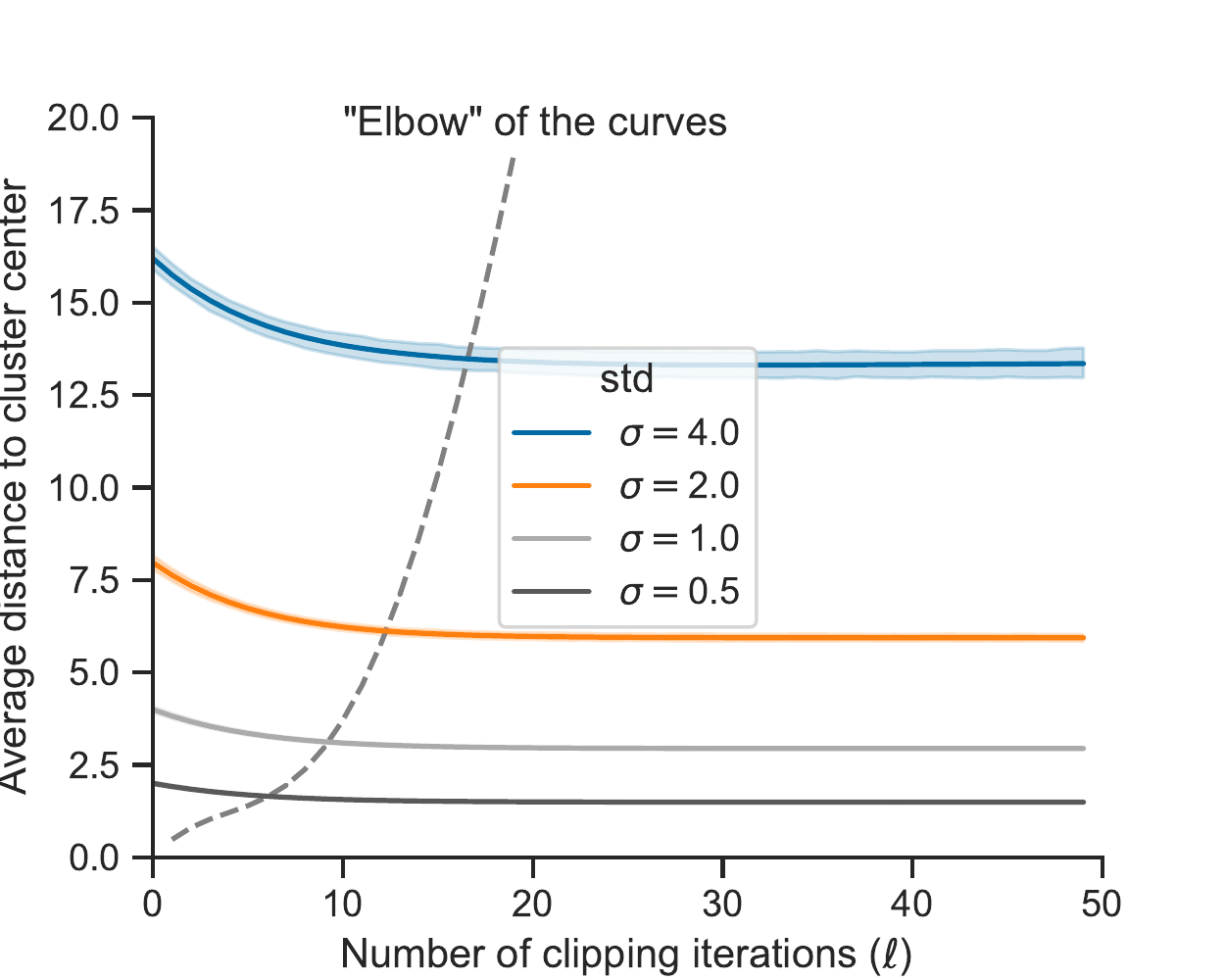}
  \caption{The average distance to cluster centers as a function of the number of clipping iterations $l$. The distance between cluster-centers ($\Delta$) is fixed while inter-cluster standard deviation $\sigma$ differs.}
  \label{fig:clipping_l}
\end{figure}

\section{Conclusion}
We develop gradient-based clustering algorithms to achieve personalization in federated learning. Our algorithms have optimal convergence guarantees. They asymptotically match the achievable rates when the true clustering of clients is known, and our analysis holds under light assumptions (e.g., for all smooth convex and non-convex losses). Furthermore, our algorithms are provably robust in the Byzantine setting where some fraction of the clients can arbitrarily corrupt their gradients. Future directions involve developing bespoke analysis for the convex-loss case and developing more communication-efficient versions of our algorithms. Further, our analysis can be used to show that our algorithms are incentive-compatible and lead to \emph{stable coalitions} as in~\cite{donahue2021optimality}. This would form a strong argument towards encouraging participants in a federated learning system. Investigating such incentives and fairness concerns is another promising future direction.
 
\printbibliography

\appendix
\section{Proofs}
\subsection{Proof of Theorem \ref{thm:threshold-clustering}}
\label{proof:threshold-clustering}
First we establish some notation.
\paragraph{Notation.}
\begin{itemize}
    \item $\G_i$ are the good points and $\B_i$ the bad points from point $z_i$'s perspective. Therefore $|\G_i| + |\B_i| = N$.
    \item $k_i$ denotes the cluster to which client $i$ is assigned at the end of Threshold-Clustering.
    \item To facilitate the proof, we introduce a variable $c_{k_i,l}^2$ that quantifies the distance from the cluster-center-estimates to the true cluster means at each step of thresholding. Specifically, for client $i$'s cluster $k_i$ at round $l$ of Threshold-Clustering, we set
    \begin{align}
        c_{k_i,l}^2 = \E\|v_{k_i,l-1} - \E z_i\|^2.
    \end{align}
    \item For client $i$'s cluster $k_i$ at round $l$ of Threshold-Clustering, we use thresholding radius 
    \begin{align}
        \tau_{k_i,l}^2 \approx c_{k_i,l}^2 + \delta_i\sigma\Delta.
    \end{align}
    \item We introduce a variable $y_{j,l}$ to denote the points clipped by Threshold-Clustering:
    \begin{align}
        v_{k,l} = \frac{1}{N}\sum_{j \in [N]} \underbrace{z_j \mathbbm{1}(\|z_j - v_{k,l-1}\| \leq \tau_{k,l}) + v_{k,l-1}\mathbbm{1}(\|z_j - v_{k,l-1}\| > \tau_{k,l})}_{y_{j,l}}.
    \end{align}
\end{itemize}
\begin{proof}[Proof of Theorem \ref{thm:threshold-clustering}]
We prove the main result with the following sequence of inequalities, and then justify the labeled steps afterwards.
\begin{align}
    \E\|v_{k_i,l} - \E z_i\|^2
    &=
    \E\bigg\|\frac{1}{N}\sum_{j \in [N]}y_{j,l} - \E z_i\bigg\|^2\\
    &=
    \E\bigg\|(1-\beta_i) \bigg(\frac{1}{|\G_i|}\sum_{j \in \G_i}y_{j,l} - \E z_i\bigg) + \beta_i\bigg(\frac{1}{|\B_i|}\sum_{j \in \B_i}y_{j,l} - \E z_i\bigg)\bigg\|^2\\
    &\overset{(i)}{\leq}
    (1+\beta_i)(1-\beta_i)^2\E\bigg\|\bigg(\frac{1}{|\G_i|}\sum_{j \in \G_i}y_{j,l}\bigg) - \E z_i\bigg\|^2 +
    \bigg(1+\frac{1}{\beta_i}\bigg)\beta_i^2\E\bigg\|\bigg(\frac{1}{|\B_i|}\sum_{j \in \B_i}y_{j,l}\bigg) - \E x_i\bigg\|^2\\
    &\lesssim
    \underbrace{\E\bigg\|\bigg(\frac{1}{|\G_i|}\sum_{j \in \G_i}y_{j,l}\bigg) - \E z_i\bigg\|^2}_{\mathcal{E}_1}+
    \beta_i\underbrace{\E\bigg\|\bigg(\frac{1}{|\B_i|}\sum_{j \in \B_i}y_{j,l}\bigg) - \E z_i\bigg\|^2}_{\mathcal{E}_2}\\
    &\overset{(ii)}{\lesssim}
    \bigg((1-\delta_i)c_{k_i,l}^2 + \frac{\sigma^2}{n_i} + \frac{\sigma^3}{\Delta}\bigg) + \beta_i(c_{k_i,l}^2 + \delta_i\sigma\Delta)\\
    &\overset{(iii)}{\lesssim}
    (1-\nicefrac{\delta_i}{2})c_{k_i,l}^2 + \bigg(\frac{\sigma^2}{n_i} + \frac{\sigma^3}{\Delta} + \beta_i\sigma\Delta\bigg)\\
    &\overset{(iv)}{\lesssim}
    (1-\nicefrac{\delta_i}{2})^l c_{k_i,1}^2 + \bigg(\frac{\sigma^2}{n_i} + \frac{\sigma^3}{\Delta} + \beta_i\sigma\Delta\bigg)\sum_{q=0}^{l-1}(1-\nicefrac{\delta_i}{2})^q\\
    &\overset{(v)}{\lesssim}
    (1-\nicefrac{\delta_i}{2})^l \sigma^2 + \bigg(\frac{\sigma^2}{n_i} + \frac{\sigma^3}{\Delta} + \beta_i\sigma\Delta\bigg)\\
    &\overset{(vi)}{\lesssim}
    \frac{\sigma^2}{n_i} + \frac{\sigma^3}{\Delta} + \beta_i\sigma\Delta\label{eq:c-bound-threshold-clustering}.
\end{align}
Justifications for the labeled steps are:
\begin{itemize}
    \item (i) Young's inequality: $\|x+y\|^2 \leq (1+\epsilon)\|x\|^2 + (1+\nicefrac{1}{\epsilon})\|y\|^2$ for any $\epsilon > 0$.
    \item (ii) We prove this bound in Lemmas \ref{lemma:E-1-bound-threshold-clustering} and \ref{lemma:E-2-bound-threshold-clustering}. Importantly, it shows that the clustering error is composed of two quantities: $\mathcal{E}_1$, the error contributed by good points from the cluster's perspective, and $\mathcal{E}_2$, the error contributed by the bad points from the cluster's perspective.
    \item (iii) Assumption that $\beta_i \lesssim \delta_i$
    \item (iv) Since $\E\|v_{k_i,l} - \E z_i\|^2 = c_{k_i,l+1}^2$, the inequality forms a recursion which we unroll over $l$ steps.
    \item (v) Assumption that $c_{k_i,1}^2 = \E\|v_{k_i,0} - \E z_i\|^2 \leq \sigma^2$. Also, the partial sum in the second term can be upper-bounded by a large-enough constant.
    \item (vi) Assumption that $l \geq \max\bigg\{1,\frac{\log(\nicefrac{\sigma}{\Delta})}{\log(1-\nicefrac{\delta_i}{2})}\bigg\}$
\end{itemize}
From \eqref{eq:c-bound-threshold-clustering}, we see that $c_{k_i,l}^2 \lesssim \frac{\sigma^2}{n_i} + \frac{\sigma^3}{\Delta} + \beta_i\sigma\Delta$. Plugging this into the expression for $\tau_{k_i,l}^2$ gives $\tau_{k_i,l}^2 \approx \frac{\sigma^2}{n_i} + \frac{\sigma^3}{\Delta} + \delta_i\sigma\Delta \approx \delta_i\sigma\Delta$ for large $n_i$ and $\Delta$.
\end{proof}
\begin{lemma}[Clustering Error due to Good Points]\label{lemma:E-1-bound-threshold-clustering}
\begin{align}
    \E\bigg\|\bigg(\frac{1}{|\G_i|}\sum_{j \in \G_i}y_{j,l}\bigg) - \E z_i\bigg\|^2
    &\lesssim
    (1-\delta_i)c_{k_i,l}^2 + \frac{\sigma^2}{n_i} + \frac{\sigma^3}{\Delta}
\end{align}
\end{lemma}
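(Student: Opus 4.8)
The plan is to split the good points from $z_i$'s perspective into those lying in $z_i$'s own cluster, $\G_i^{\mathrm{in}}$, and those lying in the other $K-1$ clusters, $\G_i^{\mathrm{out}}$, and bound the contribution of each group to the clipped average separately. Writing $v := v_{k_i,l-1}$, recalling $c_{k_i,l}^2 = \E\|v - \E z_i\|^2$ and $\tau := \tau_{k_i,l}$ with $\tau^2 \approx c_{k_i,l}^2 + \delta_i\sigma\Delta$, I would decompose
\[
\frac{1}{|\G_i|}\sum_{j\in\G_i} y_{j,l} - \E z_i = \underbrace{\frac{1}{|\G_i|}\sum_{j\in\G_i^{\mathrm{in}}}\big(y_{j,l}-\E z_i\big)}_{(\mathrm{A})} + \underbrace{\frac{1}{|\G_i|}\sum_{j\in\G_i^{\mathrm{out}}}\big(y_{j,l}-\E z_i\big)}_{(\mathrm{B})},
\]
bound $\E\|(\mathrm{A})\|^2$ and $\E\|(\mathrm{B})\|^2$ and recombine via Young's inequality. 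A key observation driving the constants is $|\G_i^{\mathrm{in}}|/|\G_i| \approx \delta_i$ (using $\beta_i\lesssim\delta_i$), which supplies exactly the factors of $\delta_i$ needed to make the final bound clean.

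For term $(\mathrm{B})$: points in other clusters have means at distance $\geq\Delta$ from $\E z_i$, while $v$ is within $O(\sqrt{c_{k_i,l}^2}+\sigma)$ of $\E z_i$ and $\tau\approx\sqrt{\delta_i\sigma\Delta}\ll\Delta$ (using $\Delta\gtrsim\sigma/\delta_i$). A Chebyshev argument then shows each such $z_j$ falls outside the $\tau$-ball and is clipped to $v$ except on an event of probability $\lesssim\sigma^2/\Delta^2$. On the clipped event the summand equals $v-\E z_i$, so the main part of $(\mathrm{B})$ is $(|\G_i^{\mathrm{out}}|/|\G_i|)(v-\E z_i)$, whose squared norm has expectation $\lesssim(1-\delta_i)c_{k_i,l}^2$; the rare unclipped event, after multiplying by the $\sigma^2/\Delta^2$ probability and the $\delta_i$ factors, contributes a lower-order $\sigma^3/\Delta$-type term.

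For term $(\mathrm{A})$: write $y_{j,l} = z_j - (z_j-v)\mathbbm{1}(\|z_j-v\|>\tau)$, so the in-cluster clipped average is the in-cluster sample mean (squared error $\lesssim\sigma^2/n_i$ by Assumptions 1 and 3) plus a clipping correction $\frac{1}{|\G_i^{\mathrm{in}}|}\sum_{j\in\G_i^{\mathrm{in}}}(z_j-v)\mathbbm{1}(\|z_j-v\|>\tau)$. I would split the correction into its conditional mean (a bias, identical across the i.i.d. in-cluster points) and a fluctuation term; the fluctuation contributes $\lesssim\sigma^2/n_i$ by independence, while the bias is controlled because $\tau\gtrsim\sigma$ forces the clipping probability to be $\lesssim\sigma^2/\tau^2\lesssim\sigma/(\delta_i\Delta)$, so Cauchy–Schwarz bounds the bias by order $\sigma^2/\tau\lesssim\sigma^{3/2}/\sqrt{\delta_i\Delta}$. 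Squaring this and multiplying by the $\delta_i^2$ coming from $(|\G_i^{\mathrm{in}}|/|\G_i|)^2$ yields the claimed $\sigma^3/\Delta$. Finally, taking expectations over $v$ replaces $\|v-\E z_i\|^2$ by $c_{k_i,l}^2$ everywhere and produces the stated bound.

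The \emph{main obstacle} is twofold. First, $v=v_{k_i,l-1}$ is a deterministic function of all the points $\{z_1,\dots,z_N\}$, hence not independent of the individual $z_j$'s; the Chebyshev and Cauchy–Schwarz steps implicitly want $z_j$ to be fresh given $v$, so making them rigorous requires either conditioning carefully and absorbing the distributional shift, or arguing that clipping makes $v$ insensitive to any single $z_j$. Second, the ``$v$ close to $\E z_i$'' arguments are only strong once the $\delta_i\sigma\Delta$ part dominates in $\tau^2$; in the early rounds where $c_{k_i,l}^2$ is the dominant term one must still control the events ``in-cluster point clipped'' and ``out-cluster point not clipped,'' and the slack here comes precisely from having included the $c_{k_i,l}^2$ term in $\tau^2$ (so that, e.g., $\Pr(\|z_j-v\|>\tau\mid v)\le\sigma^2/(\tau-\|v-\E z_i\|)^2$ remains bounded). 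Aligning all of the $\delta_i$, $\beta_i$, $n_i$ and $\Delta$ factors so that the three target terms $(1-\delta_i)c_{k_i,l}^2$, $\sigma^2/n_i$, and $\sigma^3/\Delta$ emerge with the right constants is the bulk of the remaining bookkeeping.
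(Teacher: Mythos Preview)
Your proposal is correct and follows essentially the same approach as the paper. The paper likewise splits $\G_i$ into in-cluster and out-of-cluster points, applies Young's inequality with parameter $\delta_i/2$, and organizes the result into three pieces $\Tau_1,\Tau_2,\Tau_3$: $\Tau_1$ is exactly your clipping bias (bounded via $\E\|y_{j,l}-z_j\|\le\E\|v-z_j\|^2/\tau$ and Jensen, giving $(c_{k_i,l}^2+\sigma^2)^2/\tau^2$), $\Tau_2$ is your fluctuation/sample-mean term giving $\sigma^2/n_i$, and $\Tau_3$ is your term (B), handled by the same Markov/Chebyshev argument on the probability an out-of-cluster point lands inside the $\tau$-ball. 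Regarding the independence obstacle you flag, the paper does not resolve it rigorously either: in the $\Tau_2$ lemma it simply assumes a fresh batch is resampled at each thresholding round and asserts this changes the bounds only by constants.
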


\begin{proof}[Proof of Lemma \ref{lemma:E-1-bound-threshold-clustering}]
We prove the main result with the following sequence of inequalities and justify the labeled steps afterward.
\begin{align}
    \E\bigg\|\bigg(\frac{1}{|\G_i|}\sum_{j \in
    \G_i}y_{j,l}\bigg) -\E z_i\bigg\|^2
    &=
    \E\bigg\|\bigg(\frac{1}{|\G_i|}\sum_{j \in
    \G_i:j \sim i}(y_{j,l} - \E z_j)\bigg) + \bigg(\frac{1}{|\G_i|}\sum_{j \in \G_i:j \not\sim i}(y_{j,l} - \E z_i)\bigg)\bigg\|^2\\
    &\overset{(i)}{\leq}
    \bigg(1+\frac{2}{\delta_i}\bigg)\E\bigg\|\frac{1}{|\G_i|}\sum_{j \in
    \G_i:j \sim i}(y_{j,l} - \E z_j)\bigg\|^2 + 
    \bigg(1+\frac{\delta_i}{2}\bigg)\E\bigg\|\frac{1}{|\G_i|}\sum_{j \in
    \G_i:j \not\sim i}(y_{j,l} - \E z_i)\bigg\|^2\\
    &\overset{(ii)}{\lesssim}
    \bigg(1+\frac{2}{\delta_i}\bigg)\E\bigg\|\frac{1}{|\G_i|}\sum_{j \in
    \G_i:j \sim i}(\E y_{j,l} - \E z_j)\bigg\|^2 +
    \bigg(1+\frac{2}{\delta_i}\bigg)\E\bigg\|\frac{1}{|\G_i|}\sum_{j \in
    \G_i:j \sim i}(y_{j,l} - \E y_{j,l})\bigg\|^2\\
    &\phantom{{}=1} +
    \bigg(1+\frac{\delta_i}{2}\bigg)\E\bigg\|\frac{1}{|\G_i|}\sum_{j \in
    \G_i:j \not\sim i}(y_{j,l} - \E z_i)\bigg\|^2\\
    &\leq
    \bigg(1+\frac{2}{\delta_i}\bigg)\delta_i^2\|\E_{j \in
    \G_i:j \sim i}(y_{j,l} - z_j)\|^2 +
    \bigg(1+\frac{2}{\delta_i}\bigg)\E\bigg\|\frac{1}{|\G_i|}\sum_{j \in
    \G_i:j \sim i}(y_{j,l} - \E y_{j,l})\bigg\|^2 \\
    &\phantom{{}=1} +
    \bigg(1+\frac{\delta_i}{2}\bigg)(1-\delta_i)^2\E_{j \in
    \G_i:j \not\sim i}\|y_{j,l} - \E z_i\|^2\\
    &\lesssim
    \delta_i\underbrace{\|\E_{j \in
    \G_i:j \sim i}(y_{j,l} - z_j)\|^2}_{\Tau_i} +
    \bigg(1+\frac{2}{\delta_i}\bigg)\underbrace{\E\bigg\|\frac{1}{|\G_i|}\sum_{j \in
    \G_i:j \sim i}(y_{j,l} - \E y_{j,l})\bigg\|^2}_{\Tau_2} \\
    &\phantom{{}=1} +
    \bigg(1+\frac{\delta_i}{2}\bigg)(1-\delta_i)^2\underbrace{\E_{j \in
    \G_i:j \not\sim i}\|y_{j,l} - \E z_i\|^2}_{\Tau_3}\\
    &\overset{(iii)}{\lesssim}
    \delta_i\bigg(c_{k_i,l}^2 + \frac{(c_{k_i,l}^2 + \sigma^2)\sigma}{\delta_i\Delta}\bigg) + \bigg(1+\frac{2}{\delta_i}\bigg)\frac{n_i}{|\G_i|^2}\sigma^2\\
    &\phantom{{}=1} + 
    (1-\delta_i)^2\bigg(1+\frac{\delta_i}{2}\bigg)\bigg(\bigg(1+\frac{\delta_i}{2} + \frac{\sigma^2}{\delta_i\Delta^2}\bigg)c_{k_i,l}^2 + \frac{\sigma^3}{\Delta}\bigg)\\
    &\lesssim
    \bigg(1-\delta_i + \frac{\sigma}{\Delta} + \frac{\sigma^2}{\delta_i\Delta^2}\bigg)c_{k_i,l}^2 + \bigg(\frac{\sigma^2}{n_i} + \frac{\sigma^3}{\Delta}\bigg)\\
    &\overset{(iv)}{\lesssim}
    (1-\delta_i)c_{k_i,l}^2 + \bigg(\frac{\sigma^2}{n_i} + \frac{\sigma^3}{\Delta}\bigg).
\end{align}
\begin{itemize}
    \item (i), (ii) Young's inequality
    \item (iii) We prove this bound in Lemmas \ref{lemma:Tau-1-bound-threshold-clustering}, \ref{lemma:Tau-2-bound-threshold-clustering}, and \ref{lemma:Tau-3-bound-threshold-clustering}. Importantly, it shows that, from point $i$'s perspective, the error of its cluster-center-estimate is composed of three quantities: $\Tau_1$, the error introduced by our thresholding procedure on the good points which belong to $i$'s cluster (and therefore ideally are included within the thresholding radius); $\Tau_2$, which accounts for the variance of the points in $i$'s cluster; and $\Tau_3$, the error due to the good points which don't belong to $i$'s cluster (and therefore ideally are forced outside the thresholding radius).
    \item (iv) Assumption that $\Delta \gtrsim \nicefrac{\sigma}{\delta_i}$.
\end{itemize}
\end{proof}
\begin{lemma}[Bound $\Tau_1$: Error due to In-Cluster Good Points]\label{lemma:Tau-1-bound-threshold-clustering}
\begin{align}
    \|\E_{j \in \G_i:j \sim i}(y_{j,l} - z_j)\|^2
    &\lesssim
    c_{k_i,l}^2 +
    \frac{(c_{k_i,l}^2+\sigma^2)\sigma}{\delta_i\Delta}.
\end{align}
\end{lemma}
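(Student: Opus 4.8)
The plan is to use that $y_{j,l}-z_j$ vanishes unless $z_j$ lies outside the clipping ball. Writing $v:=v_{k_i,l-1}$, $\tau:=\tau_{k_i,l}$, and $\mu:=\E z_i$, the update rule \eqref{eq:t-c-update-rule} gives $y_{j,l}-z_j=(v-z_j)\mathbbm{1}(\|z_j-v\|>\tau)$. Since Assumption 1 makes all clients $j\sim i$ i.i.d., $\E_{j\in\G_i:j\sim i}(y_{j,l}-z_j)$ equals $\E[(v-z)\mathbbm{1}(\|z-v\|>\tau)]$ for a single generic in-cluster point $z$ with $\E z=\mu$ and $\E\|z-\mu\|^2\le\sigma^2$, so it suffices to bound the squared norm of this expectation. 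First I would condition on $v$ (the cluster-center estimate from the previous round); the one delicate point, addressed at the end, is the mild dependence of $v$ on $z$.

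Conditioned on $v$, decompose $v-z=(v-\mu)-(z-\mu)$, so that (all expectations and probabilities below taken over $z$ with $v$ fixed)
\begin{align}
    \E\big[(v-z)\mathbbm{1}(\|z-v\|>\tau)\big]=(v-\mu)\Pr(\|z-v\|>\tau)-\E\big[(z-\mu)\mathbbm{1}(\|z-v\|>\tau)\big],
\end{align}
and bound the two pieces separately. For the first, the trivial bound $\Pr(\cdot)\le 1$ shows its norm is at most $\|v-\mu\|$, so after taking expectation over $v$ it contributes $\E\|v-\mu\|^2=c_{k_i,l}^2$, the first term of the claim. For the second, Jensen together with Cauchy--Schwarz give $\big\|\E[(z-\mu)\mathbbm{1}(\cdot)]\big\|\le\sqrt{\E\|z-\mu\|^2}\,\sqrt{\Pr(\|z-v\|>\tau)}\le\sigma\sqrt{\Pr(\|z-v\|>\tau)}$.

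The clipping probability is controlled by Markov's inequality: since $\E z=\mu$ the cross term vanishes, $\E\|z-v\|^2=\E\|z-\mu\|^2+\|v-\mu\|^2\le\sigma^2+\|v-\mu\|^2$, hence $\Pr(\|z-v\|>\tau)\le(\sigma^2+\|v-\mu\|^2)/\tau^2$. Squaring the second-piece bound, taking expectation over $v$ (the norm of an expectation being at most the expectation of the norm), using $\E\|v-\mu\|^2=c_{k_i,l}^2$, and using $\tau^2\approx c_{k_i,l}^2+\delta_i\sigma\Delta\gtrsim\delta_i\sigma\Delta$ yields $\sigma^2(\sigma^2+c_{k_i,l}^2)/(\delta_i\sigma\Delta)=(c_{k_i,l}^2+\sigma^2)\sigma/(\delta_i\Delta)$, the second term of the claim. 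Combining the two pieces via $(a+b)^2\le 2a^2+2b^2$ finishes the argument.

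The main obstacle is that $v=v_{k_i,l-1}$ is itself computed from all of $\{z_1,\dots,z_N\}$, including $z$, so $v$ and $z$ are not independent and the conditioning step is not literally valid. This dependence is mild: a single point enters the averaged update \eqref{eq:t-c-update-rule} with weight $1/N$, so replacing $v$ by its leave-$z$-out version perturbs $\E\|z-v\|^2$ and $\E\|v-\mu\|^2$ only by lower-order terms that can be absorbed into the constants (alternatively, one carries the dependence explicitly through the same Chebyshev/Cauchy--Schwarz estimates). Everything else is routine bookkeeping of which expectation runs over $z$ and which over $v$.
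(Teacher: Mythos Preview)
Your argument is correct and reaches the desired bound, but it takes a slightly different route from the paper's proof. The paper does not decompose $v-z=(v-\mu)-(z-\mu)$ or condition on $v$; instead it works directly with the norm $\|y_{j,l}-z_j\|=\|v-z_j\|\mathbbm{1}(\|v-z_j\|>\tau)$, applies the elementary inequality $X\mathbbm{1}(X>\tau)\le X^2/\tau$ with $X=\|v-z_j\|$, and then uses Jensen in the form $\|\E(y_{j,l}-z_j)\|^2\le(\E\|y_{j,l}-z_j\|)^2\le(\E\|v-z_j\|^2)^2/\tau^2\le(c_{k_i,l}^2+\sigma^2)^2/\tau^2$. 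The two summands in the claim then emerge by splitting the denominator $\tau^2\approx c_{k_i,l}^2+\delta_i\sigma\Delta$ termwise (using $\tau^2\ge c_{k_i,l}^2$ to handle the $c_{k_i,l}^4$ numerator term and $\tau^2\ge\delta_i\sigma\Delta$ for the rest). Your decomposition into $(v-\mu)$ and $(z-\mu)$ gives a bit more intuition about the source of each error term but requires the extra conditioning-on-$v$ bookkeeping; the paper's single-step bound avoids that at the cost of a less transparent split of $\tau^2$ at the end. Regarding your independence concern, the paper handles it in exactly the spirit you describe (see the proof of Lemma~\ref{lemma:Tau-2-bound-threshold-clustering}): it argues that resampling a fresh batch for each round of thresholding changes the bounds by only constant factors, and proceeds as if $v_{k_i,l-1}$ were independent of the current points.
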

\begin{proof}[Proof of Lemma \ref{lemma:Tau-1-bound-threshold-clustering}]
By definition of $y_{j,l}$,
\begin{align}
    \E_{j \in
    \G_i:j \sim i}\|y_{j,l} - z_j\|
    &=
    \E[\|v_{k_i,l-1} - z_j\|\mathbbm{1}(\|v_{k_i,l-1} - z_j\|>\tau_{k_i,l})]\\
    &\leq
    \frac{\E[\|v_{k_i,l-1} - z_j\|^2\mathbbm{1}(\|v_{k_i,l-1} - z_j\|>\tau_{k_i,l})]}{\tau_{k_i,l}}\\
    &\leq
    \frac{\E\|v_{k_i,l-1} - z_j\|^2}{\tau_{k_i,l}}\\
    &\lesssim
    \frac{\E\|v_{k_i,l-1} - \E z_i\|^2 + \E\|\E z_j - z_j\|^2}{\tau_{k_i,l}}\\
    &\leq
    \frac{c_{k_i,l}^2 + \sigma^2}{\tau_{k_i,l}}
\end{align}
Finally, by Jensen's inequality and plugging in the value for $\tau_{k_i,l}$,
\begin{align}
    \|\E(y_{j,l} - z_j)\|^2
    &\leq
    (\E\|y_{j,l} - z_j\|)^2\\
    &\lesssim
    \frac{(c_{k_i,l}^2 + \sigma^2)^2}{\tau_{k_i,l}^2}\\
    &\lesssim
    \frac{c_{k_i,l}^4}{c_{k_i,l}^2} +
    \frac{c_{k_i,l}^2\sigma^2}{\delta_i\sigma\Delta} + \frac{\sigma^4}{\delta_i\sigma\Delta}\\
    &=
    c_{k_i,l}^2 +
    \frac{(c_{k_i,l}^2 + \sigma^2)\sigma}{\delta_i\Delta}.
\end{align}
\end{proof}

\begin{lemma}[Bound $\Tau_2$: Variance of Clipped Points]\label{lemma:Tau-2-bound-threshold-clustering}
\begin{align}
    \E\bigg\|\frac{1}{|\G_i|}\sum_{j \in \G_i: j \sim i}(y_{j,l} - \E y_{j,l})\bigg\|^2
    &\leq
    \frac{n_i}{|\G_i|^2}\sigma^2.
\end{align}
\end{lemma}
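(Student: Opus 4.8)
The plan is to prove this by exploiting the independence of clients within a cluster. The first step is to condition on the current center estimate $v_{k_i,l-1}$: by Assumption 1 the samples $\{z_j : j \sim i\}$ are i.i.d., and since each $y_{j,l}$ is one fixed function (clip toward $v_{k_i,l-1}$ at radius $\tau_{k_i,l}$) of $z_j$, the clipped points $\{y_{j,l} : j \in \G_i,\ j \sim i\}$ are conditionally i.i.d., hence uncorrelated, after centering by their mean. Expanding the squared norm of the average, all cross terms vanish, leaving $\frac{1}{|\G_i|^2}\sum_{j \in \G_i,\, j \sim i}\E\|y_{j,l} - \E y_{j,l}\|^2$; since there are at most $n_i$ clients in $i$'s cluster, this reduces the lemma to the single-client bound $\E\|y_{j,l} - \E y_{j,l}\|^2 \lesssim \sigma^2$ (and, if one is careful with constants, with $\sigma^2$ and no hidden factor).

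For that single-client bound I would use that the variance is at most the second moment about any fixed point, so $\E\|y_{j,l} - \E y_{j,l}\|^2 \le \E\|y_{j,l} - \E z_j\|^2$ with $\E z_j = \E z_i$, and then split on whether $z_j$ falls inside the ball of radius $\tau_{k_i,l}$ around $v_{k_i,l-1}$. On the inside event $y_{j,l} = z_j$, so this piece contributes at most $\E\|z_j - \E z_j\|^2 \le \sigma^2$ by Assumption 3. On the outside event $y_{j,l} = v_{k_i,l-1}$, so the contribution is $\|v_{k_i,l-1} - \E z_i\|^2$ times the probability of landing outside; I would bound that probability by Chebyshev/Markov, roughly $(\sigma^2 + c_{k_i,l}^2)/\tau_{k_i,l}^2$, and then use $\tau_{k_i,l}^2 \gtrsim \delta_i\sigma\Delta \gtrsim \sigma^2$ (from the separation hypothesis $\Delta \gtrsim \sigma/\delta_i$) together with the standing bound $c_{k_i,l}^2 \lesssim \sigma^2$ to conclude this piece is also $\lesssim \sigma^2$. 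Summing the two pieces gives the claim.

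The main obstacle is the outside contribution: clip-to-the-center is not variance-non-increasing in general (a point far outside the ball is pulled onto $v_{k_i,l-1}$, which could itself be far from the cluster mean if the estimate were poor), so the bound genuinely relies on the parameter regime — namely that the radius $\tau_{k_i,l}$ is large enough (guaranteed by $\Delta \gtrsim \sigma/\delta_i$) that few in-cluster points fall outside, and that the center estimate is already within $O(\sigma)$ of the true mean, i.e. $c_{k_i,l}^2 \lesssim \sigma^2$. This last fact is maintained inductively along the recursion in the proof of Theorem \ref{thm:threshold-clustering} (it holds at $l=1$ by the initialization assumption and is preserved by the recursion), so it may be invoked here as a standing hypothesis. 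A secondary point is the conditioning itself: $v_{k_i,l-1}$ depends on all the $z_j$'s used in earlier rounds, so the independence step must be carried out conditionally on $v_{k_i,l-1}$, and the resulting estimate, being uniform in the conditioning value, then holds unconditionally.
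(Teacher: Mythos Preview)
Your treatment of the single-client variance is actually more honest than the paper's. The paper simply asserts ``contractivity of the thresholding procedure'' to conclude $\E\|y_{j,l}-\E y_{j,l}\|^2 \le \E\|z_j-\E z_j\|^2$, but clip-to-center is \emph{not} variance non-increasing in general (take $z\in\{0,1\}$ equiprobable, $v=10$, $\tau=9.5$: then $y\in\{10,1\}$ and $\operatorname{Var}(y)=20.25\gg 0.25=\operatorname{Var}(z)$). Your inside/outside split, together with the standing hypotheses $c_{k_i,l}^2\lesssim\sigma^2$ and $\tau_{k_i,l}^2\gtrsim\sigma^2$, is the correct way to recover a $\lesssim\sigma^2$ bound in the regime the theorem actually assumes.

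However, your independence step has a real gap. You propose to condition on $v_{k_i,l-1}$ so that $y_{j,l}=\phi_{v_{k_i,l-1}}(z_j)$ becomes a fixed map of $z_j$, and then invoke that the $z_j$ are i.i.d. But in Algorithm~\ref{alg:threshold-clustering} the \emph{same} points $z_1,\dots,z_N$ are used in every round, so $v_{k_i,l-1}$ is a deterministic function of those very $z_j$'s. Conditioning on $v_{k_i,l-1}$ therefore restricts $(z_1,\dots,z_N)$ to a level set of that function, on which the coordinates are coupled; the $z_j$'s are no longer independent, and the cross terms do not vanish. Saying the bound is ``uniform in the conditioning value'' does not rescue this, because the conditional law itself is not a product law. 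The paper does not truly resolve this either: it explicitly \emph{replaces} the algorithm by a resampling variant (fresh points each round), under which $v_{k_i,l-1}$ is independent of the current $z_j$'s, and asserts this only affects constants. You should either adopt that same modeling assumption openly, or give a genuinely different argument; conditioning alone does not deliver independence here.
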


\begin{proof}[Proof of Lemma \ref{lemma:Tau-2-bound-threshold-clustering}]
Note that the elements in the sum $\sum_{j \in \G_i: j \sim i}(y_{j,l} - \E y_{j,l})$ are not independent. Therefore, we cannot get rid of the cross terms when expanding the squared-norm. However, if for each round of thresholding we were sample a fresh batch of points to set the new cluster-center estimate, then the terms would be independent. With this resampling strategy, our bounds would only change by a constant factor. Therefore, for ease of analysis, we will assume the terms in the sum are independent. In that case,
\begin{align}
    \E\bigg\|\frac{1}{|\G_i|}\sum_{j \in \G_i: j \sim i}(y_{j,l} - \E y_{j,l})\bigg\|^2
    &\leq
    \frac{n_i}{|\G_i|^2}\E\|y_{j,l} - \E y_{j,l}\|^2\\
    &\leq
    \frac{n_i}{|\G_i|^2}\E\|z_j - \E z_j\|^2\\
    &\leq
    \frac{n_i}{|\G_i|^2}\sigma^2,
\end{align}
where the second-to-last inequality follows from the contractivity of the thresholding procedure.
\end{proof}

\begin{lemma}[Bound $\Tau_3$: Error due to Out-of-Cluster Good Points]\label{lemma:Tau-3-bound-threshold-clustering}
\begin{align}
    \E_{j \in
    \G_i:j \not\sim i}\|y_{j,l} - \E z_i\|^2
    &\lesssim
    \bigg(1 + \frac{\delta_i}{2} + \frac{\sigma^2}{\delta_i\Delta^2}\bigg)c_{k_i,l}^2+ \frac{\sigma^3}{\Delta}.
\end{align}
\end{lemma}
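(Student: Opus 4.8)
The plan is to condition on whether $z_j$ lands inside the current thresholding ball. Let $\mathcal{I}_l$ denote the event $\{\|z_j - v_{k_i,l-1}\| \le \tau_{k_i,l}\}$. Writing out the definition of $y_{j,l}$,
\begin{align}
    \E\|y_{j,l} - \E z_i\|^2
    = \E\big[\|z_j - \E z_i\|^2\,\mathbbm{1}(\mathcal{I}_l)\big]
    + \E\big[\|v_{k_i,l-1} - \E z_i\|^2\,\mathbbm{1}(\mathcal{I}_l^c)\big].
\end{align}
The second ("clipped") term is at most $\E\|v_{k_i,l-1}-\E z_i\|^2 = c_{k_i,l}^2$, which already supplies the leading $c_{k_i,l}^2$; every remaining contribution must come from the first term, which I will argue is a rare-event term. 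The reason it is rare: since $j \not\sim i$, Assumption 2 gives $\|\E z_j - \E z_i\| \ge \Delta$, whereas the radius $\tau_{k_i,l} \approx \sqrt{c_{k_i,l}^2 + \delta_i\sigma\Delta}$ is much smaller than $\Delta$, using $\Delta \gtrsim \sigma/\delta_i$ together with the a priori bound $c_{k_i,l}^2 \lesssim \Delta^2$ (maintained inductively from the initialization $c_{k_i,1}^2 \le \sigma^2 \lesssim \Delta^2$). So $z_j$ reaches the ball only under a large deviation.

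To make this quantitative I would first bound $\Pr(\mathcal{I}_l)$: on $\mathcal{I}_l$ the triangle inequality gives
\begin{align}
    \Delta \le \|\E z_j - \E z_i\| \le \|z_j - \E z_j\| + \tau_{k_i,l} + \|v_{k_i,l-1} - \E z_i\|,
\end{align}
so (since $\tau_{k_i,l} \le \Delta/2$) at least one of $\|z_j - \E z_j\|$ and $\|v_{k_i,l-1}-\E z_i\|$ exceeds $\Delta/4$; a union bound plus Markov's inequality — Assumption 3 for $z_j$, and the definition $c_{k_i,l}^2 = \E\|v_{k_i,l-1}-\E z_i\|^2$ for the center estimate — yields $\Pr(\mathcal{I}_l) \lesssim (\sigma^2 + c_{k_i,l}^2)/\Delta^2$. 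On the same event one has the deterministic bound $\|z_j - \E z_i\|^2 \le 2\tau_{k_i,l}^2 + 2\|v_{k_i,l-1}-\E z_i\|^2 \lesssim c_{k_i,l}^2 + \delta_i\sigma\Delta + \|v_{k_i,l-1}-\E z_i\|^2$. Combining, the rare-event term is at most a constant times $(c_{k_i,l}^2 + \delta_i\sigma\Delta)(\sigma^2 + c_{k_i,l}^2)/\Delta^2$ plus the cross term $\E[\|v_{k_i,l-1}-\E z_i\|^2\mathbbm{1}(\mathcal{I}_l)]$. Expanding the product and simplifying with $c_{k_i,l}^2 \lesssim \Delta^2$ and $\sigma/\Delta \lesssim \delta_i$: the piece $c_{k_i,l}^2\sigma^2/\Delta^2$ lands in the $(\sigma^2/(\delta_i\Delta^2))\,c_{k_i,l}^2$ slot, $c_{k_i,l}^4/\Delta^2 \le c_{k_i,l}^2$, $\delta_i\sigma^3/\Delta \le \sigma^3/\Delta$, and $\delta_i\sigma c_{k_i,l}^2/\Delta \lesssim \delta_i c_{k_i,l}^2$ (the "$\delta_i/2$" slot). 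Adding back the $c_{k_i,l}^2$ from the clipped term gives exactly $\big(1 + \tfrac{\delta_i}{2} + \tfrac{\sigma^2}{\delta_i\Delta^2}\big)c_{k_i,l}^2 + \tfrac{\sigma^3}{\Delta}$.

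The main obstacle I expect is the coupling between $v_{k_i,l-1}$, $z_j$, and $\mathbbm{1}(\mathcal{I}_l)$ in the cross term: $v_{k_i,l-1}$ is itself a random average of previously clipped points for which only the second moment $c_{k_i,l}^2$ is controlled, so a quantity like $\E[\|v_{k_i,l-1}-\E z_i\|^2\,\mathbbm{1}(\|v_{k_i,l-1}-\E z_i\| \gtrsim \Delta)]$ cannot be handled by second moments alone. I would dispose of this exactly as the paper does for $\Tau_2$ — invoke the resampling/independence convention so that $z_j$ is independent of $v_{k_i,l-1}$, which collapses the cross term into $c_{k_i,l}^2\cdot\Pr(\|z_j-\E z_j\|\gtrsim\Delta) \lesssim \sigma^2 c_{k_i,l}^2/\Delta^2$ plus the (small) probability that $v_{k_i,l-1}$ itself is far — the latter being absorbed using the a priori boundedness of $c_{k_i,l}^2$. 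A secondary, purely bookkeeping, subtlety is allocating the expanded cross terms precisely between the "$\delta_i/2$" and "$\sigma^2/(\delta_i\Delta^2)$" coefficients, which matters only because $\Tau_3$ is re-multiplied by $(1-\delta_i)^2(1+\delta_i/2)$ inside the parent Lemma~\ref{lemma:E-1-bound-threshold-clustering}.
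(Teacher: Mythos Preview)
Your decomposition is morally the same as the paper's, but there is a real gap in the step you flag as ``purely bookkeeping.'' With your choice $\|z_j-\E z_i\|^2\le 2\tau_{k_i,l}^2+2\|v_{k_i,l-1}-\E z_i\|^2$, combining the rare-event piece with the clipped piece gives
\[
\E\|y_{j,l}-\E z_i\|^2 \;\le\; c_{k_i,l}^2 \;+\; 2\tau_{k_i,l}^2\Pr(\mathcal I_l)\;+\;\E\big[\|v_{k_i,l-1}-\E z_i\|^2\,\mathbbm 1(\mathcal I_l)\big],
\]
and that last cross term cannot be pushed below $c_{k_i,l}^2$ with only second-moment control on $v_{k_i,l-1}$: on the event $\{\|v_{k_i,l-1}-\E z_i\|\gtrsim\Delta\}$ you have no better bound for $\E[\|v_{k_i,l-1}-\E z_i\|^2\,\mathbbm 1(\cdot)]$ than $c_{k_i,l}^2$ itself (the probability is small, but the squared distance on that event is $\gtrsim\Delta^2$, and the product is $\Theta(c_{k_i,l}^2)$ in the worst case). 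The resampling convention makes $z_j\perp v_{k_i,l-1}$ but does nothing for this tail of $v$. So you end up with a coefficient $\ge 2$ on $c_{k_i,l}^2$, not $1+O(\delta_i)$; after the $(1+\delta_i/2)(1-\delta_i)^2$ multiplication in Lemma~\ref{lemma:E-1-bound-threshold-clustering} the recursion no longer contracts.

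The paper sidesteps the cross term entirely by applying Young's inequality \emph{before} conditioning, with parameter $\delta_i/2$:
\[
\|y_{j,l}-\E z_i\|^2\;\le\;\Big(1+\tfrac{\delta_i}{2}\Big)\|v_{k_i,l-1}-\E z_i\|^2\;+\;\Big(1+\tfrac{2}{\delta_i}\Big)\|y_{j,l}-v_{k_i,l-1}\|^2.
\]
The first term gives exactly $(1+\delta_i/2)c_{k_i,l}^2$ with no indicator attached; the second is deterministically $\le\tau_{k_i,l}^2\,\mathbbm 1(\mathcal I_l)$ because $\|y_{j,l}-v_{k_i,l-1}\|\le\tau_{k_i,l}$ always by construction, so it becomes $(1+2/\delta_i)\tau_{k_i,l}^2\Pr(\mathcal I_l)$, and your Markov bound $\Pr(\mathcal I_l)\lesssim\sigma^2/\Delta^2$ finishes it. Equivalently, your own argument works if you replace the factor $2$ by $(1+\delta_i/2)$ in the on-event bound: then $(1+\delta_i/2)\E[\|v-\E z_i\|^2\mathbbm 1(\mathcal I_l)]+\E[\|v-\E z_i\|^2\mathbbm 1(\mathcal I_l^c)]\le(1+\delta_i/2)c_{k_i,l}^2$ directly, and the cross term never needs a separate argument.
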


\begin{proof}[Proof of Lemma \ref{lemma:Tau-3-bound-threshold-clustering}]
By Young's inequality,
\begin{align}
    \E_{j \in
    \G_i:j \not\sim i}\|y_{j,l} - \E z_i\|^2
    &\leq
    \bigg(1+\frac{\delta_i}{2}\bigg)\E\|v_{k_i,l-1}- \E z_i\|^2+
    \bigg(1+\frac{2}{\delta_i}\bigg)\E_{j \in \G_i:j \not\sim i}\|y_{j,l}-v_{k_i,l-1}\|^2\\
    &\leq
    \bigg(1+\frac{\delta_i}{2}\bigg)c_{k_i,l}^2 + \bigg(1+\frac{2}{\delta_i}\bigg)\E_{j \in \G_i:j \not\sim i}\|y_{j,l}-v_{k_i,l-1}\|^2\\
    &=
    \bigg(1+\frac{\delta_i}{2}\bigg)c_{k_i,l}^2 + \bigg(1+\frac{2}{\delta_i}\bigg)\E_{j \in \G_i:j \not\sim i}[\|z_j-v_{k_i,l-1}\|^2\mathbbm{1}\{\|z_j-v_{k_i,l-1}\|\leq\tau_{k_i,l}\}]
    \\
    &\leq
    \bigg(1+\frac{\delta_i}{2}\bigg)c_{k_i,l}^2 + \bigg(1+\frac{2}{\delta_i}\bigg)\tau_{k_i,l}^2\mathbb{P}_{j \in \G_i:j \not\sim i}(\|z_j-v_{k_i,l-1}\|\leq\tau_{k_i,l}).
\end{align}
We now have to bound the probability in the expression above. Note that if $\|v_{k_i,l-1}-z_j\| \leq \tau_{k_i,l}$, then
\begin{align}
    \|\E z_j - \E z_i\|^2
    &\lesssim
    \|z_j - \E z_j\|^2 + \|z_j - \E v_{k_i,l-1}\|^2 + 
    \|\E v_{k_i,l-1} - \E z_i\|^2 \\
    &\lesssim
    \|z_j - \E z_j\|^2 + \|z_j - \E z_j\|^2 + \|\E z_j - \E v_{k_i,l-1}\|^2 +
    \E\|v_{k_i,l-1} - \E z_i\|^2 + \E\|z_i - \E z_i\|^2\\
    &\lesssim
    \|z_j - \E z_j\|^2 + \tau_{k_i,l}^2 + c_{k_i,l}^2 + \sigma^2.
\end{align}
By Assumption 2, this implies that
\begin{align}
    \Delta^2
    &\lesssim
    \|z_j - \E z_j\|^2 + \tau_{k_i,l}^2 + c_{k_i,l}^2 + \sigma^2
\end{align}
which means that
\begin{align}
    \|z_j - \E z_j\|^2
    &\gtrsim
    \Delta^2 - (\tau_{k_i,l}^2 + c_{k_i,l}^2 + \sigma^2).
\end{align}
By Markov's inequality,
\begin{align}
    \mathbb{P}(\|z_j - \E z_j\|^2
    &\gtrsim
    \Delta^2 - (\tau_{k_i,l}^2 + c_{k_i,l}^2 + \sigma^2))
    \leq
    \frac{\sigma^2}{\Delta^2 - (\tau_{k_i,l}^2 + c_{k_i,l}^2 + \sigma^2)}
    \lesssim
    \frac{\sigma^2}{\Delta^2}
\end{align}
as long as
\begin{align}
    \Delta^2
    &\gtrsim
    \tau_{k_i,l}^2 + c_{k_i,l}^2 + \sigma^2,
\end{align}
which holds due to the constraint on $\Delta$ in the theorem statement. Therefore
\begin{align}
    \E_{j \in
    \G_i:j \not\sim i}\|y_{j,l} - \E z_i\|^2
    &\lesssim
    \bigg(1+\frac{\delta_i}{2}\bigg)c_{k_i,l}^2 + \bigg(1+\frac{2}{\delta_i}\bigg)\frac{(c_{k_i,l}^2 + \delta_i\sigma\Delta)\sigma^2}{\Delta^2}\\
    &\leq
    \bigg(1 + \frac{\delta_i}{2} + \frac{\sigma^2}{\delta_i\Delta^2}\bigg)c_{k_i,l}^2+ \frac{\sigma^3}{\Delta}.
\end{align}
\end{proof}

\begin{lemma}[Clustering Error due to Bad Points]\label{lemma:E-2-bound-threshold-clustering}
\begin{align}
    \E\bigg\|\bigg(\frac{1}{|\B_i|}\sum_{j \in \B_i}y_{j,l}\bigg) - \E z_i\bigg\|^2
    &\lesssim
    c_{k_i,l}^2 + \delta_i\sigma\Delta
\end{align}
\end{lemma}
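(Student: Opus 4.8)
The plan is to exploit the one structural property of the bad points we actually need: the thresholding update rule \eqref{eq:t-c-update-rule} forces \emph{every} clipped value to lie within the thresholding ball around the current center estimate, regardless of how the underlying point was chosen. Concretely, for any index $j$ — good or bad — the definition of $y_{j,l}$ gives $y_{j,l}\in\{z_j, v_{k_i,l-1}\}$; in the first case $\|z_j - v_{k_i,l-1}\|\le\tau_{k_i,l}$ by the indicator, and in the second case $y_{j,l}-v_{k_i,l-1}=0$. Hence $\|y_{j,l}-v_{k_i,l-1}\|\le\tau_{k_i,l}$ \emph{pathwise}, even if the adversary picks $z_j$ as a function of $v_{k_i,l-1}$.

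Given this, I would decompose
\[
\frac{1}{|\B_i|}\sum_{j\in\B_i}y_{j,l} - \E z_i = \Big(\frac{1}{|\B_i|}\sum_{j\in\B_i}y_{j,l} - v_{k_i,l-1}\Big) + \big(v_{k_i,l-1} - \E z_i\big),
\]
apply $\|a+b\|^2\le 2\|a\|^2+2\|b\|^2$, and bound the two pieces separately. For the first piece, the triangle inequality applied to the average together with the clipping bound above gives $\big\|\frac{1}{|\B_i|}\sum_{j\in\B_i}y_{j,l} - v_{k_i,l-1}\big\|\le\frac{1}{|\B_i|}\sum_{j\in\B_i}\|y_{j,l}-v_{k_i,l-1}\|\le\tau_{k_i,l}$, so its squared expectation is at most $\tau_{k_i,l}^2$. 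For the second piece, $\E\|v_{k_i,l-1}-\E z_i\|^2 = c_{k_i,l}^2$ by the definition of $c_{k_i,l}^2$. Combining and substituting the chosen radius $\tau_{k_i,l}^2\approx c_{k_i,l}^2 + \delta_i\sigma\Delta$ yields $\E\big\|\frac{1}{|\B_i|}\sum_{j\in\B_i}y_{j,l} - \E z_i\big\|^2\lesssim c_{k_i,l}^2 + \delta_i\sigma\Delta$, as claimed.

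There is no real obstacle here: the entire content is the observation that clipping caps the influence of \emph{any} point — adversarial or not — at distance $\tau_{k_i,l}$ from the current center estimate, which is precisely why the extra error contributed by the bad points scales with $\tau_{k_i,l}^2\approx\delta_i\sigma\Delta$ rather than being unbounded. The only thing to be careful about is that the clipping bound must be invoked before taking expectations, since the adversary may correlate $z_j$ with $v_{k_i,l-1}$; since that bound is pathwise, this causes no difficulty. This step is also where the threat-model discussion becomes rigorous: an adversary placing its point exactly at distance $\tau_{k_i,l}$ from $v_{k_i,l-1}$ makes the bound essentially tight, matching the $\beta_i\sigma\Delta$ term in \eqref{eq:threshold-clustering-rate}.
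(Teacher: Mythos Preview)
Your proposal is correct and follows essentially the same route as the paper: decompose through the current center $v_{k_i,l-1}$, use the pathwise clipping bound $\|y_{j,l}-v_{k_i,l-1}\|\le\tau_{k_i,l}$ to control the first piece, invoke the definition $\E\|v_{k_i,l-1}-\E z_i\|^2=c_{k_i,l}^2$ for the second, and substitute $\tau_{k_i,l}^2\approx c_{k_i,l}^2+\delta_i\sigma\Delta$. The only cosmetic difference is that the paper first applies Jensen to pass to a single term $\E_{j\in\B_i}\|y_{j,l}-\E z_i\|^2$ before decomposing, whereas you decompose the average directly and apply the triangle inequality; both arrive at the same bound.
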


\begin{proof}[Proof of Lemma \ref{lemma:E-2-bound-threshold-clustering}]
\begin{align}
    \E\bigg\|\bigg(\frac{1}{|\B_i|}\sum_{j \in \B_i}y_{j,l}\bigg) - \E z_i\bigg\|^2
    &\leq
    \E_{j \in \B_i}\|y_{j,l} - \E z_i\|^2\\
    &\lesssim
    \E_{j \in \B_i}\|y_{j,l} - v_{k_i,l-1}\|^2 + \E\|v_{k_i,l-1} - \E z_i\|^2\\
    &\lesssim
    c_{k_i,l}^2 + \delta_i\sigma\Delta.
\end{align}
The last inequality follows from the intuition that bad points will position themselves at the edge of the thresholding ball, a distance $\tau_{k_i,l}$ away from the current center-estimate $v_{k_i,l-1}$. Therefore we cannot do better than upper-bounding $\E_{j \in \B_i}\|y_{j,l} - v_{k_i,l-1}\|^2$ by $\tau_{k_i,l}^2 \approx c_{k_i,l}^2 + \delta_i\sigma\Delta$, the squared-radius of the ball.
\end{proof}

\subsection{Proof of Theorem \ref{thm:thresholding-lower-bound}}\label{proof:thresholding-lower-bound}
\begin{proof}[Proof of Theorem \ref{thm:thresholding-lower-bound}]
Let
\begin{equation*}
    \D_1 = 
    \left\{
        \begin{array}{@{}ll@{}}
        \delta & \textnormal{w.p. } p \\
        0 & \textnormal{w.p. } 1-p
        \end{array}
    \right.
\end{equation*}
and
\begin{equation*}
    \D_2 = 
    \left\{
        \begin{array}{@{}ll@{}}
        \delta & \textnormal{w.p. } 1-p \\
        0 & \textnormal{w.p. } p
        \end{array}
    \right.
\end{equation*}
and define the mixture $\M = \frac{1}{2}\D_1 + \frac{1}{2}\D_2$. Also consider the mixture $\tilde{\M} = \frac{1}{2}\tilde{\D}_1 + \frac{1}{2}\tilde{\D}_2$, where
$\tilde{\D}_1=0$ and $\tilde{\D}_2=\delta$. It is impossible to distinguish whether a sample comes from $\M$ or $\tilde{\M}$. Therefore, if you at least know a sample came from either $\M$ or $\tilde{\M}$ but not which one, the best you can do is to estimate $\mu_1$ with $\hat{\mu}_1 = \frac{\delta p}{2}$, half-way between the mean of $\D_1$, which is $\delta p$, and the mean of $\tilde{\D}_1$, which is $0$. In this case
\begin{equation*}
    \E\|\hat{\mu}_1-\mu_1\|^2 = \frac{\delta^2 p^2}{4}.
\end{equation*}
If $p \leq \frac{1}{2}$, then
\begin{equation}
    \Delta = (1-p)\delta - p\delta = (1-2p)\delta.
    \label{eq:Delta-def}
\end{equation}
Also, 
\begin{equation}
    \sigma^2 = \delta^2 p(1-p).
    \label{eq:sigma-def}
\end{equation}
Equating $\delta^2$ in \eqref{eq:Delta-def} and \eqref{eq:sigma-def},
\begin{align}
    \frac{\Delta^2}{(1-2p)^2} = \frac{\sigma^2}{p(1-p)},
\end{align}
which can be rearranged to
\begin{equation*}
    (4\sigma^2+\Delta^2)p^2 - (4\sigma^2 + \Delta^2)p + \sigma^2 = 0.
\end{equation*}
Solving for $p$,
\begin{equation}
    p = \frac{1}{2} - \frac{\Delta}{2\sqrt{4\sigma^2 + \Delta^2}}.
    \label{eq:p-def}
\end{equation}
Note that,
\begin{align}
    \frac{\delta^2 p^2}{4}
    &=
    \frac{\sigma^2 p^2}{4p(1-p)} \nonumber\\
    &=
    \frac{\sigma^2p}{4(1-p)}.
    \label{eq:estimator-gap}
\end{align}
Plugging the expression for $p$ from \eqref{eq:p-def} into \eqref{eq:estimator-gap}, we can see that
\begin{align}
    \frac{\delta^2 p^2}{4}
    &=
    \frac{\sigma^2}{4}\bigg(\frac{\sqrt{4\sigma^2 + \Delta^2}-\Delta}{\Delta}\bigg) = 
    \frac{\sigma^2}{4}\bigg(\sqrt{1 + \frac{4\sigma^2}{\Delta^2}}-1\bigg) \geq \frac{\sigma^2}{4}\bigg(\frac{2\sigma^2}{\Delta^2} - \frac{2\sigma^4}{\Delta^4}  \bigg).
    \label{eq:simplified-estimator-gap}
\end{align}
The last step used an immediately verifiable inequality that $\sqrt{1 + x} \geq 1 + \frac{x}{2} - \frac{x^2}{8}$ for all $x\in [0, 8]$. Finally, we can choose $\Delta^2 \geq 2 \sigma^2$ to give the result that
\[
    \E\|\hat{\mu}_1-\mu_1\|^2 \geq \frac{\delta^2 p^2}{4} \geq \frac{\sigma^4}{4 \Delta^2}\,.
\]

Finally, suppose that there is only a single cluster with $K=1$. Then, given $n$ stochastic samples. standard information theoretic lower bounds show that we will have an error at least
\[
\E\|\hat{\mu}_1-\mu_1\|^2 \geq \frac{\sigma^2}{4n}\,.
\]
Combining these two lower bounds yields the proof of the theorem.
\end{proof}

\subsection{Proof of Lemma \ref{lemma:assumptions 4,5 lemma}}
\begin{proof}\label{proof:assumptions 4,5 lemma}
For $h$ an $L$-smooth function and $g$ a $\mu$-strongly-convex function with shared optimum $x^*$, the following inequality holds for all $x$:
\begin{align}
    \|\nabla h(x)\|^2
    &\leq
    \bigg(\frac{L}{\mu}\bigg)^2\|\nabla g(x)\|^2\label{eq: cor smooth-str-cvx-inequality}.
\end{align}
To see this, note that by $L$-smoothness of $h$
\begin{align}
    \|\nabla h(x)\|^2
    &=
    \|\nabla h(x)-\nabla h(x^*)\|^2
    \leq
    L^2\|x-x^*\|^2\label{eq:cor l-smooth-inequality}.
\end{align}
By $\mu$-strong-convexity of $g$ and Cauchy-Schwarz inequality,
\begin{align}
    \mu\|x-x^*\|^2
    &\leq
    \langle \nabla g(x)-\nabla g(x^*),x-x^* \rangle
    \leq
    \|\nabla g(x)\|\|x-x^*\|\label{eq:cor strong-cvx-inequality}.
\end{align}
Rearranging terms in \eqref{eq:cor strong-cvx-inequality}, squaring both sides, and combining it with \eqref{eq:cor l-smooth-inequality} gives \eqref{eq: cor smooth-str-cvx-inequality}.

We can now apply \eqref{eq: cor smooth-str-cvx-inequality} to show that Assumptions 4 and 5 hold. 

For Assumption 4, let $h(x) = f_i(x) - \nabla \bar{f}_i(x)$ and $g(x) = \nabla \bar{f}_i(x)$. Thus $h$ and $g$ have the same optimum. Since the average of $\mu$-strongly-convex functions is $\mu$-strongly-convex, $g$ is $\mu$-strongly-convex. By $L$-smoothness of $f_i$,
\begin{align}
    \|(\nabla f_i(x) - \nabla\bar{f}_i(x))-(\nabla f_i(y) - \nabla\bar{f}_i(y))\|
    &\leq
    \|\nabla f_i(x)-\nabla f_i(y)\| + \|\nabla\bar{f}_i(x)) - \nabla\bar{f}_i(y)\|\\
    &\leq
    2L\|x-y\|,
\end{align}
showing that $h$ is $2L$-smooth. Therefore, by \eqref{eq: cor smooth-str-cvx-inequality}
\begin{align}
    \|\nabla f_i(x) - \nabla\bar{f}_i(x)\|^2
    &\leq
    \bigg(\frac{2L}{\mu}\bigg)^2\|\nabla\bar{f}_i(x)\|^2,
\end{align}
which shows that Assumption 4 is satisfied with $A=\nicefrac{2L}{\mu}$.

For Assumption 5, let $x_i^*$ be client $i$'s optimum (equivalently the optimum of all clients in client $i$'s cluster). 
\begin{align}
    \|\nabla f_i(x) - \nabla f_j(x)\|^2
    &=
    \|\nabla f_i(x) - (\nabla f_j(x) - \nabla f_j(x_i^*)) - (\nabla f_j(x_i^*) - \nabla f_i(x_i^*)))\|^2\\
    &\overset{(i)}{\geq}
    \frac{1}{2}\|\nabla f_j(x_i^*) - \nabla f_i(x_i^*)\|^2 - \|\nabla f_i(x) - (\nabla f_j(x) - \nabla f_j(x_i^*))\|^2\\
    &\overset{(ii)}{\geq}
    \frac{1}{2}\|\nabla f_j(x_i^*) - \nabla f_i(x_i^*)\|^2 - 2\|\nabla f_i(x)\|^2 - 2\|\nabla f_j(x) - \nabla f_j(x_i^*)\|^2\\
    &\overset{(iii)}{\geq}
    \frac{1}{2}\|\nabla f_j(x_i^*) - \nabla f_i(x_i^*)\|^2 - 2\|\nabla f_i(x)\|^2 - 2(\nicefrac{L}{\mu})^2\|\nabla f_i(x)\|^2\\
    &=
    \frac{1}{2}\|\nabla f_j(x_i^*) - \nabla f_i(x_i^*)\|^2 - 2(1+(\nicefrac{L}{\mu})^2)\|\nabla f_i(x)\|^2\\
    &=
    \frac{1}{2}\|\nabla f_j(x_i^*)\|^2 - 2(1+(\nicefrac{L}{\mu})^2)\|\nabla f_i(x)\|^2\label{eq:cor A5},
\end{align}
where justification for the steps are:
\begin{itemize}
    \item (i) For all $a$, $b$, it holds that $(a-b)^2 \geq \frac{1}{2}b^2 - a^2$, since this inequality can be rearranged to state $(\nicefrac{b}{\sqrt{2}}-\sqrt{2}a)^2 \geq 0$.
    \item (ii) Young's inequality.
    \item (iii) Set $h(x) = f_j(x)-\langle f_j(x_i^*),x \rangle$ and $g(x) = f_i(x)$. Then $\nabla h(x) = \nabla f_j(x) - \nabla f_j(x_i^*)$, from which we see that $h$ and $g$ have the same optimum $x_i^*$ and $h$ is $L$-smooth (since $\|\nabla h(x) - \nabla h(y)\| = \|(\nabla f_j(x) - \nabla f_j(x_i^*)) - (\nabla f_j(y) - \nabla f_j(x_i^*))\| = L\|x-y\|$). Applying \eqref{eq: cor smooth-str-cvx-inequality} gives the desired result.
\end{itemize}
Therefore \eqref{eq:cor A5} shows that Assumption 5 is satisfied with $\Delta=\frac{1}{\sqrt{2}}\max_{j \not\sim i}\|\nabla f_j(x_i^*)\|$ and $D=\sqrt{2(1+(\nicefrac{L}{\mu})^2)}$.
\end{proof}

\subsection{Proof of Theorem \ref{thm:federated-clustering}}\label{proof:federated-clustering}
First we establish some notation.
\paragraph{Notation.} 
\begin{itemize}
    \item $\G_i$ are the good clients and $\B_i$ the bad clients from client $i$'s perspective. Therefore $|\G_i| + |\B_i| = N$.
    \item $\E_x$ denotes conditional expectation given the parameter, e.g. $\E_x g(x) = \E[g(x)|x]$. $\E$ denotes expectation over all randomness.
    \item $\overline{X_t} \triangleq \frac{1}{T}\sum_{t=1}^T X_t$ for a general variable $X_t$ indexed by $t$.
    \item We use $f_i(x)$ to denote average loss on a general batch $B$ of samples. That is, if $f_i(x;b)$ is the loss on a single sample $b$, we define $f_i(x) = \frac{1}{|B|}\sum_{b \in B}f_i(x;b)$.
    \item $\bar{f}_i(x) \triangleq \frac{1}{n_i}\sum_{j \in \G_i:j \sim i}f_j(x)$
    \item We introduce a variable $\rho^2$ to bound the variance of the gradients
    \begin{align}
        \E\|g_i(x) - \E_x g_i(x)\|^2 
        &\leq
        \rho^2,
    \end{align}
    and show in Lemma \ref{lemma:variance-reduction-batches} how this can be written in terms of the variance of the gradients computed over a batch size of 1.
    \item $l_t$ is the number of rounds that Threshold-Clustering is run in round $t$ of Federated-Clustering.
    \item $k_i$ denotes the cluster to which client $i$ is assigned.
    \item $v_{i,l,t}$ denotes the gradient update for client $i$ in round $t$ of Federated-Clustering and round $l$ of Threshold-Clustering. That is, $v_{i,l_t,t}$ corresponds to the quantity returned in Step 6 of Algorithm \ref{alg:federated-clustering}.
    \item To facilitate the proof, we introduce a variable $c_{k_i,l,t}$ that quantifies the distance from the cluster-center-estimates to  the true cluster means. Specifically, for client $i$'s cluster $k_i$ at round $t$ of Federated-Clustering and round $l$ of Threshold-Clustering we set
    \begin{align}
        c_{k_i,l,t}^2 = \E\|v_{i,l-1,t} - \E_x \bar{g}_i(x_{i,t-1})\|^2.
    \end{align}
    \item For client $i$'s cluster $k_i$ at round $t$ of Federated-Clustering and round $l$ of Threshold-Clustering, we use thresholding radius
    \begin{align}
       \tau_{k_i,l,t}^2 
       \approx
        c_{k_i,l,t}^2+A^4\E\|\nabla \bar{f}_i(x_{i,t-1})\|^2 + \delta_i\rho\Delta.
    \end{align}
    \item Finally, we introduce a variable $y_{j,l,t}$ to denote the points clipped by Threshold-Clustering:
    \begin{align}
        v_{i,l,t} = \frac{1}{N}\sum_{j \in [N]} \underbrace{ \mathbbm{1}(\|g_j(x_{i,t-1}) - v_{i,l-1,t}\| \leq \tau_{k_i^t,l}) + v_{i,l-1,t}\mathbbm{1}(\|g_j(x_{i,t-1}) - v_{i,l-1,t}\| > \tau_{k_i^t,l}}_{y_{j,l,t}}).
    \end{align}
\end{itemize}

\begin{proof}[Proof of Theorem \ref{thm:federated-clustering}]
In this proof, our goal is to bound $\overline{\E\|\nabla f_i(x_{i,t-1})\|^2}$ for each client $i$, thus showing convergence. Recall that our thresholding procedure clusters the gradients of clients at each round and estimates the center of each cluster. These estimates are then used to update the parameters of the clusters. Therefore, we expect $\overline{\E\|\nabla f_i(x_{i,t-1})\|^2}$ to be bounded in terms of the error of this estimation procedure. The following sequence of inequalities shows this.

By $L$-smoothness of $f_i$ and setting $\eta \leq \nicefrac{1}{L}$,
\begin{align}
    f_{i}(x_{i,t})
    &\leq
    f_{i}(x_{i,t-1}) + \langle\nabla f_i(x_{i,t-1}),x_{i,t}-x_{i,t-1}\rangle + \frac{L}{2}\|x_{i,t}-x_{i,t-1}\|^2 \nonumber \\
    &=
    f_{i}(x_{i,t-1}) - \eta\langle\nabla f_{i}(x_{i,t-1}),v_{i,l_t,t}\rangle + \frac{L\eta^2}{2}\|v_{i,l_t,t}\|^2 \nonumber\\
    &=
    f_{i}(x_{i,t-1}) + \frac{\eta}{2}\|v_{i,l_t,t}-\nabla f_{i}(x_{i,t-1})\|^2 - \frac{\eta}{2}\|\nabla f_{i}(x_{i,t-1})\|^2 - \frac{\eta}{2}(1-L\eta)\|v_{i,l_t,t}\|^2 \nonumber\\
    &\leq
    f_{i}(x_{i,t-1}) + \eta\|v_{i,l_t,t}-\nabla \bar{f}_{i}(x_{i,t-1})\|^2 + \eta A^2\|\nabla \bar{f}_{i}(x_{i,t-1})\|^2- \frac{\eta}{2}\|\nabla f_{i}(x_{i,t-1})\|^2 - \frac{\eta}{2}(1-L\eta)\|v_{i,l_t,t}\|^2.
    \label{eq:L-smooth-inequality-f_i}
\end{align}
Recall that $v_{i,l_t,t}$ is client $i$'s cluster-center estimate at round $t$ of optimization, so the second term on the right side of \eqref{eq:L-smooth-inequality-f_i} is the error due to the clustering procedure. In Lemma \ref{lemma:cluster-center-error-federated-clustering}, we show that, in expectation, this error is bounded by
\begin{align}
    \delta_i\overline{\E\|\nabla\bar{f}_i(x_{i,t-1})\|^2} + \frac{\rho}{\Delta}D^2\overline{\E\|\nabla f_i(x_{i,t-1})\|^2} + \bigg(\frac{\rho^2}{n_i} + \frac{\rho^3}{\Delta} + \beta_i\rho\Delta\bigg).
\end{align}
Therefore, subtracting $f_i^*$ from both sides, summing \eqref{eq:L-smooth-inequality-f_i} over $t$, dividing by $T$, taking expectations, applying Lemma \ref{lemma:cluster-center-error-federated-clustering} to \eqref{eq:L-smooth-inequality-f_i}, and applying the constraint on $\Delta$ from the theorem statement, we have
\begin{align}
    \eta\overline{\E\|\nabla f_i(x_{i,t-1})\|^2}
    &\lesssim
    \frac{\E(f_i(x_{i,0}) - f_i^*)}{T} + \eta A^2\overline{\E\|\nabla \bar{f}_i(x_{i,t-1})\|^2}+ \eta\bigg(\frac{\rho^2}{n_i} + \frac{\rho^3}{\Delta} + \beta_i\rho\Delta\bigg). \label{eq:grad-f_i-inequality}
\end{align}

The third term on the right side of \eqref{eq:L-smooth-inequality-f_i} reflects the fact that clients in the same cluster may have different loss objectives. Far from their optima, these loss objectives may look very different and therefore be hard to cluster together. 

In order to bound this term, we use a similar argument as above. By $L$-smoothness of $f_i$'s and setting $\eta \leq \nicefrac{1}{L}$,
\begin{align}
    \bar{f}_{i}(x_{i,t})
    &\leq
    \bar{f}_{i}(x_{i,t-1}) + \langle\nabla \bar{f}_i(x_{i,t-1}),x_{i,t}-x_{i,t-1}\rangle + \frac{L}{2}\|x_{i,t}-x_{i,t-1}\|^2\nonumber\\
    &=
    \bar{f}_{i}(x_{i,t-1}) - \eta\langle\nabla \bar{f}_{i}(x_{i,t-1}),v_{i,l_t,t}\rangle + \frac{L\eta^2}{2}\|v_{i,l_t,t}\|^2 \nonumber\\
    &=
    \bar{f}_{i}(x_{i,t-1}) + \frac{\eta}{2}\|v_{i,l_t,t}-\nabla \bar{f}_{i}(x_{i,t-1})\|^2 - \frac{\eta}{2}\|\nabla \bar{f}_{i}(x_{i,t-1})\|^2 - \frac{\eta}{2}(1-L\eta)\|v_{i,l_t,t}\|^2\label{eq:avg-f-opt}.
\end{align}
Subtracting $\bar{f}_i^*$ from both sides, summing over $t$, dividing by $T$, taking expectations, and applying Lemma \ref{lemma:cluster-center-error-federated-clustering},
\begin{align}
    \eta\overline{\E\|\nabla \bar{f}_i(x_{i,t-1})\|^2}
    &\lesssim
    \frac{\E(\bar{f}_i(x_{i,0}) - \bar{f}_i^*)}{T} + \frac{\eta\rho}{\Delta}D^2\overline{\E\|\nabla f_i(x_{i,t-1})\|^2} + \eta\bigg(\frac{\rho^2}{n_i} + \frac{\rho^3}{\Delta} + \beta_i\rho\Delta\bigg).\label{eq:grad-bar-f_i-inequality}
\end{align}
Combining \eqref{eq:grad-f_i-inequality} and \eqref{eq:grad-bar-f_i-inequality}, and applying the constraint on $\Delta$ from the theorem statement,
we have that
\begin{align}
    \eta\overline{\E\|\nabla f_i(x_{i,t-1})\|^2}
    &\lesssim
    \frac{\E(f_i(x_{i,0}) - f_i^*) + A^2\E(\bar{f}_i(x_{0}) - \bar{f}_i^*)}{T} + \eta \max(1,A^2)\bigg(\frac{\rho^2}{n_i} + \frac{\rho^3}{\Delta} + \beta_i\rho\Delta\bigg)\label{eq:rate-with-eta-federated-clustering}.
\end{align}
Dividing both sides of \eqref{eq:rate-with-eta-federated-clustering} by $\eta = \nicefrac{1}{L}$ and noting that $\rho^2 = \nicefrac{\sigma^2}{|B|}$ from Lemma \ref{lemma:variance-reduction-batches},
\begin{align}
    \overline{\E\|\nabla f_i(x_{i,t-1})\|^2}
    &\lesssim
    \frac{\E(f_i(x_{i,0}) - f_i^*) + A^2\E(\bar{f}_i(x_{i,0}) - \bar{f}_i^*)}{\eta T} + \max(1,A^2)\bigg(\frac{\rho^2}{n_i} + \frac{\rho^3}{\Delta} + \beta_i\rho\Delta\bigg)\label{eq:full-batch-gd}\\
    &\leq
    \frac{L(\E(f_i(x_{i,0}) - f_i^*) + A^2\E(\bar{f}_i(x_{i,0}) - \bar{f}_i^*))}{T} + \frac{\max(1,A^2)(\nicefrac{\sigma^2}{n_i} + \nicefrac{\sigma^3}{\Delta} + \beta_i\sigma\Delta)}{\sqrt{|B|}}\\
    &\lesssim
    \sqrt{\frac{\max(1,A^2)(\nicefrac{\sigma^2}{n_i} + \nicefrac{\sigma^3}{\Delta} + \beta_i\sigma\Delta)}{T}},
\end{align}
where the last inequality follows from setting 
\begin{align}
    |B|
    \gtrsim
    \max(1,A^2)\bigg(\frac{\sigma^2}{n_i} + \frac{\sigma^3}{\Delta} + \beta_i\sigma\Delta\bigg)T.
\end{align}
Since $T = \nicefrac{m_i}{|B|}$, this is equivalent to setting
\begin{align}
    |B|
    \gtrsim
    \sqrt{\max(1,A^2)\bigg(\frac{\sigma^2}{n_i} + \frac{\sigma^3}{\Delta} + \beta_i\sigma\Delta\bigg)m_i}.
\end{align}
\end{proof}

\begin{lemma}[Variance reduction using batches]\label{lemma:variance-reduction-batches}
If, for a single sample $b$,
\begin{align}
    \E_x\|g_i(x;b) - \E_x g_i(x;b)\|^2 \leq \sigma^2,
\end{align}
then for a batch $B$ of samples,
\begin{align}
    \E_x\|g_i(x) - \E_x g_i(x)\|^2
    &\leq
    \frac{\sigma^2}{|B|}.
\end{align}
\end{lemma}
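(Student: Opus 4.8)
The plan is to directly expand the batch gradient as an average of per-sample gradients and exploit their independence; this is just the classical variance-of-the-mean computation. Recall $g_i(x) = \frac{1}{|B|}\sum_{b \in B} g_i(x;b)$, where the samples $b$ comprising the batch $B$ are drawn independently (per the sampling model underlying Assumption 6). I would introduce the centered per-sample gradient $\xi_b \triangleq g_i(x;b) - \E_x g_i(x;b)$. By linearity of conditional expectation, $\E_x g_i(x) = \frac{1}{|B|}\sum_{b \in B}\E_x g_i(x;b)$, so $g_i(x) - \E_x g_i(x) = \frac{1}{|B|}\sum_{b \in B}\xi_b$, and each $\xi_b$ satisfies $\E_x\xi_b = 0$ together with $\E_x\|\xi_b\|^2 \leq \sigma^2$ by hypothesis.

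Next I would expand the squared norm and split into diagonal and off-diagonal terms:
\[ \E_x\|g_i(x) - \E_x g_i(x)\|^2 = \frac{1}{|B|^2}\sum_{b \in B}\sum_{b' \in B}\E_x\langle \xi_b, \xi_{b'}\rangle. \]
For $b \neq b'$, conditional independence of the two samples together with $\E_x\xi_b = \E_x\xi_{b'} = 0$ gives $\E_x\langle\xi_b,\xi_{b'}\rangle = \langle \E_x\xi_b, \E_x\xi_{b'}\rangle = 0$, so only the $|B|$ diagonal terms survive. Bounding each diagonal term by $\sigma^2$ yields
\[ \E_x\|g_i(x) - \E_x g_i(x)\|^2 = \frac{1}{|B|^2}\sum_{b \in B}\E_x\|\xi_b\|^2 \leq \frac{|B|\sigma^2}{|B|^2} = \frac{\sigma^2}{|B|}, \]
which is exactly the claim.

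There is no substantial obstacle here: the only point needing care is the vanishing of the cross-terms, which relies on the samples in a batch being drawn independently conditionally on $x$ and on $\E_x$ commuting with the finite sum. If one instead sampled without replacement from client $i$'s dataset of size $m_i$, the cross-covariances would be nonpositive rather than exactly zero, contributing an extra factor $1 - \frac{|B|-1}{m_i-1} \le 1$ that only strengthens the bound; hence the stated inequality holds in either regime, and it is enough to present the i.i.d. (with-replacement) computation above.
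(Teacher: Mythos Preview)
Your proof is correct and matches the paper's approach exactly: both expand the batch gradient as an average, invoke independence and unbiasedness to eliminate the cross-terms, and bound each surviving diagonal term by $\sigma^2$. The paper simply states the identity $\E_x\|g_i(x) - \E_x g_i(x)\|^2 = \frac{1}{|B|^2}\sum_{b \in B}\E_x\|g_i(x;b) - \E_x g_i(x;b)\|^2$ in one line, whereas you spell out the cross-term cancellation and add the (correct but unnecessary) remark about sampling without replacement.
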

\begin{proof}[Proof of Lemma \ref{lemma:variance-reduction-batches}]
Due to the independence and unbiasedness of stochastic gradients,
\begin{align}
    \E_x\|g_i(x) - \E_x g_i(x)\|^2
    &=
    \frac{1}{|B|^2}\sum_{b \in B}\E_x\|g_i(x;b) - \E_x g_i(x;b)\|^2\\
    &\leq
    \frac{\sigma^2}{|B|}.
\end{align}
\end{proof}

\begin{lemma}[Bound on Clustering Error]\label{lemma:cluster-center-error-federated-clustering}
\begin{align}
    \overline{\E\|v_{i,l_t,t} - \nabla\bar{f}_i(x_{i,t-1})\|^2}
    &\lesssim
    \delta_i\overline{\E\|\nabla\bar{f}_i(x_{i,t-1})\|^2} + \frac{\rho}{\Delta}D^2\overline{\E\|\nabla f_i(x_{i,t-1})\|^2} + \bigg(\frac{\rho^2}{n_i} + \frac{\rho^3}{\Delta} + \beta_i\rho\Delta\bigg)
\end{align}
\end{lemma}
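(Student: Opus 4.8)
The plan is to replay the entire argument of Theorem~\ref{thm:threshold-clustering} one optimization round at a time. Fix $t$ and condition on $x_{i,t-1}$; in round $t$ the algorithm hands Threshold-Clustering the $N$ stochastic batch-gradients $\{g_j(x_{i,t-1})\}_{j\in[N]}$ with the single-cluster initialization $v_{i,0,t}=g_i(x_{i,t-1})$, so by Assumption~6 and Assumption~4 the initial gap satisfies $c_{k_i,1,t}^2=\E\|g_i(x_{i,t-1})-\nabla\bar f_i(x_{i,t-1})\|^2\lesssim \rho^2+A^2\E\|\nabla\bar f_i(x_{i,t-1})\|^2$. The key substitution is that the role of the intra-cluster variance $\sigma^2$ from Theorem~\ref{thm:threshold-clustering} is now played by $\rho^2+A^2\|\nabla\bar f_i(x_{i,t-1})\|^2$ (Assumption~4 controls how far the individual in-cluster gradients $\nabla f_j(x_{i,t-1})$ drift from their mean $\nabla\bar f_i(x_{i,t-1})$, Assumption~6 controls the stochastic noise on top), and the role of the separation $\Delta^2$ is now played by the eroded quantity $\Delta^2-D^2\|\nabla f_i(x_{i,t-1})\|^2$ from Assumption~5.

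First I would repeat the good/bad split of steps (i)--(ii) in the proof of Theorem~\ref{thm:threshold-clustering}: Young's inequality gives $\E\|v_{i,l,t}-\nabla\bar f_i(x_{i,t-1})\|^2\lesssim \mathcal{E}_1+\beta_i\mathcal{E}_2$, with the bad-client error $\mathcal{E}_2\lesssim c_{k_i,l,t}^2+\delta_i\rho\Delta$ exactly as in Lemma~\ref{lemma:E-2-bound-threshold-clustering} (the worst a malicious client can do is sit at distance $\tau_{k_i,l,t}\approx\sqrt{\delta_i\rho\Delta}$ from the running center). For $\mathcal{E}_1$ I would reuse the three-way split of Lemma~\ref{lemma:E-1-bound-threshold-clustering}: $\Tau_1$, the clipping bias on in-cluster gradients, where one bounds $\E\|g_j(x_{i,t-1})-v_{i,l-1,t}\|^2\lesssim c_{k_i,l,t}^2+A^2\E\|\nabla\bar f_i(x_{i,t-1})\|^2+\rho^2$, divides by $\tau_{k_i,l,t}$, uses Jensen, and plugs in $\tau_{k_i,l,t}^2\approx c_{k_i,l,t}^2+A^4\E\|\nabla\bar f_i(x_{i,t-1})\|^2+\delta_i\rho\Delta$ (accounting for the $g_j$ vs.\ $\nabla f_j$ gap with an extra $\rho^2/n_i$ term from unbiasedness and independence); $\Tau_2$, the averaged stochastic variance of the clipped in-cluster gradients, bounded by $\rho^2 n_i/|\G_i|^2$ by contractivity, with the same "resample per round for independence'' caveat as in Lemma~\ref{lemma:Tau-2-bound-threshold-clustering}; and $\Tau_3$, the error from out-of-cluster gradients that survive the clip. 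The last is where Assumption~5 enters: if an out-of-cluster $g_j(x_{i,t-1})$ lands inside the ball, then by the triangle inequality together with Assumption~5,
\[
  \|g_j(x_{i,t-1})-\nabla f_j(x_{i,t-1})\|^2\gtrsim \Delta^2-D^2\|\nabla f_i(x_{i,t-1})\|^2-\big(\tau_{k_i,l,t}^2+c_{k_i,l,t}^2+\rho^2+A^2\|\nabla\bar f_i(x_{i,t-1})\|^2\big),
\]
and Markov's inequality on the left-hand side (whose expectation is at most $\rho^2$) bounds the probability of clip-survival by $\lesssim\rho^2/\Delta^2$ whenever $\Delta$ dominates the subtracted terms; carried through the $(1+2/\delta_i)\tau_{k_i,l,t}^2\,\mathbb{P}(\cdot)$ bookkeeping of Lemma~\ref{lemma:Tau-3-bound-threshold-clustering} this produces the $\tfrac{\rho^3}{\Delta}$ term, a $(1+\tfrac{\rho^2}{\delta_i\Delta^2})c_{k_i,l,t}^2$ term absorbed into the contraction factor, and the genuinely new $\tfrac{\rho}{\Delta}D^2\|\nabla f_i(x_{i,t-1})\|^2$ term coming from the $D^2\|\nabla f_i\|^2$ piece of the eroded gap.

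Assembling the pieces, using $\beta_i\lesssim\delta_i$ and $\Delta\gtrsim\max(1,A^4)\max(1,D^2)\sigma/\delta_i$ exactly as in step (iii) of the proof of Theorem~\ref{thm:threshold-clustering}, I would obtain for each $l$ the recursion
\[
  c_{k_i,l+1,t}^2\lesssim \Big(1-\tfrac{\delta_i}{2}\Big)c_{k_i,l,t}^2+\delta_i\E\|\nabla\bar f_i(x_{i,t-1})\|^2+\tfrac{\rho}{\Delta}D^2\E\|\nabla f_i(x_{i,t-1})\|^2+\Big(\tfrac{\rho^2}{n_i}+\tfrac{\rho^3}{\Delta}+\beta_i\rho\Delta\Big),
\]
and then unroll over $l=1,\dots,l_t$ as in steps (iv)--(vi) there. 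The lower bound on $l_t$ assumed in Theorem~\ref{thm:federated-clustering} is precisely what forces $(1-\delta_i/2)^{l_t}\rho^2\lesssim\rho^3/\Delta$ and $(1-\delta_i/2)^{l_t}A^2\E\|\nabla\bar f_i(x_{i,t-1})\|^2\lesssim\delta_i\E\|\nabla\bar f_i(x_{i,t-1})\|^2$ (the latter using the $\max(1,A^4)$ factor in the hypothesis on $\Delta$), killing the geometric prefactor on $c_{k_i,1,t}^2$. Since $\overline{\E\|v_{i,l_t,t}-\nabla\bar f_i(x_{i,t-1})\|^2}=\overline{c_{k_i,l_t+1,t}^2}$, averaging the resulting per-round bound over $t=1,\dots,T$ yields the claim.

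The main obstacle is the $\Tau_3$ step: the effective separation $\Delta^2-D^2\|\nabla f_i(x_{i,t-1})\|^2$ is not a constant and can be small or even negative when $\|\nabla f_i(x_{i,t-1})\|$ is large, so the clean Markov bound $\mathbb{P}(\cdot)\lesssim\rho^2/\Delta^2$ only holds in the regime $D^2\|\nabla f_i\|^2\lesssim\Delta^2$. In the complementary regime one must instead exploit the fact that the \emph{allowed} term $\tfrac{\rho}{\Delta}D^2\|\nabla f_i(x_{i,t-1})\|^2$ is itself $\gtrsim\rho\Delta$, which is large enough to dominate the crude bound $\tau_{k_i,l,t}^2$ on the out-of-cluster contribution; making this dichotomy interact cleanly with the per-$l$ recursion, so that the coefficient of $c_{k_i,l,t}^2$ stays strictly below one in both regimes, is the delicate part. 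A secondary, purely bookkeeping hurdle is tracking all $A$- and $D$-dependent quantities and checking they either cancel under the hypothesis on $\Delta$ or coincide with one of the three terms in the statement; this is exactly why the thresholding radius must carry the extra $A^4\E\|\nabla\bar f_i(x_{i,t-1})\|^2$ relative to Theorem~\ref{thm:threshold-clustering}.
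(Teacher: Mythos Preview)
Your overall architecture matches the paper's proof almost exactly: the same good/bad split, the same three-way decomposition of $\mathcal{E}_1$ into the in-cluster clipping bias $\Tau_1$, the in-cluster variance $\Tau_2$, and the out-of-cluster survival error $\Tau_3$, the same per-$l$ contraction $(1-\delta_i/2)$, the same unrolling, and the same bound $c_{k_i,1,t}^2\lesssim\rho^2+A^2\E\|\nabla\bar f_i(x_{i,t-1})\|^2$ for the initialization. The thresholding radius carrying the extra $A^4\E\|\nabla\bar f_i\|^2$ is also exactly what the paper does.

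The one place where your route diverges is the $\Tau_3$ step, and this is precisely where you flag the ``main obstacle.'' You propose to apply Markov's inequality to the single term $\|g_j(x_{i,t-1})-\nabla f_j(x_{i,t-1})\|^2$ against the threshold $\Delta^2-D^2\|\nabla f_i(x_{i,t-1})\|^2-(\text{other subtracted terms})$, which indeed fails when $D^2\|\nabla f_i\|^2$ is comparable to $\Delta^2$ and forces you into a case split. The paper avoids this entirely by a simple rearrangement: it moves $D^2\|\nabla f_i(x_{i,t-1})\|^2$ (and also $\|v_{i,l-1,t}-\nabla\bar f_i\|^2$ and $A^2\|\nabla\bar f_i\|^2$) to the \emph{left} side before invoking Markov, so that Markov is applied to the nonnegative sum
\[
\|g_j-\E_x g_j\|^2+\|v_{i,l-1,t}-\E_x\bar g_i\|^2+A^2\|\nabla\bar f_i\|^2+D^2\|\nabla f_i\|^2
\]
against the fixed threshold $\Delta^2-\tau_{k_i,l,t}^2\gtrsim\Delta^2$. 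The expectation of this sum is $\rho^2+c_{k_i,l,t}^2+A^2\E\|\nabla\bar f_i\|^2+D^2\E\|\nabla f_i\|^2$, and the resulting probability bound, once multiplied by $(1+2/\delta_i)\tau_{k_i,l,t}^2$, directly produces the $\tfrac{\rho}{\Delta}D^2\E\|\nabla f_i\|^2$ term without any regime splitting. So your dichotomy is not wrong, just unnecessary; the paper's one-line rearrangement makes the ``delicate part'' disappear, and the contraction coefficient on $c_{k_i,l,t}^2$ stays below one automatically under the standing hypothesis on $\Delta$.
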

\begin{proof}[Proof of Lemma \ref{lemma:cluster-center-error-federated-clustering}]
We prove the main result with the following sequence of inequalities and justify the labeled steps afterwards.
\begin{align}
    \overline{\E\|v_{i,l_t,t}-\nabla\bar{f}_i(x_{i,t-1})\|^2}
    &=
    \overline{\E\|v_{i,l_t,t}-\E_x\bar{g}_i(x_{i,t-1})\|^2}\\
    &=
    \overline{\E\bigg\|\frac{1}{N}\sum_{j \in [N]}y_{j,l_t,t} - \E_x\bar{g}_i(x_{i,t-1})\bigg\|^2}\\
    &=
    \overline{\E\bigg\|(1-\beta_i) \bigg(\frac{1}{|\G_i|}\sum_{j \in \G_i}y_{j,l_t,t} - \E_x\bar{g}_i(x_{i,t-1})\bigg) + \beta_i\bigg(\frac{1}{|\B_i|}\sum_{j \in \B_i}y_{j,t,l} - \E_x\bar{g}_i(x_{i,t-1})\bigg)\bigg\|^2}\\
    &\overset{(i)}{\leq}
    (1+\beta_i)(1-\beta_i)^2\overline{\E\bigg\|\bigg(\frac{1}{|\G_i|}\sum_{j \in \G_i}y_{j,l_t,t}\bigg) - \E_x\bar{g}_i(x_{i,t-1})\bigg\|^2} \\
    &\phantom{{}=1}+
    \bigg(1+\frac{1}{\beta_i}\bigg)\beta_i^2\overline{\E\bigg\|\bigg(\frac{1}{|\B_i|}\sum_{j \in \B_i}y_{j,l_t,t}\bigg) - \E_x\bar{g}_i(x_{i,t-1})\bigg\|^2}\\
    &\leq
    \underbrace{\overline{\E\bigg\|\bigg(\frac{1}{|\G_i|}\sum_{j \in \G_i}y_{j,l_t,t}\bigg) - \E_x\bar{g}_i(x_{i,t-1})\bigg\|^2}}_{\mathcal{E}_1}+
    \beta_i\underbrace{\overline{\E\bigg\|\bigg(\frac{1}{|\B_i|}\sum_{j \in \B_i}y_{j,l_t,t}\bigg) - \E_x\bar{g}_i(x_{i,t-1})\bigg\|^2}}_{\mathcal{E}_2}\\
    &\overset{(ii)}{\lesssim}
    (1-\delta_i+\beta_i)\overline{c_{k_i,l_t,t}^2} + (\delta_i + \beta_iA^4)\overline{\E\|\nabla\bar{f}_i(x_{i,t-1})\|^2} + \frac{\rho}{\Delta}D^2\overline{\E\|\nabla f_i(x_{i,t-1})\|^2} \\
    &\phantom{{}= } + \bigg(\frac{\rho^2}{n_i} + \frac{\rho^3}{\Delta} + \beta_i\rho\Delta\bigg)\\
    &\overset{(iii)}{\lesssim}
    (1-\nicefrac{\delta_i}{2})\overline{c_{k_i,l_t,t}^2} + \delta_i\overline{\E\|\nabla\bar{f}_i(x_{i,t-1})\|^2} + \frac{\rho}{\Delta}D^2\overline{\E\|\nabla f_i(x_{i,t-1})\|^2} + \bigg(\frac{\rho^2}{n_i} + \frac{\rho^3}{\Delta} + \beta_i\rho\Delta\bigg)\\
    &\overset{(iv)}{\lesssim}
    \overline{(1-\nicefrac{\delta_i}{2})^{l_t}c_{k_i,1,t}^2} + \delta_i\overline{\E\|\nabla\bar{f}_i(x_{i,t-1})\|^2} + \frac{\rho}{\Delta}D^2\overline{\E\|\nabla f_i(x_{i,t-1})\|^2} + \bigg(\frac{\rho^2}{n_i} + \frac{\rho^3}{\Delta} + \beta_i\rho\Delta\bigg)\\
    &\overset{(v)}{\leq}
    \frac{\rho}{\Delta}\overline{c_{k_i,1,t}^2} + \delta_i\overline{\E\|\nabla\bar{f}_i(x_{i,t-1})\|^2} + \frac{\rho}{\Delta}D^2\overline{\E\|\nabla f_i(x_{i,t-1})\|^2} + \bigg(\frac{\rho^2}{n_i} + \frac{\rho^3}{\Delta} + \beta_i\rho\Delta\bigg). \label{eq:cluster-error-bound}
\end{align}
Justifications for the labeled steps are:
\begin{itemize}
    \item (i) Young's inequality: $\|x+y\|^2 \leq (1+\epsilon)x^2 + (1+\nicefrac{1}{\epsilon})y^2$ for any $\epsilon > 0$.
    \item (ii) We prove this bound in Lemmas \ref{lemma:E-1-bound-federated-clustering} and \ref{lemma:E-2-bound-federated-clustering}. Importantly, it shows that the clustering error is composed of two quantities: $\mathcal{E}_1$, the error contributed by good points from the cluster's perspective, and $\mathcal{E}_2$, the error contributed by the bad points from the cluster's perspective.
    \item (iii) Assumption that $\beta_i \lesssim \min(\delta_i,\nicefrac{\delta_i}{A^4})$
    \item (iv) Since $\E\|v_{i,l_t,t} - \nabla\bar{f}_i(x_{i,t-1})\|^2 = c_{k_i,l_t+1,t}^2$, the inequality forms a recursion which we unroll over $l_t$ steps.
    \item (v) Assumption that $l_t \geq \max(1,\frac{\log(\nicefrac{\rho}{\Delta})}{\log(1-\nicefrac{\delta_i}{2})})$
\end{itemize}
Finally, we note that
\begin{align}
    c_{k_i,1,t}^2
    &=
    \E\|g_i(x_{i,t-1}) - \E_x \bar{g}_i(x_{i,t-1})\|^2\\
    &\lesssim
    \E\|g_i(x_{i,t-1}) - \E_x g_i(x_{i,t-1})\|^2 + \E\|\E_x g_i(x_{i,t-1}) - \E_x \bar{g}_i(x_{i,t-1})\|^2\\
    &\leq
    \rho^2 + 
    A^2\E\|\nabla \bar{f}_i(x_{i,t-1})\|^2.
\end{align}
Applying this bound to \eqref{eq:cluster-error-bound}, and applying the bound on $\Delta$ from the theorem statement, we have
\begin{align}
    &\overline{\E\|v_{i,l_t,t}-\nabla\bar{f}_i(x_{i,t-1})\|^2}\\
    &\lesssim
    \bigg(\delta_i + \frac{\rho A^2}{\Delta}\bigg)\overline{\E\|\nabla \bar{f}_i(x_{i,t-1})\|^2}
     +
    \frac{\rho}{\Delta}D^2\overline{\E\|\nabla
    f_i(x_{i,t-1})\|^2} + \bigg(\frac{\rho^2}{n_i} + \frac{\rho^3}{\Delta} + \beta_i\rho\Delta\bigg)\\
    &\lesssim
    \delta_i\overline{\E\|\nabla \bar{f}_i(x_{i,t-1})\|^2}
     +
    \frac{\rho}{\Delta}D^2\overline{\E\|\nabla
    f_i(x_{i,t-1})\|^2} + \bigg(\frac{\rho^2}{n_i} + \frac{\rho^3}{\Delta} + \beta_i\rho\Delta\bigg).
    \label{eq:n-cluster-c-bound-federated-clustering}
\end{align}
\end{proof}

\begin{lemma}[Clustering Error due to Good Points]\label{lemma:E-1-bound-federated-clustering}
\begin{align}
    &\overline{\E\bigg\|\bigg(\frac{1}{|\G_i|}\sum_{j \in \G_i}y_{j,l_t,t}\bigg) - \E_x\bar{g}_i(x_{i,t-1})\bigg\|^2}\\
    &\lesssim
    (1-\delta_i)\overline{c_{k_i,l_t,t}^2} +
    \delta_i\overline{\E\|\nabla\bar{f}_i(x_{i,t-1})\|^2} + \frac{\rho}{\Delta}D^2\overline{\E\|\nabla f_i(x_{i,t-1})\|^2} + \bigg(\frac{\rho^2}{n_i} + \frac{\rho^3}{\Delta}\bigg)
\end{align}
\end{lemma}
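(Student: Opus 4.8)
The plan is to replay, for each fixed round $t$, the proof of the threshold-clustering analogue (Lemma \ref{lemma:E-1-bound-threshold-clustering}), with $z_j$ replaced by $g_j(x_{i,t-1})$, with $\E z_i$ replaced by $\E_x\bar{g}_i(x_{i,t-1}) = \nabla\bar{f}_i(x_{i,t-1})$, and with $\sigma^2$ replaced by the batched-gradient variance $\rho^2$ of Lemma \ref{lemma:variance-reduction-batches}; the role played there by iid-ness (Assumptions 1--2) is now taken over by Assumptions 4--5, at the price of the constants $A$ and $D$. Fix $t$, split $\frac{1}{|\G_i|}\sum_{j\in\G_i}y_{j,l_t,t}-\nabla\bar{f}_i(x_{i,t-1})$ into its in-cluster piece ($j\in\G_i,\ j\sim i$) and out-of-cluster piece ($j\in\G_i,\ j\not\sim i$) via Young's inequality with parameter $\delta_i/2$ (picking up weights $1+\frac{2}{\delta_i}$ and $1+\frac{\delta_i}{2}$), and then decompose the in-cluster summand as $y_{j,l_t,t}-\nabla\bar{f}_i = (y_{j,l_t,t}-\E y_{j,l_t,t}) + (\E y_{j,l_t,t}-\nabla f_j(x_{i,t-1})) + (\nabla f_j(x_{i,t-1})-\nabla\bar{f}_i(x_{i,t-1}))$, where the last term averages to exactly $0$ over $j\sim i$ by definition of $\bar{f}_i$. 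This reproduces the three quantities of Lemma \ref{lemma:E-1-bound-threshold-clustering}: a bias term $\mathcal{T}_1 := \|\E_{j\sim i}(y_{j,l_t,t}-g_j(x_{i,t-1}))\|^2$, a variance term $\mathcal{T}_2 := \E\|\frac{1}{|\G_i|}\sum_{j\sim i}(y_{j,l_t,t}-\E y_{j,l_t,t})\|^2$, and an out-of-cluster term $\mathcal{T}_3 := \E_{j\not\sim i}\|y_{j,l_t,t}-\nabla\bar{f}_i(x_{i,t-1})\|^2$.

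I would then prove the federated counterparts of Lemmas \ref{lemma:Tau-1-bound-threshold-clustering}--\ref{lemma:Tau-3-bound-threshold-clustering}. For $\mathcal{T}_1$: a clipped in-cluster point contributes $(v_{i,l_t-1,t}-g_j(x_{i,t-1}))\mathbbm{1}(\|v_{i,l_t-1,t}-g_j(x_{i,t-1})\|>\tau_{k_i,l_t,t})$, so by Jensen and the Markov-type estimate of the threshold proof $\mathcal{T}_1 \leq \big(\E\|v_{i,l_t-1,t}-g_j(x_{i,t-1})\|^2/\tau_{k_i,l_t,t}\big)^2$; bounding the numerator as $\lesssim c_{k_i,l_t,t}^2 + A^2\E\|\nabla\bar{f}_i(x_{i,t-1})\|^2 + \rho^2$ (Assumptions 4 and 6) and inserting the radius $\tau_{k_i,l_t,t}^2 \approx c_{k_i,l_t,t}^2 + A^4\E\|\nabla\bar{f}_i(x_{i,t-1})\|^2 + \delta_i\rho\Delta$ gives $\mathcal{T}_1 \lesssim c_{k_i,l_t,t}^2 + \E\|\nabla\bar{f}_i(x_{i,t-1})\|^2 + \frac{(c_{k_i,l_t,t}^2+\rho^2)\rho}{\delta_i\Delta}$ (the $A^4$ rather than $A^2$ in the radius is exactly what keeps the standalone gradient term free of powers of $A$). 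For $\mathcal{T}_2$: contractivity of the clip and Assumption 6 give $\E\|y_{j,l_t,t}-\E y_{j,l_t,t}\|^2 \leq \E\|g_j(x_{i,t-1})-\nabla f_j(x_{i,t-1})\|^2 \leq \rho^2$, so, treating the per-round clip as applied to a fresh independent batch (as the threshold proof does), $\mathcal{T}_2 \lesssim \frac{n_i}{|\G_i|^2}\rho^2 \lesssim \rho^2/n_i$. For $\mathcal{T}_3$: Young's inequality peels off $(1+\frac{\delta_i}{2})c_{k_i,l_t,t}^2$ and leaves $(1+\frac{2}{\delta_i})\tau_{k_i,l_t,t}^2\,\mathbb{P}_{j\not\sim i}(\|g_j(x_{i,t-1})-v_{i,l_t-1,t}\|\leq\tau_{k_i,l_t,t})$; the new ingredient is that, on this event, a chain of triangle inequalities combined with Assumptions 4 and 5 forces $\|g_j(x_{i,t-1})-\nabla f_j(x_{i,t-1})\|^2 + D^2\|\nabla f_i(x_{i,t-1})\|^2 + A^2\|\nabla\bar{f}_i(x_{i,t-1})\|^2 + \tau_{k_i,l_t,t}^2 + c_{k_i,l_t,t}^2 + \rho^2 \gtrsim \Delta^2$, so Markov on this nonnegative sum and the theorem's lower bound on $\Delta$ give $\mathbb{P}(\cdot) \lesssim \frac{\rho^2 + D^2\E\|\nabla f_i(x_{i,t-1})\|^2 + A^2\E\|\nabla\bar{f}_i(x_{i,t-1})\|^2 + c_{k_i,l_t,t}^2}{\Delta^2}$; multiplying by $\tau_{k_i,l_t,t}^2 \approx \delta_i\rho\Delta$ then yields $\mathcal{T}_3 \lesssim \big(1+\frac{\delta_i}{2}+\text{lower-order}\big)c_{k_i,l_t,t}^2 + \frac{\rho^3}{\Delta} + \frac{\rho}{\Delta}D^2\E\|\nabla f_i(x_{i,t-1})\|^2 + \delta_i\E\|\nabla\bar{f}_i(x_{i,t-1})\|^2$, where $\Delta \gtrsim \max(1,A^4)\max(1,D^2)\rho/\delta_i$ absorbs the $A$-laden residue.

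Finally I would substitute these three bounds into the Young's decomposition and average over $t\in[T]$. The $1+\frac{2}{\delta_i}$ weight on $\mathcal{T}_1$ is cancelled by the $\delta_i^2$ prefactor it carries (the in-cluster summands share a common mean over $|\G_i|\approx N$ terms, $\frac{n_i}{|\G_i|}\approx\delta_i$), so the three contributions are $\delta_i\mathcal{T}_1$, $(1+\frac{2}{\delta_i})\mathcal{T}_2 \lesssim \rho^2/n_i$, and $(1-\delta_i)^2(1+\frac{\delta_i}{2})\mathcal{T}_3$; collecting terms, folding the remaining $O(\delta_i)$ multiples of $c_{k_i,l_t,t}^2$ and of $\E\|\nabla\bar{f}_i(x_{i,t-1})\|^2$ into the stated terms via the large-constant lower bound on $\Delta$ (which also keeps the coefficient of $\overline{c_{k_i,l_t,t}^2}$ at $1-\delta_i$, as the outer recursion in Lemma \ref{lemma:cluster-center-error-federated-clustering} requires), and replacing each per-$t$ bound by its average, gives the claim. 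The step I expect to be the main obstacle is $\mathcal{T}_3$: one must carry both slack terms, $D^2\|\nabla f_i(x_{i,t-1})\|^2$ (from Assumption 5) and $A^2\|\nabla\bar{f}_i(x_{i,t-1})\|^2$ (incurred in passing between $\nabla f_i$ and $\nabla\bar{f}_i$ through Assumption 4), through the ``far point inside the ball'' probability estimate and verify that, after multiplication by the radius $\tau^2 \approx \delta_i\rho\Delta$, they re-emerge precisely as the harmless terms $\frac{\rho}{\Delta}D^2\overline{\E\|\nabla f_i(x_{i,t-1})\|^2}$ and $\delta_i\overline{\E\|\nabla\bar{f}_i(x_{i,t-1})\|^2}$, while the coefficient of $\overline{c_{k_i,l_t,t}^2}$ stays below the contraction threshold --- a delicate balance that hinges on both the exact powers of $A$ in the thresholding radius and the precise scaling of the $\Delta$ lower bound.
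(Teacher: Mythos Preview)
Your proposal is correct and follows essentially the same route as the paper: Young's split with parameter $\delta_i/2$ into in-cluster and out-of-cluster pieces, then the three-term decomposition (clipping bias $\mathcal{T}_1$, clipped variance $\mathcal{T}_2$, out-of-cluster leakage $\mathcal{T}_3$), each bounded exactly as you outline via the federated analogues of Lemmas \ref{lemma:Tau-1-bound-threshold-clustering}--\ref{lemma:Tau-3-bound-threshold-clustering}. One small imprecision: in your in-cluster decomposition you write the middle piece as $\E y_{j,l_t,t}-\nabla f_j(x_{i,t-1})$ but then identify $\mathcal{T}_1$ with $\|\E(y_{j,l_t,t}-g_j(x_{i,t-1}))\|^2$; these differ by $\E g_j-\E_x g_j$, which the paper separates out explicitly and bounds by $\rho^2$ via the law of total variance (their step (iv)), contributing another harmless $\rho^2/n_i$. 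With that patch your argument matches the paper's.
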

\begin{proof}[Proof of Lemma \ref{lemma:E-1-bound-federated-clustering}]
We prove the main result in the sequence of inequalities below and then justify the labeled steps.
\begin{align}
    &\overline{\E\bigg\|\bigg(\frac{1}{|\G_i|}\sum_{j \in \G_i}y_{j,l_t,t}\bigg) - \E_x\bar{g}_i(x_{i,t-1})\bigg\|^2}\\
    &=
    \overline{\E\bigg\|\bigg(\frac{1}{|\G_i|}\sum_{j \in
    \G_i}y_{j,l_t,t}\bigg) - \frac{1}{|\G_i|}\sum_{j \in \G_i: j \sim i}\E_x g_j(x_{i,t-1}) - \bigg(\frac{1}{n_i} - \frac{1}{|\G_i|}\bigg)\sum_{j \in \G_i: j \sim i}\E_x g_j(x_{i,t-1})\bigg\|^2}\\
    &=
    \overline{\E\bigg\|\bigg(\frac{1}{|\G_i|}\sum_{j \in \G_i:j \sim i}(y_{j,l_t,t} - \E_x g_j(x_{i,t-1}))\bigg) + \bigg(\frac{1}{|\G_i|}\sum_{j \in \G_i:j \not\sim i}(y_{j,l_t,t} - \E_x \bar{g}_i(x_{i,t-1}))\bigg)\bigg\|^2}\\
    &\overset{(i)}{\leq}
    \bigg(1+\frac{2}{\delta_i}\bigg)\overline{\E\bigg\|\frac{1}{|\G_i|}\sum_{j \in \G_i:j \sim i}(y_{j,l_t,t} - \E_x g_j(x_{i,t-1}))\bigg\|^2} + 
    \bigg(1+\frac{\delta_i}{2}\bigg)\overline{\E\bigg\|\frac{1}{|\G_i|}\sum_{j \in \G_i:j \not\sim i}(y_{j,l_t,t} - \E_x \bar{g}_i(x_{i,t-1}))\bigg\|^2}\\
    &\overset{(ii)}{\lesssim}
    \bigg(1+\frac{2}{\delta_i}\bigg)\overline{\E\bigg\|\frac{1}{|\G_i|}\sum_{j \in \G_i:j \sim i}(\E y_{j,l_t,t} - \E_x g_j(x_{i,t-1}))\bigg\|^2} +
    \bigg(1+\frac{2}{\delta_i}\bigg)\overline{\E\bigg\|\frac{1}{|\G_i|}\sum_{j \in \G_i:j \sim i}(y_{j,l_t,t} - \E y_{j,l_t,t})\bigg\|^2} \\
    &\phantom{{}=1} +
    \bigg(1+\frac{\delta_i}{2}\bigg)\overline{\E\bigg\|\frac{1}{|\G_i|}\sum_{j \in \G_i:j \not\sim i}(y_{j,l_t,t} - \E_x \bar{g}_i(x_{i,t-1}))\bigg\|^2}\\
    &\overset{(iii)}{\lesssim}
    \bigg(1+\frac{2}{\delta_i}\bigg)\overline{\E\bigg\|\frac{1}{|\G_i|}\sum_{j \in \G_i:j \sim i}\E (y_{j,l_t,t} - g_j(x_{i,t-1}))\bigg\|^2} + 
    \bigg(1+\frac{2}{\delta_i}\bigg)\overline{\E\bigg\|\frac{1}{|\G_i|}\sum_{j \in \G_i:j \sim i}(\E_x g_j(x_{i,t-1}) - \E g_j(x_{i,t-1}))\bigg\|^2} \\
    &\phantom{{}=1} +
    \bigg(1+\frac{2}{\delta_i}\bigg)\overline{\E\bigg\|\frac{1}{|\G_i|}\sum_{j \in \G_i:j \sim i}(y_{j,l_t,t} - \E y_{j,l_t,t})\bigg\|^2} +
    \bigg(1+\frac{\delta_i}{2}\bigg)\overline{\E\bigg\|\frac{1}{|\G_i|}\sum_{j \in \G_i:j \not\sim i}(y_{j,l_t,t} - \E_x \bar{g}_i(x_{i,t-1}))\bigg\|^2}\\
    &\overset{(iv)}{\lesssim}
    \bigg(1+\frac{2}{\delta_i}\bigg)\delta_i^2\underbrace{\overline{\E_{j \in \G_i:j \sim i}\|\E(y_{j,l_t,t} - g_j(x_{i,t-1}))\|^2}}_{\Tau_1} + \bigg(1+\frac{2}{\delta_i}\bigg)\frac{n_i}{|\G_i|^2}\rho^2 \\
    &\phantom{{}=1} +
    \bigg(1+\frac{2}{\delta_i}\bigg)\underbrace{\overline{\E\bigg\|\frac{1}{|\G_i|}\sum_{j \in \G_i: j \sim i}(y_{j,l_t,t} - \E y_{j,l_t,t})\bigg\|^2}}_{\Tau_2}  +
    \bigg(1+\frac{\delta_i}{2}\bigg)(1-\delta_i)^2\underbrace{\overline{\E_{j \in \G_i:j \not\sim i}\|y_{j,l_t,t} - \E_x \bar{g}_i(x_{i,t-1})\|^2}}_{\Tau_3}\\
    &\overset{(v)}{\lesssim}
    \delta_i\bigg(\overline{c_{k_i,l_t,t}^2}+\overline{\E\|\nabla\bar{f}_i(x_{i,t-1})\|^2}+\frac{\rho^3}{\delta_i\Delta}\bigg) + \bigg(1+\frac{2}{\delta_i}\bigg)\frac{n_i}{|\G_i|^2}\rho^2\\
    &\phantom{{}=1}+
    \bigg(1+\frac{\delta_i}{2}\bigg)(1-\delta_i)^2\bigg(\bigg(1+\frac{\delta_i}{2}\bigg)\overline{c_{k_i,l_t,t}^2} + \delta_i\overline{\E\|\nabla\bar{f}_i(x_{i,t-1})\|^2} + \frac{\rho}{\Delta}D^2\overline{\E\|\nabla f_i(x_{i,t-1})\|^2} + \frac{\rho^3}{\Delta}\bigg)\\
    &\overset{(vi)}{\lesssim}
    \delta_i\bigg(\overline{c_{k_i,l_t,t}^2}+\overline{\E\|\nabla\bar{f}_i(x_{i,t-1})\|^2}+\frac{\rho^3}{\delta_i\Delta}\bigg) + \frac{\rho^2}{n_i}\\
    &\phantom{{}=1}+
    (1-\delta_i)\bigg(\overline{c_{k_i,l_t,t}^2} + \delta_i\overline{\E\|\nabla\bar{f}_i(x_{i,t-1})\|^2} + \frac{\rho}{\Delta}D^2\overline{\E\|\nabla f_i(x_{i,t-1})\|^2} + \frac{\rho^3}{\Delta}\bigg)\\
    &\lesssim
    (1-\delta_i)\overline{c_{k_i,l_t,t}^2} + \delta_i\overline{\E\|\nabla\bar{f}_i(x_{i,t-1})\|^2}+\frac{\rho}{\Delta}D^2\overline{\E\|\nabla f_i(x_{i,t-1})\|^2} + \bigg(\frac{\rho^2}{n_i} + \frac{\rho^3}{\Delta}\bigg)\label{eq:E-1-bound}.
\end{align}
Justifications for the labeled steps are:
\begin{itemize}
    \item (i),(ii),(iii) Young's inequality
    \item (iv) First, we can can interchange the sum and the norm due to independent stochasticity of the gradients. Then by the Tower Property and Law of Total Variance for the 1st and 3rd steps respectively,
    \begin{align}
        \E\|\E_x g_j(x_{i,t-1})  - \E g_j(x_{i,t-1}))\|^2
        &=
        \E\|\E_x g_j(x_{i,t-1}) - \E[\E_x g_j(x_{i,t-1})] \|^2\\
        &=
        \text{Var}(\E_x(g_j(x_{i,t-1})))\\
        &=
        \text{Var}(g_j(x_{i,t-1})) - \E(\text{Var}_x(g_j(x_{i,t-1})))\\
        &\leq
        \text{Var}(g_j(x_{i,t-1})) - \E\|g_j(x_{i,t-1}) - \E_x g_j(x_{i,t-1})\|^2\\
        &\lesssim
        \rho^2,
    \end{align}
    where the last inequality follows since the two terms above it are both bounded by $\rho^2$.
    \item (v) We prove this bound in Lemmas \ref{lemma:Tau-1-bound-federated-clustering}, \ref{lemma:Tau-2-bound-federated-clustering}, and \ref{lemma:Tau-3-bound-federated-clustering}. It shows that, from point $i$'s perspective, the error of its cluster-center-estimate is composed of three quantities: $\Tau_1$, the error introduced by our thresholding procedure on the good points which belong to $i$'s cluster (and therefore ideally are included within the thresholding radius); $\Tau_2$, which accounts for the variance of the clipped points in $i$'s cluster; and $\Tau_3$, the error due to the good points which don't belong to $i$'s cluster (and therefore ideally are forced outside the thresholding radius).
    \item (vi) $(1+\nicefrac{x}{2})^2(1-x)^2 \leq 1-x$ for all $x \in [0,1]$
\end{itemize} 
\end{proof}

\begin{lemma}[Bound $\Tau_1$: Error due to In-Cluster Good Points]\label{lemma:Tau-1-bound-federated-clustering}
\begin{align}
    \overline{\E_{j \in \G_i:j \sim i}\|\E(y_{j,l_t,t} - g_j(x_{i,t-1}))\|^2}
    &\lesssim
    \overline{c_{k_i,l_t,t}^2} + \overline{\E\|\nabla\bar{f}_i(x_{i,t-1})\|^2} + \frac{\rho^3}{\delta_i\Delta}.
\end{align}
\end{lemma}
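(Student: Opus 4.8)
The plan is to follow the structure of the analogous bound in the Threshold-Clustering analysis, Lemma~\ref{lemma:Tau-1-bound-threshold-clustering}, with the points $z_j$ replaced by stochastic gradients $g_j(x_{i,t-1})$, the cluster mean $\E z_i$ replaced by $\nabla\bar{f}_i(x_{i,t-1}) = \E_x\bar{g}_i(x_{i,t-1})$, and Assumption~4 invoked to absorb the now-nonzero intra-cluster gradient gap. First I would note that, by the definition of the clipped variable, for $j\sim i$ we have $y_{j,l_t,t} - g_j(x_{i,t-1}) = (v_{i,l_t-1,t} - g_j(x_{i,t-1}))\,\mathbbm{1}(\|g_j(x_{i,t-1}) - v_{i,l_t-1,t}\| > \tau_{k_i,l_t,t})$, since a gradient landing inside the ball keeps its value while one outside is mapped to the current center estimate. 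Hence $\E\|y_{j,l_t,t} - g_j(x_{i,t-1})\|$ is an expectation restricted to the clipping event, on which $\|v_{i,l_t-1,t} - g_j(x_{i,t-1})\|/\tau_{k_i,l_t,t} > 1$; bounding the indicator by this ratio and discarding it gives $\E\|y_{j,l_t,t} - g_j(x_{i,t-1})\| \le \tfrac{1}{\tau_{k_i,l_t,t}}\E\|v_{i,l_t-1,t} - g_j(x_{i,t-1})\|^2$, where $\tau_{k_i,l_t,t}$ is a deterministic scalar (being defined through the deterministic quantities $c_{k_i,l_t,t}^2$ and $\E\|\nabla\bar{f}_i(x_{i,t-1})\|^2$) and so factors out of the expectation.

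Next I would split $\|v_{i,l_t-1,t} - g_j(x_{i,t-1})\|^2$ by Young's inequality into $\|v_{i,l_t-1,t} - \nabla\bar{f}_i(x_{i,t-1})\|^2 + \|\nabla\bar{f}_i(x_{i,t-1}) - \nabla f_j(x_{i,t-1})\|^2 + \|\nabla f_j(x_{i,t-1}) - g_j(x_{i,t-1})\|^2$ up to constants. Taking expectations, the first term equals $c_{k_i,l_t,t}^2$ by definition, the second is at most $A^2\E\|\nabla\bar{f}_i(x_{i,t-1})\|^2$ by Assumption~4 applied at the (random) iterate $x_{i,t-1}$ and then averaged, and the third is at most $\rho^2$ by the batch-variance bound of Lemma~\ref{lemma:variance-reduction-batches}. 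This gives $\E\|v_{i,l_t-1,t} - g_j(x_{i,t-1})\|^2 \lesssim c_{k_i,l_t,t}^2 + A^2\E\|\nabla\bar{f}_i(x_{i,t-1})\|^2 + \rho^2$, a bound uniform in $j$. Replacing $\|\E(\cdot)\|^2$ by $(\E\|\cdot\|)^2$ via Jensen, averaging over $j\in\G_i:j\sim i$, and then over $t$, yields
\[
  \overline{\E_{j\in\G_i:j\sim i}\|\E(y_{j,l_t,t} - g_j(x_{i,t-1}))\|^2}
  \;\lesssim\;
  \overline{\frac{\big(c_{k_i,l_t,t}^2 + A^2\E\|\nabla\bar{f}_i(x_{i,t-1})\|^2 + \rho^2\big)^2}{\tau_{k_i,l_t,t}^2}}.
\]

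Finally I would substitute $\tau_{k_i,l_t,t}^2 \approx c_{k_i,l_t,t}^2 + A^4\E\|\nabla\bar{f}_i(x_{i,t-1})\|^2 + \delta_i\rho\Delta$, expand the numerator using $(a+b+c)^2 \le 3(a^2+b^2+c^2)$, and bound each of the three squares against the matching summand of the denominator: $c^4/c^2 = c^2$, $\big(A^2\E\|\nabla\bar{f}_i(x_{i,t-1})\|^2\big)^2 / \big(A^4\E\|\nabla\bar{f}_i(x_{i,t-1})\|^2\big) = \E\|\nabla\bar{f}_i(x_{i,t-1})\|^2$, and $\rho^4/(\delta_i\rho\Delta) = \rho^3/(\delta_i\Delta)$. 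Averaging over $t$ then produces exactly the three terms in the statement. I do not expect a genuinely hard estimate anywhere; the only points requiring care are bookkeeping ones — recognizing $c_{k_i,l_t,t}^2$ and hence $\tau_{k_i,l_t,t}$ as deterministic so they can be extracted from the expectation, and invoking Assumption~4 at the random iterate before averaging. The particular choice of $\tau_{k_i,l_t,t}$ is tuned precisely so that this last term-by-term division closes the bound.
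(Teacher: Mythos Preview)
Your proposal is correct and follows essentially the same approach as the paper's proof: both use the clipping identity, the Markov-style bound $\E\|y-g\|\le \E\|v-g\|^2/\tau$, Jensen, the same three-term decomposition of $\E\|v_{i,l_t-1,t}-g_j(x_{i,t-1})\|^2$, and then exploit the form of $\tau_{k_i,l_t,t}^2$. The only cosmetic difference is in the final simplification: you use $(a+b+c)^2\le 3(a^2+b^2+c^2)$ and match each square to one summand of the denominator, whereas the paper keeps the cross terms, obtains coefficients like $(1+\rho/(\delta_i\Delta))$ and $(1+\rho A^2/(\delta_i\Delta))$, and then absorbs them via the separation assumption on $\Delta$; your route is slightly cleaner and avoids that last appeal to the $\Delta$ constraint.
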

\begin{proof}[Proof of Lemma \ref{lemma:Tau-1-bound-federated-clustering}]
In this sequence of steps, we bound the clustering error due to good points from client $i$'s cluster.
By definition of $y_{j,l_t,t}$,
\begin{align}
    \E\|y_{j,l_t,t} - g_j(x_{i,t-1})\|
    &=
    \E[\|v_{i,l_t-1,t} - g_j(x_{i,t-1})\|\mathbbm{1}(\|v_{i,l_t-1,t} - g_j(x_{i,t-1})\| > \tau_{k_i,l_t,t})]\\
    &\leq
    \frac{\E[\|v_{i,l_t-1,t} - g_j(x_{i,t-1})\|^2\mathbbm{1}(\|v_{i,l_t-1,t} - g_j(x_{i,t-1})\| > \tau_{k_i,l_t,t})]}{\tau_{k_i^t,l_t}}\\
    &\leq
    \frac{\E\|v_{i,l_t-1,t} - g_j(x_{i,t-1})\|^2}{\tau_{k_i,l_t,t}}.
\end{align}
Therefore, by Jensen's inequality and plugging in the value for $\tau_{k_i,l_t,t}$,
\begin{align}
    &\|\E(y_{j,l_t,t} - g_j(x_{i,t-1}))\|^2\\
    &\leq
    (\E\|y_{j,l_t,t} - g_j(x_{i,t-1})\|)^2\\
    &\leq
    \frac{(\E\|v_{i,l_t-1,t} - g_j(x_{i,t-1})\|^2)^2}{\tau_{k_i,l_t,t}^2}\\
    &\lesssim
    \frac{(\E\|v_{i,l_t-1,t} - \E_x\bar{g}_i(x_{i,t-1})\|^2 + \E\|\E_x\bar{g}_i(x_{i,t-1}) - \E_x g_j(x_{i,t-1})\|^2 + \E\|g_j(x_{i,t-1}) - \E_x g_j(x_{i,t-1})\|^2)^2}{\tau_{k_i,l_t,t}^2}\\
    &\leq
    \frac{(c_{k_i,l_t,t}^2 + A^2\E\|\nabla \bar{f}_i(x_{i,t-1})\|^2 + \rho^2)^2}{\tau_{k_i,l_t,t}^2}\\
    &\lesssim
    \frac{(c_{k_i,l_t,t}^2 + A^2\E\|\nabla \bar{f}_i(x_{i,t-1})\|^2 + \rho^2)^2}{c_{k_i,l_t,t}^2+A^4\E\|\nabla \bar{f}_i(x_{i,t-1})\|^2 + \delta_i\rho\Delta}\\
    &\lesssim
   \bigg(1+\frac{\rho}{\delta_i\Delta}\bigg)c_{k_i,l_t,t}^2+\bigg(1+\frac{\rho A^2}{\delta_i\Delta}\bigg)\E\|\nabla\bar{f}_i(x_{i,t-1})\|^2 + \frac{\rho^3}{\delta_i\Delta}\\
   &\lesssim
   c_{k_i,l_t,t}^2 + \E\|\nabla\bar{f}_i(x_{i,t-1})\|^2 + \frac{\rho^3}{\delta_i\Delta}\label{eq:Tau-1-bound},
\end{align}
where the last inequality follows from constraints on $\Delta$. The second inequality follows from Young's inequality. The second-to-last inequality follows by separating the fraction into a sum of fractions and selecting terms from the denominator for each fraction that cancel with terms in the numerator to achieve the desired rate. 

Summing \eqref{eq:Tau-1-bound} over $t$ and dividing by $T$, we have
\begin{align}
    \overline{\E_{j \in \G_i:j \sim i}\|\E_x(y_{j,l_t,t} - g_j(x_{i,t-1}))\|^2}
    &=
    \overline{\E\|y_{j,l_t,t} - g_j(x_{i,t-1})\|^2}\\
    &\lesssim
    \overline{c_{k_i,l_t,t}^2}+\overline{\E\|\nabla\bar{f}_i(x_{i,t-1})\|^2} + \frac{\rho^3}{\delta_i\Delta}.
\end{align}
\end{proof}

\begin{lemma}[Bound $\Tau_2$: Variance of Clipped Points]\label{lemma:Tau-2-bound-federated-clustering}
\begin{align}
    \E\bigg\|\frac{1}{|\G_i|}\sum_{j \in \G_i: j \sim i}(y_{j,l_t,t} - \E y_{j,l_t,t})\bigg\|^2
    &\leq
    \frac{n_i}{|\G_i|^2}\rho^2.
\end{align}
\end{lemma}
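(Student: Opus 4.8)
The plan is to mirror the proof of Lemma~\ref{lemma:Tau-2-bound-threshold-clustering} almost verbatim, with $\sigma^2$ replaced by the batch-reduced variance $\rho^2$. First I would note that the summands $y_{j,l_t,t} - \E y_{j,l_t,t}$ are \emph{not} independent across $j$, since every client's gradient enters the update rule~\eqref{eq:t-c-update-rule} that produces the shared center $v_{i,l_t-1,t}$. As in Lemma~\ref{lemma:Tau-2-bound-threshold-clustering}, I would remove this coupling by the resampling device: if a fresh minibatch were drawn at each thresholding round to form the clipped gradients, the summands would be conditionally independent given $(x_{i,t-1},v_{i,l_t-1,t})$, and the bounds change only by constant factors. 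Under this simplification the cross terms in the expansion of the squared norm vanish in expectation, and since $|\{j\in\G_i:j\sim i\}|\le n_i$,
\begin{align}
\E\bigg\|\frac{1}{|\G_i|}\sum_{j \in \G_i: j \sim i}(y_{j,l_t,t} - \E y_{j,l_t,t})\bigg\|^2 \le \frac{n_i}{|\G_i|^2}\,\E\|y_{j,l_t,t} - \E y_{j,l_t,t}\|^2.
\end{align}

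Second, I would bound the single-term variance by the contractivity of clipping. Conditioned on $x_{i,t-1}$ and the current center $v_{i,l_t-1,t}$, the map $g\mapsto g\,\mathbbm{1}(\|g-v_{i,l_t-1,t}\|\le\tau_{k_i,l_t,t})+v_{i,l_t-1,t}\,\mathbbm{1}(\|g-v_{i,l_t-1,t}\|>\tau_{k_i,l_t,t})$ is exactly the Euclidean projection onto the ball of radius $\tau_{k_i,l_t,t}$ about $v_{i,l_t-1,t}$, hence $1$-Lipschitz. For any $1$-Lipschitz $\phi$ and square-integrable $X$, taking $\mathcal{F}=\sigma(x_{i,t-1},v_{i,l_t-1,t})$, the conditional mean minimizes conditional mean-square error, so $\E[\|\phi(X)-\E[\phi(X)\mid\mathcal{F}]\|^2\mid\mathcal{F}]\le\E[\|\phi(X)-\phi(\E[X\mid\mathcal{F}])\|^2\mid\mathcal{F}]\le\E[\|X-\E[X\mid\mathcal{F}]\|^2\mid\mathcal{F}]$; applying this with $X=g_j(x_{i,t-1})$ (whose conditional mean is $\nabla f_j(x_{i,t-1})$) and then the tower property together with $\E\|y_{j,l_t,t}-\E y_{j,l_t,t}\|^2\le\E\|y_{j,l_t,t}-\E[y_{j,l_t,t}\mid\mathcal{F}]\|^2$ gives
\begin{align}
\E\|y_{j,l_t,t} - \E y_{j,l_t,t}\|^2 \le \E\|g_j(x_{i,t-1}) - \E_x g_j(x_{i,t-1})\|^2 \le \rho^2,
\end{align}
where the last inequality is the definition of $\rho^2$ (equivalently Assumption~6 with Lemma~\ref{lemma:variance-reduction-batches}). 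Combining the two displays yields the claim.

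The main obstacle — indeed the only delicate point — is that the clipping center $v_{i,l_t-1,t}$ is itself a random function of the very gradients being clipped, so the "$1$-Lipschitz in $g_j$, then take expectation over $g_j$" argument is not literally valid in the original (non-resampled) algorithm. The resampling heuristic is precisely the device the paper uses to make the filtration order consistent, and I would state explicitly (as in Lemma~\ref{lemma:Tau-2-bound-threshold-clustering}) that invoking it only affects constants, leaving the stated bound $\tfrac{n_i}{|\G_i|^2}\rho^2$ intact. Everything else is routine: the cross-term cancellation and the projection/variance-contraction inequality are both one-liners once independence is granted.
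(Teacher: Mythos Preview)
Your proposal is correct and follows essentially the same approach as the paper's own proof: invoke the resampling device to treat the summands as independent so cross terms vanish, then use contractivity of the thresholding map to bound the single-term variance by $\rho^2$. The only difference is cosmetic---you spell out the projection-onto-a-ball/$1$-Lipschitz justification and the conditioning via $\mathcal{F}=\sigma(x_{i,t-1},v_{i,l_t-1,t})$ more carefully than the paper, which simply asserts ``contractivity of the thresholding procedure'' and writes unconditional expectations throughout.
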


\begin{proof}[Proof of Lemma \ref{lemma:Tau-2-bound-federated-clustering}]
The first thing to note is that the elements in the sum $\sum_{j \in \G_i: j \sim i}(y_{j,l_t,t} - \E y_{j,l_t,t})$ are not independent. Therefore, we cannot get rid of the cross terms when expanding the squared-norm. However, if for each round of thresholding we sampled a fresh batch of points to set the new cluster-center estimate, then the terms would be independent. With such a resampling strategy, our bounds in these proofs only change by a constant factor. Therefore, for ease of analysis, we will assume the terms in the sum are independent. In that case,
\begin{align}
    \E\bigg\|\frac{1}{|\G_i|}\sum_{j \in \G_i: j \sim i}(y_{j,l_t,t} - \E y_{j,l_t,t})\bigg\|^2
    &\leq
    \frac{n_i}{|\G_i|^2}\E\|y_{j,l_t,t} - \E y_{j,l_t,t}\|^2\\
    &\leq
    \frac{n_i}{|\G_i|^2}\E\|g_j(x_{i,t-1}) - \E g_j(x_{i,t-1})\|^2\\
    &\leq
    \frac{n_i}{|\G_i|^2}\rho^2,
\end{align}
where the second-to-last inequality follows from the contractivity of the thresholding procedure.
\end{proof}

\begin{lemma}[Bound $\Tau_3$: Error due to Out-of-Cluster Good Points]\label{lemma:Tau-3-bound-federated-clustering}
\begin{align}
    \overline{\E_{j \in \G_i:j \not\sim i}\|y_{j,l_t,t} - \E_x \bar{g}_i(x_{i,t-1})\|^2}
    &\lesssim
    \bigg(1+\frac{\delta_i}{2}\bigg)\overline{c_{k_i,l_t,t}^2} + \delta_i\overline{\E\|\nabla\bar{f}_i(x_{i,t-1})\|^2} + \frac{\rho}{\Delta}D^2\overline{\E\|\nabla f_i(x_{i,t-1})\|^2} + \frac{\rho^3}{\Delta}.
\end{align}
\end{lemma}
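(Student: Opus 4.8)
The plan is to follow the proof of the i.i.d.\ analogue, Lemma~\ref{lemma:Tau-3-bound-threshold-clustering}, replacing the raw points $z_j$ by the stochastic gradients $g_j(x_{i,t-1})$ and the target $\E z_i$ by $\E_x\bar g_i(x_{i,t-1})=\nabla\bar f_i(x_{i,t-1})$, and accounting for the two new wrinkles: (a) intra-cluster similarity is now only the gradient-growth bound of Assumption~4, so bridging $\nabla f_j$ to $\nabla\bar f_i$ costs an $A^2\|\nabla\bar f_i(x_{i,t-1})\|^2$ term; and (b) the inter-cluster separation of Assumption~5 is weakened by $D^2\|\nabla f_i(x_{i,t-1})\|^2$, which is exactly what will surface as the $\tfrac{\rho}{\Delta}D^2\overline{\E\|\nabla f_i(x_{i,t-1})\|^2}$ summand.

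First I would apply Young's inequality with parameter $\delta_i/2$ to write, for fixed $t$ and out-of-cluster $j$,
\begin{align}
    \E_{j\not\sim i}\|y_{j,l_t,t}-\E_x\bar g_i(x_{i,t-1})\|^2
    &\le
    \Bigl(1+\tfrac{\delta_i}{2}\Bigr)\E\|v_{i,l_t-1,t}-\E_x\bar g_i(x_{i,t-1})\|^2
    +\Bigl(1+\tfrac{2}{\delta_i}\Bigr)\E_{j\not\sim i}\|y_{j,l_t,t}-v_{i,l_t-1,t}\|^2 ,
\end{align}
so that the first term is exactly $(1+\delta_i/2)\,c_{k_i,l_t,t}^2$. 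Since an out-of-cluster client is meant to be clipped, $y_{j,l_t,t}-v_{i,l_t-1,t}$ is nonzero only on the event $E\triangleq\{\|g_j(x_{i,t-1})-v_{i,l_t-1,t}\|\le\tau_{k_i,l_t,t}\}$, on which it equals $g_j(x_{i,t-1})-v_{i,l_t-1,t}$; hence the second term is at most $\tau_{k_i,l_t,t}^2\,\mathbb P(E)$.

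The heart of the argument is the bound on $\mathbb P(E)$. Conditioning on the whole trajectory through step $l_t-1$ of round $t$ (so that $x_{i,t-1}$, $v_{i,l_t-1,t}$, $\tau_{k_i,l_t,t}$ are frozen and the only remaining randomness is the fresh minibatch defining $g_j(x_{i,t-1})$), a chain of triangle inequalities together with unbiasedness ($\E_x g_j=\nabla f_j$, $\E_x\bar g_i=\nabla\bar f_i$) and Assumption~4 gives, on $E$,
\begin{align}
    \|\nabla f_i(x_{i,t-1})-\nabla f_j(x_{i,t-1})\|^2
    &\lesssim
    A^2\|\nabla\bar f_i(x_{i,t-1})\|^2
    +\|v_{i,l_t-1,t}-\nabla\bar f_i(x_{i,t-1})\|^2
    +\tau_{k_i,l_t,t}^2
    +\|g_j(x_{i,t-1})-\nabla f_j(x_{i,t-1})\|^2 .
\end{align}
Lower-bounding the left side by $\Delta^2-D^2\|\nabla f_i(x_{i,t-1})\|^2$ via Assumption~5 and rearranging, $E$ forces $\|g_j(x_{i,t-1})-\nabla f_j(x_{i,t-1})\|^2\gtrsim\Delta^2-D^2\|\nabla f_i(x_{i,t-1})\|^2-A^2\|\nabla\bar f_i(x_{i,t-1})\|^2-\tau_{k_i,l_t,t}^2-c_{k_i,l_t,t}^2$. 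When that right side is a positive constant fraction of $\Delta^2$, conditional Markov's inequality and bounded-variance Assumption~6 give $\mathbb P(E)\lesssim\rho^2/\Delta^2$; the constraint $\Delta\gtrsim\max(1,A^4)\max(1,D^2)\sigma/\delta_i$ is what makes the subtracted terms other than $D^2\|\nabla f_i\|^2$ lower order. Then multiplying $\tau_{k_i,l_t,t}^2\mathbb P(E)$ by $(1+2/\delta_i)$, substituting $\tau_{k_i,l_t,t}^2\approx c_{k_i,l_t,t}^2+A^4\E\|\nabla\bar f_i(x_{i,t-1})\|^2+\delta_i\rho\Delta$, and repeatedly using the $\Delta$-constraint to collapse fractions like $\rho A^4/(\delta_i\Delta)$ to constants, the product reduces to a piece absorbable into $(1+\delta_i/2)c_{k_i,l_t,t}^2$ plus $\delta_i\E\|\nabla\bar f_i(x_{i,t-1})\|^2+\tfrac{\rho}{\Delta}D^2\E\|\nabla f_i(x_{i,t-1})\|^2+\tfrac{\rho^3}{\Delta}$ (the $D^2\|\nabla f_i\|^2$ slack pairs with the $\delta_i\rho\Delta$ part of $\tau^2$ to give precisely $\tfrac{\rho}{\Delta}D^2\E\|\nabla f_i\|^2$); averaging over $t\in[T]$ finishes.

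The main obstacle is exactly that weak-separation regime: unlike the i.i.d.\ lemma, where the Markov denominator is simply $\Delta^2-(\tau^2+c^2+\sigma^2)$ and is controlled by the $\Delta$-assumption, here it can be driven non-positive by a large $\|\nabla f_i(x_{i,t-1})\|$, so the probability bound cannot hold uniformly and one is forced into a case split; in the bad case one instead bounds the inside-the-ball contribution directly by $\tau_{k_i,l_t,t}^2\approx\delta_i\rho\Delta$ and charges it against $\tfrac{\rho}{\Delta}D^2\E\|\nabla f_i(x_{i,t-1})\|^2$ using $\delta_i\rho\Delta\lesssim\tfrac{\rho}{\Delta}D^2\|\nabla f_i(x_{i,t-1})\|^2$ whenever $\Delta^2\lesssim D^2\|\nabla f_i(x_{i,t-1})\|^2$, and the work is to make both cases feed into the \emph{same} right-hand side. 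A secondary, routine nuisance is keeping the conditioning honest so Markov's inequality acts on the fresh minibatch randomness of $g_j$ while $v_{i,l_t-1,t},x_{i,t-1},\tau_{k_i,l_t,t}$ are treated as determined, and justifying the benign interchange of sum and norm via independence of stochastic gradients across clients.
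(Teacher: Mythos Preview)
Your overall architecture---Young's inequality with parameter $\delta_i/2$, then bounding the inside-the-ball piece by $\tau_{k_i,l_t,t}^2\,\mathbb P(E)$, then controlling $\mathbb P(E)$ via a chain of triangle inequalities combined with Assumptions~4--6 and Markov---is exactly what the paper does. The one substantive divergence is how you handle the weak-separation term $D^2\|\nabla f_i(x_{i,t-1})\|^2$, and this is where your proposal has a gap.

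The paper does \emph{not} keep $D^2\|\nabla f_i(x_{i,t-1})\|^2$ subtracted from $\Delta^2$ in the Markov denominator. Instead, after deriving on the event $E$ the implication
\[
\Delta^2 - D^2\|\nabla f_i\|^2 \;\lesssim\; \|g_j-\E_x g_j\|^2 + \tau_{k_i,l_t,t}^2 + A^2\|\nabla\bar f_i\|^2 + \|v_{i,l_t-1,t}-\E_x\bar g_i\|^2,
\]
it moves \emph{all} of $D^2\|\nabla f_i\|^2$, $A^2\|\nabla\bar f_i\|^2$, and $\|v_{i,l_t-1,t}-\E_x\bar g_i\|^2$ over to the left, so the nonnegative random variable to which Markov is applied is their sum with $\|g_j-\E_x g_j\|^2$, and the threshold is simply $\Delta^2-\tau_{k_i,l_t,t}^2\gtrsim\Delta^2$ (controlled by the $\Delta$-constraint alone). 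The probability bound then reads
\[
\mathbb P(E)\;\lesssim\;\frac{\rho^2 + c_{k_i,l_t,t}^2 + A^2\E\|\nabla\bar f_i\|^2 + D^2\E\|\nabla f_i\|^2}{\Delta^2},
\]
with $D^2\E\|\nabla f_i\|^2$ appearing harmlessly in the \emph{numerator}. Multiplying by $(1+2/\delta_i)\tau_{k_i,l_t,t}^2$ and pairing the $\delta_i\rho\Delta$ part of $\tau^2$ with this numerator term is precisely what produces $\tfrac{\rho}{\Delta}D^2\overline{\E\|\nabla f_i\|^2}$; no case split is needed.

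Your case split, by contrast, does not close. In the ``bad'' regime $\Delta^2\lesssim D^2\|\nabla f_i\|^2$ you propose to bound the contribution by $(1+2/\delta_i)\tau_{k_i,l_t,t}^2$ and then charge it to $\tfrac{\rho}{\Delta}D^2\|\nabla f_i\|^2$. But $\tau_{k_i,l_t,t}^2$ is not $\approx\delta_i\rho\Delta$ alone: it also contains $c_{k_i,l_t,t}^2$ and $A^4\E\|\nabla\bar f_i\|^2$. With the $(1+2/\delta_i)$ prefactor these become $c_{k_i,l_t,t}^2/\delta_i$ and $A^4\E\|\nabla\bar f_i\|^2/\delta_i$, which are too large to be absorbed into the target right-hand side $(1+\delta_i/2)\overline{c_{k_i,l_t,t}^2}+\delta_i\overline{\E\|\nabla\bar f_i\|^2}+\cdots$ without further assumptions (you would need $A^4\lesssim\delta_i^2$, which is not given). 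The fix is simply to adopt the paper's move: put $D^2\|\nabla f_i\|^2$ on the Markov side rather than in the denominator, and the obstacle you flagged disappears.
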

\begin{proof}[Proof of Lemma \ref{lemma:Tau-3-bound-federated-clustering}]
This sequence of steps bounds the clustering error due to points not from client $i$'s cluster. Using Young's inequality for the first step, 
\begin{align}
    &\E_{j \in \G_i:j \not\sim i}\|y_{j,l_t,t}-\E_x \bar{g}_i(x_{i,t-1})\|^2
    \\&\leq
    \bigg(1+\frac{\delta_i}{2}\bigg)\E\|v_{i,l_t-1,t}- \E_x \bar{g}_i(x_{i,t-1})\|^2+
    \bigg(1+\frac{2}{\delta_i}\bigg)\E_{j \in \G_i:j \not\sim i}\|y_{j,l_t,t}-v_{i,l_t-1,t}\|^2\\
    &\leq
    \bigg(1+\frac{\delta_i}{2}\bigg)c_{k_i,l_t,t}^2 + \bigg(1+\frac{2}{\delta_i}\bigg)\E_{j \in \G_i:j \not\sim i}\|y_{j,l_t,t}-v_{i,l_t-1,t}\|^2\\
    &=
    \bigg(1+\frac{\delta_i}{2}\bigg)c_{k_i,l_t,t}^2 
    + \bigg(1+\frac{2}{\delta_i}\bigg)\E_{j \in \G_i:j \not\sim i}[\|g_j(x_{i,t-1})-v_{i,l_t-1,t}\|^2\mathbbm{1}\{\|g_j(x_{i,t-1})-v_{i,l_t-1,t}\|\leq\tau_{k_i,l_t,t}\}]
    \\
    &\leq
    \bigg(1+\frac{\delta_i}{2}\bigg)c_{k_i,l_t,t}^2 + \bigg(1+\frac{2}{\delta_i}\bigg)\tau_{k_i,l_t,t}^2\mathbb{P}_{j \in \G_i:j \not\sim i}(\|g_j(x_{i,t-1})-v_{i,l_t-1,t}\|\leq\tau_{k_i,l_t,t})\\
    &\lesssim
    \bigg(1+\frac{\delta_i}{2}\bigg)c_{k_i,l_t,t}^2 + \bigg(\frac{1}{\delta_i}\bigg)(c_{k_i,l_t,t}^2+A^4\E\|\nabla \bar{f}_i(x_{i,t-1})\|^2+\delta_i\rho\Delta)
    \mathbb{P}_{j \in \G_i:j \not\sim i}(\|g_j(x_{i,t-1})-v_{i,l_t-1,t}\|\leq\tau_{k_i,l_t,t}). 
\end{align}
The next step is to bound the probability term in the inequality above. Note that if $\|v_{i,l_t-1,t}-g_j(x_{i,t-1})\| \leq \tau_{k_i^t,l_t}$, then
\begin{align}
    \|\E_x g_j(x_{i,t-1}) - \E_x g_i(x_{i,t-1})\|^2
    &\lesssim
    \|g_j(x_{i,t-1}) - \E_x g_j(x_{i,t-1})\|^2 + \|g_j(x_{i,t-1}) - v_{i,l_t-1,t}\|^2 + 
    \|v_{i,l_t-1,t} - \E_x g_i(x_{i,t-1})\|^2 \\
    &\lesssim
    \|g_j(x_{i,t-1}) - \E_x g_j(x_{i,t-1})\|^2 + \tau_{k_i,l_t,t}^2 \\
    &\phantom{{}=1} + 
    \|v_{i,l_t-1,t} - \E_x \bar{g}_i(x_{i,t-1})\|^2 +
    \|\E_x g_i(x_{i,t-1}) - \E_x \bar{g}_i(x_{i,t-1})\|^2\\
    &\lesssim
    \|g_j(x_{i,t-1}) - \E_x g_j(x_{i,t-1})\|^2 + \tau_{k_i,l_t,t}^2 \\
    &\phantom{{}=1} + 
    \|v_{i,l_t-1,t} - \E_x \bar{g}_i(x_{i,t-1})\|^2 +
    A^2\|\nabla \bar{f}_i(x_{i,t-1})\|^2.
\end{align}
By Assumption 4, the previous inequality implies
\begin{align}
    \Delta^2 - D^2\|\nabla f_i(x_{i,t-1})\|^2
    &\lesssim
    \|g_j(x_{i,t-1}) - \E_x g_j(x_{i,t-1})\|^2 + \tau_{k_i,l_t,t}^2 + A^2\|\nabla \bar{f}_i(x_{i,t-1})\|^2 + \|v_{i,l_t-1,t} - \E_x \bar{g}_i(x_{i,t-1})\|^2
\end{align}
which, summing over $t$ and dividing by $T$, implies
\begin{align}
    \overline{\|g_j(x_{i,t-1}) - \E_x g_j(x_{i,t-1})\|^2} + \overline{\|v_{i,l_t-1,t}-\E_x\bar{g}_i(x_{i,t-1})\|^2}+ A^2\overline{\|\nabla \bar{f}_i(x_{i,t-1})\|^2} + D^2\overline{\|\nabla f_i(x_{i,t-1})\|^2}
    &\gtrsim
    \Delta^2 - \overline{\tau_{k_i,l_t,t}^2}.
\end{align}
By Markov's inequality, the probability of this event is upper-bounded by
\begin{align}
    &\frac{\rho^2 + \overline{\E\|v_{i,l_t-1,t}-\E_x\bar{g}_i(x_{i,t-1})\|^2}+A^2\overline{\E\|\nabla \bar{f}_i(x_{i,t-1})\|^2} + D^2\overline{\E\|\nabla f_i(x_{i,t-1})\|^2}}{\Delta^2 - \overline{\tau_{k_i,l_t,t}^2}}\\
    &\lesssim
    \frac{\rho^2 + \overline{c_{k_i,l_t,t}^2}+A^2\overline{\E\|\nabla \bar{f}_i(x_{i,t-1})\|^2} + D^2\overline{\E\|\nabla f_i(x_{i,t-1})\|^2}}{\Delta^2},
\end{align}
where the second inequality holds due to the constraint on $\Delta$ from the theorem statement. 
Therefore,
\begin{align}
    &\overline{\E_{j \in \G_i:j \not\sim i}\|y_{j,l_t,t} - \E_x \bar{g}_i(x_{i,t-1})\|^2}\\
    &\lesssim
    \bigg(1+\frac{\delta_i}{2} + \frac{\rho}{\Delta} + \frac{\rho^2 + \overline{c_{k_i,l_t,t}^2} + A^2\overline{\E\|\nabla \bar{f}_i(x_{i,t-1})\|^2} + D^2\overline{\E\|\nabla f_i(x_{i,t-1})\|^2}}{\delta_i\Delta^2}\bigg)\overline{c_{k_i,l_t,t}^2}\\
    &\phantom{{}=1} +
    \bigg(\frac{\rho A^2}{\Delta} + \frac{A^4(\rho^2 + \overline{c_{k_i,l_t,t}^2} + A^2\overline{\E\|\nabla \bar{f}_i(x_{i,t-1})\|^2}+D^2\overline{\E\|\nabla f_i(x_{i,t-1})\|^2})}{\delta_i\Delta^2}\bigg)\overline{\E\|\nabla \bar{f}_i(x_{i,t-1})\|^2}\\
    &\phantom{{}=1} +
    \frac{\rho}{\Delta}D^2\overline{\E\|\nabla f_i(x_{i,t-1})\|^2} + \frac{\rho^3}{\Delta}\\
    &\lesssim
    \bigg(1+\frac{\delta_i}{2}\bigg)\overline{c_{k_i,l_t,t}^2} + \delta_i\overline{\E\|\nabla \bar{f}_i(x_{i,t-1})\|^2} + \frac{\rho}{\Delta}D^2\overline{\E\|\nabla f_i(x_{i,t-1})\|^2} + \frac{\rho^3}{\Delta},
\end{align}
where for the last inequality we again apply the constraint on $\Delta$.
\end{proof}
\begin{lemma}[Clustering Error due to Bad Points]\label{lemma:E-2-bound-federated-clustering}
    \begin{align}
        \overline{\E\bigg\|\bigg(\frac{1}{|\B_i|}\sum_{j \in \B_i}y_{j,l_t,t}\bigg) - \E_x\bar{g}_i(x_{i,t-1})\bigg\|^2}
        &\lesssim
        \overline{c_{k_i,l_t,t}^2} + A^4\overline{\E\|\nabla\bar{f}_i(x_{i,t-1})\|^2} + \delta_i\rho\Delta
    \end{align}
\end{lemma}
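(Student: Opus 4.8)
The plan is to follow the same skeleton as the i.i.d.\ counterpart, Lemma~\ref{lemma:E-2-bound-threshold-clustering}, now tracking the gradient-setting quantities and the extra intra-cluster-heterogeneity term. First I would apply Jensen's inequality to pull the squared norm inside the average over $\B_i$, so that it suffices to bound $\overline{\E_{j \in \B_i}\|y_{j,l_t,t} - \E_x\bar{g}_i(x_{i,t-1})\|^2}$ term by term. Young's inequality then splits each summand as
\begin{align}
    \E\|y_{j,l_t,t} - \E_x\bar{g}_i(x_{i,t-1})\|^2
    \lesssim
    \E\|y_{j,l_t,t} - v_{i,l_t-1,t}\|^2 + \E\|v_{i,l_t-1,t} - \E_x\bar{g}_i(x_{i,t-1})\|^2,
\end{align}
and the second term is exactly $c_{k_i,l_t,t}^2$ by definition. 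Since the quantity being bounded is ultimately multiplied by $\beta_i$ in Lemma~\ref{lemma:cluster-center-error-federated-clustering}, I may assume $\B_i \neq \emptyset$; if $\beta_i = 0$ the statement is vacuous.

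The key observation concerns the clipping map itself: for \emph{any} client $j$ — in particular any Byzantine client, whatever value it reports — the clipped value $y_{j,l_t,t}$ satisfies $\|y_{j,l_t,t} - v_{i,l_t-1,t}\| \le \tau_{k_i,l_t,t}$ \emph{pointwise}. Indeed, if the reported gradient lies inside the ball of radius $\tau_{k_i,l_t,t}$ centered at $v_{i,l_t-1,t}$ then $y_{j,l_t,t}$ equals that gradient and the bound is immediate; if it lies outside, then $y_{j,l_t,t} = v_{i,l_t-1,t}$ and the displacement is $0$. The adversary's best move under the threat model is therefore to sit exactly on the boundary of this ball, so $\E_{j\in\B_i}\|y_{j,l_t,t} - v_{i,l_t-1,t}\|^2 \le \tau_{k_i,l_t,t}^2$. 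Plugging in $\tau_{k_i,l_t,t}^2 \approx c_{k_i,l_t,t}^2 + A^4\E\|\nabla\bar{f}_i(x_{i,t-1})\|^2 + \delta_i\rho\Delta$ and combining with the $c_{k_i,l_t,t}^2$ term above gives, for each $t$,
\begin{align}
    \E\bigg\|\frac{1}{|\B_i|}\sum_{j \in \B_i}y_{j,l_t,t} - \E_x\bar{g}_i(x_{i,t-1})\bigg\|^2
    \lesssim
    c_{k_i,l_t,t}^2 + A^4\E\|\nabla\bar{f}_i(x_{i,t-1})\|^2 + \delta_i\rho\Delta,
\end{align}
and averaging over $t \in [T]$ in the $\overline{\cdot}$ notation yields the claim.

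The main obstacle is not technical difficulty but the inherent limit of the argument: because a Byzantine client can report anything, the only leverage is the geometry of the clipping ball, so the $\tau^2 \approx c^2 + A^4\|\nabla\bar{f}_i\|^2 + \delta_i\rho\Delta$ bound cannot be improved — this is exactly why the federated-clustering rate ultimately carries the additive $\beta_i\sigma\Delta$ term. The one point to verify is consistency: the radius $\tau_{k_i,l_t,t}$ used here is the same quantity fixed in the Notation paragraph and used in Lemmas~\ref{lemma:Tau-1-bound-federated-clustering}--\ref{lemma:Tau-3-bound-federated-clustering}, and since $c_{k_i,l_t,t}^2$ and $\E\|\nabla\bar{f}_i(x_{i,t-1})\|^2$ are deterministic scalars, $\tau_{k_i,l_t,t}$ is itself deterministic, which legitimizes the pointwise clipping bound inside the expectation.
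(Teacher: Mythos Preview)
Your proposal is correct and follows essentially the same approach as the paper: pull the norm inside the average over $\B_i$, split via Young's inequality into $\|y_{j,l_t,t}-v_{i,l_t-1,t}\|^2$ and $c_{k_i,l_t,t}^2$, bound the former pointwise by $\tau_{k_i,l_t,t}^2$ using the clipping geometry, substitute the definition of $\tau_{k_i,l_t,t}^2$, and average over $t$. Your added remarks on the $\beta_i=0$ edge case and on $\tau_{k_i,l_t,t}$ being a deterministic scalar are sound and slightly more careful than the paper's presentation, but do not change the argument.
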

\begin{proof}[Proof of Lemma \ref{lemma:E-2-bound-federated-clustering}]
This lemma bounds the clustering error due to the bad clients from client $i$'s perspective. The goal of such clients would be to corrupt the cluster-center estimate of client $i$'s cluster as much as possible at each round. They can have the maximum negative effect by setting their gradients to be just inside the thresholding radius around client $i$'s cluster-center estimate. This way, the gradients will keep their value (rather than be assigned the value of the current cluster-center estimate per our update rule), but they will have maximal effect in moving the cluster-center estimate from its current position. Therefore, in step 3 of the inequalities below, we can not do better than bounding the distance between these bad points and the current cluster center estimate (i.e. $\|y_{j,l_t,t} - v_{i,l_t-1,t}\|^2$) by the thresholding radius ($\tau_{k_i,l_t,t}^2$). 
\begin{align}
    \E\bigg\|\bigg(\frac{1}{|\B_i|}\sum_{j \in \B_i}y_{j,l_t,t}\bigg) - \E_x\bar{g}_i(x_{i,t-1})\bigg\|^2
    &\leq
    \E_{j \in \B_i}\|y_{j,l_t,t} - \E_x\bar{g}_i(x_{i,t-1})\|^2\\
    &\lesssim
    \E_{j \in \B_i}\|y_{j,l_t,t} - v_{i,l_t-1,t}\|^2 + \E\|v_{i,l_t-1,t} - \E_x\bar{g}_i(x_{i,t-1})\|^2\\
    &\leq
    \tau_{k_i,l_t,t}^2 + c_{k_i,l_t,t}^2\\
    &\lesssim
    c_{k_i,l_t,t}^2+A^4\E\|\nabla \bar{f}_i(x_{i,t-1})\|^2 + \delta_i\rho\Delta.
\end{align}
The last inequality applies the definition of $\tau_{k_i,l_t,t}$, and the result of the lemma follows by summing this inequality over $t$ and dividing by $T$.
\end{proof}

\subsection{Proof of Theorem \ref{thm:momentum-clustering}}\label{proof:momentum-clustering}
First we establish some notation.
\paragraph{Notation.} 
\begin{itemize}
    \item $\G_i$ are the good clients and $\B_i$ the bad clients from client $i$'s perspective.
    \item $\E_x$ denotes conditional expectation given the parameter, e.g. $\E_x g(x) = \E[g(x)|x]$. $\E$ denotes expectation over all randomness.
    \item $k_i^t$ is the cluster to which client $i$ is assigned at round $t$ of the algorithm.
    \item $\overline{X_t} \triangleq \frac{1}{T}\sum_{t=1}^T X_t$ for a general variable $X_t$ indexed by $t$.
    \item $\bar{f}_i(x) \triangleq \frac{1}{n_i}\sum_{j \in \G_i:j \sim i}f_j(x)$
    \item $\bar{m}_{i,t} \triangleq \frac{1}{n_i}\sum_{j \in \G_i: j \sim i}m_{j,t}$
    \item We introduce a variable $\rho^2$ to bound the variance of the momentums
    \begin{align}
        \E_x\|m_{i,t} - \E_x m_{i,t}\|^2 
        &\leq
        \rho^2,
    \end{align}
    and show in Lemma \ref{lemma:momentum-variance-reduction} how this can be written in terms of the variance of the gradients, $\sigma^2$.
    \item $l_t$ is the number of rounds that Threshold-Clustering is run in round $t$ of Federated-Clustering.
    \item $k_i$ denotes the cluster to which client $i$ is assigned.
    \item $v_{k_i,l,t}$ denotes the gradient update for client $i$ in round $t$ of Momentum-Clustering and round $l$ of Threshold-Clustering. That is, $v_{k_i,l,t}$ corresponds to the quantity returned in Step 4 of Algorithm \ref{alg:momentum-clustering}.
    \item To facilitate the proof, we introduce a variable $c_{k_i,l,t}$ that quantifies the distance from the cluster-center-estimates to the true cluster means. Specifically, for client $i$'s cluster $k_i$ at round $t$ of Federated-Clustering and round $l$ of Threshold-Clustering we set
    \begin{align}
        c_{k_i,l,t}^2 = \E\|v_{k_i,l-1,t} - \E_x \bar{m}_{i,t}\|^2.
    \end{align}
    \item For client $i$'s cluster $k_i$ at round $t$ of Federated-Clustering and round $l$ of Threshold-Clustering, we use thresholding radius
    \begin{align}
       \tau_{k_i,l,t}^2 
       \approx
        c_{k_i,l,t}^2+\delta_i\rho\Delta.
    \end{align}
    \item Finally, we introduce a variable, $y_{j,l,t}$, to denote the points clipped by Threshold-Clustering:
    \begin{align}
        v_{k_i,l,t} = \frac{1}{N}\sum_{j \in [N]} \underbrace{ \mathbbm{1}(\|m_{j,t} - v_{k_i,l-1,t}\| \leq \tau_{k_i,l,t}) + v_{k_i,l-1,t}\mathbbm{1}(\|m_{j,t} - v_{k_i,l-1,t}\| > \tau_{k_i,l,t}}_{y_{j,l,t}}).
    \end{align}
\end{itemize}

\begin{proof}[Proof of Theorem \ref{thm:momentum-clustering}]
In this proof, our goal is to bound $\overline{\E\|\nabla f_i(x_{i,t-1})\|^2}$. We use $L$-smoothness of the loss objectives to get started, and justify the non-trivial steps afterwards.
\begin{align}
    \E f_{i}(x_{i,t})
    &\overset{(i)}{\leq}
    \E f_{i}(x_{i,t-1}) + \E\langle\nabla f_i(x_{i,t-1}),x_{i,t}-x_{i,t-1}\rangle + \frac{L}{2}\E\|x_{i,t}-x_{i,t-1}\|^2 \nonumber \\
    &=
    \E f_{i}(x_{i,t-1}) - \eta\E\langle\nabla f_{i}(x_{i,t-1}),v_{k_i,l_t,t}\rangle + \frac{L\eta^2}{2}\E\|v_{k_i,l_t,t}\|^2 \nonumber\\
    &=
    \E f_{i}(x_{i,t-1}) + \frac{\eta}{2}\E\|v_{k_i,l_t,t}-\nabla f_{i}(x_{i,t-1})\|^2 - \frac{\eta}{2}\E\|\nabla f_{i}(x_{i,t-1})\|^2 - \frac{\eta}{2}(1-L\eta)\E\|v_{k_i,l_t,t}\|^2 \nonumber\\
    &\overset{(ii)}{\lesssim}
    \E f_{i}(x_{i,t-1}) + \eta\E\|v_{k_i,l_t,t} - \E_x \bar{m}_{i,t}\|^2 + \eta\E\|\E_x \bar{m}_{i,t} - \nabla f_i(x_{i,t-1})\|^2 \\
    &\phantom{{}=1} -
    \frac{\eta}{2}\E\|\nabla f_{i}(x_{i,t-1})\|^2 - \frac{\eta}{2}(1-L\eta)\E\|v_{k_i,l_t,t}\|^2\\
    &\overset{(iii)}{\lesssim}
    \E f_{i}(x_{i,t-1}) +  \eta\bigg(\frac{\rho^2}{n_i} + \frac{\rho^3}{\Delta} + \beta_i\rho\Delta\bigg) + \eta\E\|\E_x \bar{m}_{i,t} - \nabla f_i(x_{i,t-1})\|^2 \\
    &\phantom{{}=1}
    -
    \frac{\eta}{2}\E\|\nabla f_{i}(x_{i,t-1})\|^2 - \frac{\eta}{2}(1-L\eta)\E\|v_{k_i,l_t,t}\|^2\label{eq:l-smooth-inequality}.
\end{align}
Justifications for the labeled steps are:
\begin{itemize}
    \item (i) $L$-smoothness of $f_i$ and $\eta \leq \nicefrac{1}{L}$
    \item (ii) Young's inequality
    \item (iii) Lemma \ref{lemma:cluster-center-error-momentum-clustering}
\end{itemize}
Now it remains to bound the $\E\|\E_x \bar{m}_{i,t} - \nabla f_i(x_{i,t-1})\|^2$ term.
\begin{align}
    \E\|\E_x \bar{m}_{i,t} - \nabla f_i(x_{i,t-1})\|^2
    &\leq
    \E\|\E_x m_{i,t} - \nabla f_i(x_{i,t-1})\|^2\\
    &=
    (1-\alpha)^2\E\|\E_x m_{i,t-1} - \nabla f_i(x_{i,t-2}) + \nabla f_i(x_{i,t-2}) - \nabla f_i(x_{i,t-1})\|^2\\
    &\overset{(i)}{\lesssim}
    (1-\alpha)^2(1+\alpha)\E\|\E_x m_{i,t-1} - \nabla f_i(x_{i,t-2})\|^2 + (1-\alpha)^2\bigg(1+\frac{1}{\alpha}\bigg) L^2\eta^2\E\|v_{k_i,l_{t-1},t-1}\|^2\\
    &\overset{(ii)}{\leq}
    \frac{1}{2}(1-L\eta)\sum_{t=2}^T\E\|v_{k_i,l_{t-1},t-1}\|^2,
\end{align}
where justifications for the labeled steps are:
\begin{itemize}
    \item (i) Young's inequality
    \item (ii) Assumption that $\alpha \gtrsim L\eta$
\end{itemize}
Plugging this bound back into \eqref{eq:l-smooth-inequality}, summing over $t=1:T$, and dividing by $T$ gives
\begin{align}
    \frac{\eta}{2T}\sum_{t=1}^T\E\|\nabla f_i(x_{i,t-1})\|^2
    &\lesssim
    \frac{\E(f_i(x_{i,0}) - f_i^*)}{T} +  \eta\bigg(\frac{\rho^2}{n_i} + \frac{\rho^3}{\Delta} + \beta_i\rho\Delta\bigg)
    \label{eq:rate-with-eta-momentum-clustering}.
\end{align}
By the variance reduction from momentum (Lemma \ref{lemma:momentum-variance-reduction}) it follows from \eqref{eq:rate-with-eta-momentum-clustering} that
\begin{align}
    \frac{1}{T}\sum_{t=1}^T\E\|\nabla f_i(x_{i,t-1})\|^2
    &\lesssim
    \frac{\E(f_i(x_{i,0}) - f_i^*)}{\eta T} + \bigg(\frac{\alpha \sigma^2}{n_i} + \frac{\alpha^{\nicefrac{3}{2}}\sigma^3}{\Delta} + \beta_i\sqrt{\alpha}\sigma\Delta\bigg)\\
    &\lesssim
    \frac{\E(f_i(x_{i,0}) - f_i^*)}{\eta T} + \bigg(\frac{L\eta\sigma^2}{n_i} + \frac{L\eta\sigma^3}{\Delta} + \beta_i\sqrt{L\eta}\sigma\Delta\bigg).
\end{align}
Finally, setting $\eta \lesssim \min\bigg\{\frac{1}{L}, \sqrt{\frac{\E(f_i(x_{i,0}) - f_i^*)}{LT(\nicefrac{\sigma^2}{n_i} + \nicefrac{\sigma^3}{\Delta})}}\bigg\}$,
\begin{align}
    \frac{1}{T}\sum_{t=1}^T\E\|\nabla f_i(x_{i,t-1})\|^2
    &\lesssim
    \sqrt{\frac{\nicefrac{\sigma^2}{n_i} + \nicefrac{\sigma^3}{\Delta}}{T}} + \frac{\beta_i\sigma\Delta}{T^{\frac{1}{4}}(\nicefrac{\sigma^2}{n_i} + \nicefrac{\sigma^3}{\Delta})^{\frac{1}{4}}}.
\end{align}
\end{proof}

\begin{lemma}[Variance reduction using Momentum]\label{lemma:momentum-variance-reduction}
Suppose that for all $i \in [N]$ and $x$,
\begin{align}
    \E\|g_i(x) - \E_x g_i(x)\|^2
    &\leq
    \sigma^2.
\end{align}
Then
\begin{align}
    \E\|m_{i,t}-\E_x m_{i,t}\|^2 
    &\leq 
    \alpha\sigma^2.
\end{align}
\begin{proof}[Proof of Lemma \ref{lemma:momentum-variance-reduction}]
\begin{align*}
    \E\|m_{i,t}-\E m_{i,t}\|^2
    &=
    \E\|\alpha (g_i(x_{i,t-1})-\nabla f_i(x_{i,t-1})) + (1-\alpha)(m_{i,t-1}-\E m_{i,t-1})\|^2\\
    &\leq
    \alpha^2\E\|g_i(x_{i,t-1})-\nabla f_i(x_{i,t-1})\|^2 + (1-\alpha)^2\E\|m_{i,t-1}-\E m_{i,t-1}\|^2\\
    &\leq
    \alpha^2\E\|g_i(x_{i,t-1})-\nabla f_i(x_{i,t-1})\|^2 + (1-\alpha)\E\|m_{i,t-1}-\E m_{i,t-1}\|^2\\
    &\leq
    \alpha^2\sigma^2\sum_{q=0}^{t-1}(1-\alpha)^q\\
    &\leq
    \alpha^2\sigma^2\frac{(1-\alpha)^t-1}{(1-\alpha)-1}\\
    &\leq
    \alpha^2\sigma^2\frac{1}{\alpha}\\
    &=
    \alpha\sigma^2.
\end{align*}
\end{proof}
\end{lemma}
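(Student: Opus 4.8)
The plan is to recognize $m_{i,t}-\E_x m_{i,t}$ as an exponentially-weighted accumulation of the stochastic-gradient noise and to control it by a one-line recursion plus a geometric-series bound. Write $\xi_{i,s} \triangleq g_i(x_{i,s})-\nabla f_i(x_{i,s})$ for the gradient noise; by Assumption 6 each $\xi_{i,s}$ has zero conditional mean, satisfies $\E\|\xi_{i,s}\|^2\le\sigma^2$, and is (conditionally) uncorrelated with everything computed before step $s$. From the definition \eqref{eq:momentum-definition}, subtracting conditional expectations gives the recursion
\begin{align*}
m_{i,t}-\E_x m_{i,t} = \alpha\,\xi_{i,t-1} + (1-\alpha)\bigl(m_{i,t-1}-\E_x m_{i,t-1}\bigr).
\end{align*}

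Now I would set $V_t \triangleq \E\|m_{i,t}-\E_x m_{i,t}\|^2$. Taking squared norms in the recursion and expectations, the cross term vanishes because $\xi_{i,t-1}$ is orthogonal in expectation to the past, so $V_t \le \alpha^2\sigma^2 + (1-\alpha)^2 V_{t-1} \le \alpha^2\sigma^2 + (1-\alpha)V_{t-1}$. Unrolling this from $V_0=0$ (taking $m_{i,0}=0$),
\begin{align*}
V_t \;\le\; \alpha^2\sigma^2\sum_{q=0}^{t-1}(1-\alpha)^q \;\le\; \alpha^2\sigma^2\cdot\frac{1}{\alpha} \;=\; \alpha\sigma^2,
\end{align*}
which is exactly the claim. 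Equivalently one may unroll \eqref{eq:momentum-definition} directly to $m_{i,t}=\alpha\sum_{q=0}^{t-1}(1-\alpha)^q g_i(x_{i,t-1-q})$, whence $m_{i,t}-\E_x m_{i,t}=\alpha\sum_{q}(1-\alpha)^q\xi_{i,t-1-q}$, and the martingale-difference structure of the $\xi$'s kills all cross terms before one sums the geometric series.

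The only point requiring care — and the one I would flag as the main, though minor, obstacle — is the bookkeeping around conditioning: $m_{i,t}$ depends on the whole trajectory $x_{i,0},\dots,x_{i,t-1}$, so one must be precise about what $\E_x$ conditions on and verify that the noises at distinct rounds are (conditionally) uncorrelated, so the cross terms genuinely drop and the recursion $V_t\le\alpha^2\sigma^2+(1-\alpha)^2V_{t-1}$ is legitimate; after that the computation is a routine geometric sum. Finally I would remark that this is precisely what lets the proof of Theorem \ref{thm:momentum-clustering} replace the variance $\sigma^2$ by the reduced $\rho^2=\alpha\sigma^2$ everywhere, and hence run without the batch-size constraint needed for Theorem \ref{thm:federated-clustering}.
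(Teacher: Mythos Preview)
Your proposal is correct and follows essentially the same route as the paper: expand the momentum recursion, drop the cross term by (conditional) orthogonality of the fresh gradient noise with the past, obtain $V_t\le\alpha^2\sigma^2+(1-\alpha)V_{t-1}$, and unroll the resulting geometric series to get $\alpha\sigma^2$. If anything, you are slightly more explicit than the paper about why the cross term vanishes and about the conditioning bookkeeping, but the argument is the same.
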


\begin{lemma}[Bound on Clustering Error]\label{lemma:cluster-center-error-momentum-clustering}
\begin{align}
    \overline{\E\|v_{k_i^t,l_t} - \E_x\bar{m}_{i,t}\|^2}
    &\lesssim
    \frac{\rho^2}{n_i} + \frac{\rho^3}{\Delta} + \beta_i\rho\Delta
\end{align}
\end{lemma}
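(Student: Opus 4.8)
The plan is to observe that Lemma \ref{lemma:cluster-center-error-momentum-clustering} is, at each fixed round $t$, exactly the clustering-estimation problem analyzed in Theorem \ref{thm:threshold-clustering}: the points being clustered are the momentums $\{m_{j,t}\}_{j\in[N]}$, which by Assumption 8 are i.i.d.\ within each cluster, by Assumption 9 have cluster means separated by at least $\Delta$, and by the momentum variance-reduction bound (the variable $\rho^2$, cf.\ Lemma \ref{lemma:momentum-variance-reduction}) have within-cluster variance at most $\rho^2$. So the proof reduces to re-running the argument of Theorem \ref{thm:threshold-clustering} with $\sigma$ replaced by $\rho$, for each $t$, and then averaging the resulting bound over $t=1,\dots,T$; since the right-hand side $\tfrac{\rho^2}{n_i}+\tfrac{\rho^3}{\Delta}+\beta_i\rho\Delta$ does not depend on $t$, the average is unchanged.

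Concretely, for a fixed $t$ I would keep the thresholding radius $\tau_{k_i,l,t}^2\approx c_{k_i,l,t}^2+\delta_i\rho\Delta$ fixed as in the notation above and reproduce the six-step chain from the proof of Theorem \ref{thm:threshold-clustering}: (i) split $v_{k_i^t,l_t}-\E_x\bar m_{i,t}$ into a good-points average and a bad-points average via Young's inequality, giving $\mathcal{E}_1+\beta_i\mathcal{E}_2$; (ii) bound $\mathcal{E}_1\lesssim(1-\delta_i)c_{k_i,l_t,t}^2+\tfrac{\rho^2}{n_i}+\tfrac{\rho^3}{\Delta}$ and $\mathcal{E}_2\lesssim c_{k_i,l_t,t}^2+\delta_i\rho\Delta$ using the momentum analogues of Lemmas \ref{lemma:E-1-bound-threshold-clustering}--\ref{lemma:E-2-bound-threshold-clustering}, whose internal $\mathcal{T}_1,\mathcal{T}_2,\mathcal{T}_3$ decomposition goes through verbatim precisely because Assumption 8 makes all in-cluster momentums identically distributed, so no $A$- or $D$-type terms appear (this is the key simplification over Lemma \ref{lemma:cluster-center-error-federated-clustering}); (iii) absorb $\beta_i\lesssim\delta_i$; (iv) recognize the inequality as the recursion $c_{k_i,l+1,t}^2\lesssim(1-\tfrac{\delta_i}{2})c_{k_i,l,t}^2+(\tfrac{\rho^2}{n_i}+\tfrac{\rho^3}{\Delta}+\beta_i\rho\Delta)$ and unroll it over the $l_t$ thresholding steps; (v)--(vi) use a bound on the initial estimate together with $l_t\gtrsim\max\{1,\max_i \log(\rho/\Delta)/\log(1-\delta_i/2)\}$ (equivalently the $\log(\sqrt{\alpha}\sigma/\Delta)$ requirement in Theorem \ref{thm:momentum-clustering}, since $\rho\le\sqrt\alpha\sigma$) to kill the contracting term. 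The hypothesis $\Delta\gtrsim\sigma/\delta_i\ge\rho/\delta_i$ supplies every ``$\Delta$ large'' step, e.g.\ the Markov-inequality estimate inside the $\mathcal{T}_3$ analogue.

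The one genuinely new ingredient relative to Theorem \ref{thm:threshold-clustering} is controlling the initialization $c_{k_i,1,t}^2=\E\|v_{k_i,t-1}-\E_x\bar m_{i,t}\|^2$ at the \emph{start} of round $t$'s thresholding, since here this is not a free parameter but the previous round's output. I would argue that $v_{k_i,t-1}$ is already within $O(\rho)$ of $\E_x\bar m_{i,t-1}$ by the round-$(t-1)$ conclusion, and that $\|\E_x\bar m_{i,t-1}-\E_x\bar m_{i,t}\|$ is small because the momentum recursion \eqref{eq:momentum-definition} moves by only an $O(\alpha)$-fraction per step while $\alpha\gtrsim L\eta$ with $\eta$ small and $f_i$ is $L$-smooth (Assumption 7); hence $c_{k_i,1,t}^2\lesssim\rho^2$ up to the same $\delta_i$-factors tolerated in Theorem \ref{thm:threshold-clustering}, which is exactly what makes step (v) go through with the stated number of thresholding iterations. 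For $t=1$ the initialization is the user-supplied $\{v_{k,0}\}$, handled exactly as in Theorem \ref{thm:threshold-clustering}.

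I expect the main obstacle to be precisely this initialization/slow-drift bookkeeping: verifying that the per-round thresholding always starts ``warm'' enough that $l_t$ iterations suffice, which requires coupling the momentum drift $\|\E_x\bar m_{i,t}-\E_x\bar m_{i,t-1}\|$ to the step size via the momentum recursion and smoothness. Everything downstream is a faithful transcription of the Theorem \ref{thm:threshold-clustering} argument with $\sigma\mapsto\rho$, followed by a trivial average over $t$.
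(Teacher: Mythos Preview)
Your proposal is correct and matches the paper's proof almost step for step: the good/bad split via Young's inequality, the $\mathcal{E}_1,\mathcal{E}_2$ bounds through the $\mathcal{T}_1,\mathcal{T}_2,\mathcal{T}_3$ decomposition (Lemmas \ref{lemma:E-1-bound-momentum-clustering}--\ref{lemma:E-2-bound-momentum-clustering} in the paper), the recursion $c_{k_i,l+1,t}^2\lesssim(1-\tfrac{\delta_i}{2})c_{k_i,l,t}^2+(\tfrac{\rho^2}{n_i}+\tfrac{\rho^3}{\Delta}+\beta_i\rho\Delta)$, and the use of $l_t$ iterations to contract the initial term down to $\tfrac{\rho}{\Delta}c_{k_i,1,t}^2$.

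The only substantive difference is how $c_{k_i,1,t}^2\lesssim\rho^2$ is obtained. The paper dispatches it in one line, writing $c_{k_i,1,t}^2=\E\|\bar m_{i,t}-\E_x m_{i,t}\|^2\le\rho^2$ (effectively treating the warm start as a fresh in-cluster momentum and invoking Assumption 8 so that $\E_x\bar m_{i,t}=\E_x m_{i,t}$). You instead propose an inductive slow-drift argument that couples $v_{k_i,t-1}$ to $\E_x\bar m_{i,t}$ via the previous round's conclusion plus the $O(\alpha)$ per-step change in the momentum mean. Your route is more faithful to the algorithm as written (initialization $v_{k,t-1}$ from the prior round), and it is the part you correctly flag as the main obstacle; the paper's route is shorter but leans on reading the initialization as an in-cluster sample. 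Either way the needed bound is $c_{k_i,1,t}^2\lesssim\rho^2$, after which the two proofs are identical.
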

\begin{proof}[Proof of Lemma \ref{lemma:cluster-center-error-momentum-clustering}]
We prove the main result, and then justify each step afterwards.
\begin{align}
    \overline{\E\|v_{k_i,l_t,t}-\E_x\bar{m}_{i,t}\|^2}
    &=
    \overline{\E\bigg\|\frac{1}{N}\sum_{j \in [N]}y_{j,l_t,t} - \E_x\bar{m}_{i,t}\bigg\|^2}\\
    &=
    \overline{\E\bigg\|(1-\beta_i) \bigg(\frac{1}{|\G_i|}\sum_{j \in \G_i}y_{j,l_t,t} - \E_x\bar{m}_{i,t}\bigg) + \beta_i\bigg(\frac{1}{|\B_i|}\sum_{j \in \B_i}y_{j,t,l} - \E_x\bar{m}_{i,t}\bigg)\bigg\|^2}\\
    &\overset{(i)}{\leq}
    (1+\beta_i)(1-\beta_i)^2\overline{\E\bigg\|\bigg(\frac{1}{|\G_i|}\sum_{j \in \G_i}y_{j,l_t,t}\bigg) - \E_x\bar{m}_{i,t}\bigg\|^2} \\
    &\phantom{{}=1}+
    \bigg(1+\frac{1}{\beta_i}\bigg)\beta_i^2\overline{\E\bigg\|\bigg(\frac{1}{|\B_i|}\sum_{j \in \B_i}y_{j,l_t,t}\bigg) - \E_x\bar{m}_{i,t}\bigg\|^2}\\
    &\lesssim
    \overline{\E\bigg\|\bigg(\frac{1}{|\G_i|}\sum_{j \in \G_i}y_{j,l_t,t}\bigg) - \E_x\bar{m}_{i,t}\bigg\|^2}+
    \beta_i\overline{\E\bigg\|\bigg(\frac{1}{|\B_i|}\sum_{j \in \B_i}y_{j,l_t,t}\bigg) - \E_x\bar{m}_{i,t}\bigg\|^2}\\
    &\overset{(ii)}{\lesssim}
    (1-\delta_i+\beta_i)\overline{c_{k_i,l_t,t}^2} + \bigg(\frac{\rho^2}{n_i} + \frac{\rho^3}{\Delta} + \beta_i\rho\Delta\bigg)\\
    &\overset{(iii)}{\lesssim}
    (1-\nicefrac{\delta_i}{2})\overline{c_{k_i,l_t,t}^2} + \bigg(\frac{\rho^2}{n_i} + \frac{\rho^3}{\Delta} + \beta_i\rho\Delta\bigg)\\
    &\overset{(iv)}{\lesssim}
    \overline{(1-\nicefrac{\delta_i}{2})^{l_t}c_{k_i,1,t}^2} + \bigg(\frac{\rho^2}{n_i} + \frac{\rho^3}{\Delta} + \beta_i\rho\Delta\bigg)\\
    &\overset{(v)}{\leq}
    \frac{\rho}{\Delta}\overline{c_{k_i,1,t}^2} + \bigg(\frac{\rho^2}{n_i} + \frac{\rho^3}{\Delta} + \beta_i\rho\Delta\bigg).\label{eq:c-bound-momentum-clustering}
\end{align}
We justify each step.
\begin{itemize}
    \item (i) Young's inequality
    \item (ii) We prove this bound in Lemmas \ref{lemma:E-1-bound-momentum-clustering} and \ref{lemma:E-2-bound-momentum-clustering}. Importantly, it shows that the clustering error is composed of two quantities: $\mathcal{E}_1$, the error contributed by good points from the cluster's perspective, and $\mathcal{E}_2$, the error contributed by the bad points from the cluster's perspective.
    \item (iii) Assumption that $\beta_i \lesssim \min(\delta_i,\nicefrac{\delta_i}{A^4})$
    \item (iv) Since $\E\|v_{i,l_t,t} - \E_x \bar{m}_{i,t}\|^2 = c_{k_i,l_t+1,t}^2$, the inequality forms a recursion which we unroll over $l_t$ steps.
    \item (v) Assumption that $l_t \geq \max(1,\frac{\log(\nicefrac{\rho}{\Delta})}{\log(1-\nicefrac{\delta_i}{2})})$
\end{itemize}
Finally, we note that
\begin{align}
    c_{k_i^t,1,t}^2
    &=
    \E\|\bar{m}_{i,t} - \E_x m_{i,t}\|^2\\
    &\lesssim
    \E\|m_{j,t} - \E_x m_{j,t}\|^2 + \E\|\E_x m_{j,t} - \E_x m_{i,t}\|^2\\
    &\leq
    \rho^2.
\end{align}
Applying this bound to \eqref{eq:c-bound-momentum-clustering} gives
\begin{align}
    \overline{\E\|v_{k_i,l_t,t}-\E_x \bar{m}_{i,t}\|^2}
    &\lesssim
    \frac{\rho^2}{n_i} + \frac{\rho^3}{\Delta} + \beta_i\rho\Delta\bigg.
    \label{eq:n-cluster-c-bound-momentum-clustering}
\end{align}
\end{proof}

\begin{lemma}[Clustering Error due to Good Points]\label{lemma:E-1-bound-momentum-clustering}
\begin{align}
    \overline{\E\bigg\|\bigg(\frac{1}{|\G_i|}\sum_{j \in \G_i}y_{j,l_t,t}\bigg) - \E_x\bar{m}_{i,t}\bigg\|^2}
    \lesssim
    (1-\delta_i)\overline{c_{k_i,l_t,t}^2} + \bigg(\frac{\rho^2}{n_i} + \frac{\rho^3}{\Delta}\bigg).
\end{align}
\end{lemma}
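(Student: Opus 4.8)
The plan is to mirror the proof of Lemma~\ref{lemma:E-1-bound-threshold-clustering}, exploiting the fact that Assumption~8 (momentums i.i.d.\ within a cluster) plays exactly the role Assumption~1 played there; consequently, unlike the gradient-based Lemma~\ref{lemma:E-1-bound-federated-clustering}, no $A^2\E\|\nabla\bar f_i\|^2$ or $D^2\E\|\nabla f_i\|^2$ correction terms appear. First I would split $\G_i$ into the in-cluster indices $j\sim i$ and the out-of-cluster indices $j\not\sim i$, and use $\E_x\bar m_{i,t}=\frac{1}{n_i}\sum_{j\in\G_i:j\sim i}\E_x m_{j,t}$ (valid because under Momentum-Clustering clients of a cluster share parameters, so their momentums are i.i.d.) to rewrite the quantity inside the norm as
\begin{align*}
\frac{1}{|\G_i|}\sum_{j\in\G_i:j\sim i}(y_{j,l_t,t}-\E_x m_{j,t}) + \frac{1}{|\G_i|}\sum_{j\in\G_i:j\not\sim i}(y_{j,l_t,t}-\E_x\bar m_{i,t}).
\end{align*}
Applying Young's inequality with weight $\approx\delta_i/2$ separates these; splitting the first sum once more into a bias part $\E y_{j,l_t,t}-\E_x m_{j,t}$ and a fluctuation part $y_{j,l_t,t}-\E y_{j,l_t,t}$ produces the same three quantities $\Tau_1,\Tau_2,\Tau_3$ as in the thresholding proof.

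Next I would establish the momentum analogues of Lemmas~\ref{lemma:Tau-1-bound-threshold-clustering}--\ref{lemma:Tau-3-bound-threshold-clustering}, obtained by substituting $z_i\mapsto m_{i,t}$ and $\sigma\mapsto\rho$ throughout:
\begin{itemize}
\item $\Tau_1\lesssim c_{k_i,l_t,t}^2+\frac{(c_{k_i,l_t,t}^2+\rho^2)\rho}{\delta_i\Delta}$, by bounding $\E\|y_{j,l_t,t}-m_{j,t}\|\le\E\|v_{k_i,l_t-1,t}-m_{j,t}\|^2/\tau_{k_i,l_t,t}$, splitting the numerator as $c_{k_i,l_t,t}^2+\rho^2$, applying Jensen, and plugging in $\tau_{k_i,l_t,t}^2\approx c_{k_i,l_t,t}^2+\delta_i\rho\Delta$;
\item $\Tau_2\lesssim\frac{n_i}{|\G_i|^2}\rho^2$, from contractivity of the clip map and the fresh-resampling / independence argument of Lemma~\ref{lemma:Tau-2-bound-threshold-clustering}, using $\E\|m_{j,t}-\E_x m_{j,t}\|^2\le\rho^2$;
\item $\Tau_3\lesssim\left(1+\frac{\delta_i}{2}+\frac{\rho^2}{\delta_i\Delta^2}\right)c_{k_i,l_t,t}^2+\frac{\rho^3}{\Delta}$, via a Young split plus the probability bound $\mathbb{P}_{j\not\sim i}(\|m_{j,t}-v_{k_i,l_t-1,t}\|\le\tau_{k_i,l_t,t})\lesssim\rho^2/\Delta^2$, which follows from Assumption~9 ($\|\E m_{j,t}-\E m_{i,t}\|^2\ge\Delta^2$) together with triangle inequalities and Markov, valid because the theorem's hypothesis $\Delta\gtrsim\sigma/\delta_i$ ensures $\Delta^2\gtrsim\tau_{k_i,l_t,t}^2+c_{k_i,l_t,t}^2+\rho^2$.
\end{itemize}

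Finally I would collect coefficients, use $\Delta\gtrsim\sigma/\delta_i\ge\rho/\delta_i$ (since $\rho\le\sigma$ by Lemma~\ref{lemma:momentum-variance-reduction}) to absorb $\rho/\Delta$ and $\rho^2/(\delta_i\Delta^2)$ into $\delta_i$, apply the elementary bound $(1+\delta_i/2)^2(1-\delta_i)^2\le1-\delta_i$, and arrive at $(1-\delta_i)c_{k_i,l_t,t}^2+\rho^2/n_i+\rho^3/\Delta$ in each round $t$; averaging over $t$ and dividing by $T$ gives the stated inequality. The main obstacle I expect is the $\Tau_3$ probability bound: correctly chaining the implication $\{\|m_{j,t}-v_{k_i,l_t-1,t}\|\le\tau_{k_i,l_t,t}\}\Rightarrow\{\|m_{j,t}-\E_x m_{j,t}\|^2\gtrsim\Delta^2-(\tau_{k_i,l_t,t}^2+c_{k_i,l_t,t}^2+\rho^2)\}$ through triangle inequalities that mix conditional and unconditional expectations, and checking that the separation hypothesis keeps the right-hand side of order $\Delta^2$ so Markov yields the $\rho^2/\Delta^2$ rate. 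The remaining steps are a routine transcription of the thresholding argument, with the i.i.d.-momentum assumption cleanly removing the growth-condition terms present in the gradient version.
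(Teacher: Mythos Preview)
Your proposal is essentially correct and follows the same decomposition and sub-lemma structure as the paper's proof. The only point worth flagging is that the paper inserts one additional split you do not mention: in handling the in-cluster bias term it writes $\E y_{j,l_t,t}-\E_x m_{j,t}=\E(y_{j,l_t,t}-m_{j,t})+(\E m_{j,t}-\E_x m_{j,t})$ and bounds the second piece by $\rho^2$ via the Law of Total Variance (your template from Lemma~\ref{lemma:E-1-bound-threshold-clustering} has no conditional/unconditional gap because there $\E z_j=\E_x z_j$). Since you already note the conditional-versus-unconditional issue as the main obstacle, and since this extra piece only contributes another $n_i\rho^2/|\G_i|^2$ term that folds into the $\rho^2/n_i$ you already have, this is a routine patch rather than a gap.
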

\begin{proof}[Proof of Lemma \ref{lemma:E-1-bound-momentum-clustering}]
We state the main sequence of steps and then justify them afterwards.
\begin{align*}
    &\E\bigg\|\bigg(\frac{1}{|\G_i|}\sum_{j \in \G_i}y_{j,l_t,t}\bigg) - \E_x\bar{m}_{i,t})\bigg\|^2\\
    &=
    \E\bigg\|\bigg(\frac{1}{|\G_i|}\sum_{j \in
    \G_i}y_{j,l_t,t}\bigg) - \frac{1}{|\G_i|}\sum_{j \in \G_i: j \sim i}\E_x m_{j,t} - \bigg(\frac{1}{n_i} - \frac{1}{|\G_i|}\bigg)\sum_{j \in \G_i: j \sim i}\E_x m_{j,t}\bigg\|^2\\
    &=
    \E\bigg\|\bigg(\frac{1}{|\G_i|}\sum_{j \in \G_i:j \sim i}(y_{j,l_t,t} - \E_x m_{j,t})\bigg) + \bigg(\frac{1}{|\G_i|}\sum_{j \in \G_i:j \not\sim i}(y_{j,l_t,t} - \E_x \bar{m}_{i,t})\bigg)\bigg\|^2\\
    &\overset{(i)}{\leq}
    \bigg(1+\frac{2}{\delta_i}\bigg)\E\bigg\|\frac{1}{|\G_i|}\sum_{j \in \G_i:j \sim i}(y_{j,l_t,t} - \E_x m_{j,t})\bigg\|^2 + 
    \bigg(1+\frac{\delta_i}{2}\bigg)\E\bigg\|\frac{1}{|\G_i|}\sum_{j \in \G_i:j \not\sim i}(y_{j,l_t,t} - \E_x \bar{m}_{i,t})\bigg\|^2\\
    &\overset{(ii)}{\lesssim}
    \bigg(1+\frac{2}{\delta_i}\bigg)\E\bigg\|\frac{1}{|\G_i|}\sum_{j \in \G_i:j \sim i}\E y_{j,l_t,t} - \E_x m_{j,t}\bigg\|^2 +
    \bigg(1+\frac{2}{\delta_i}\bigg)\E\bigg\|\frac{1}{|\G_i|}\sum_{j \in \G_i:j \sim i}(y_{j,l_t,t} - \E_x y_{j,l_t,t})\bigg\|^2 \\
    &\phantom{{}=1} +
    \bigg(1+\frac{\delta_i}{2}\bigg)\E\bigg\|\frac{1}{|\G_i|}\sum_{j \in \G_i:j \not\sim i}(y_{j,l_t,t} - \E_x \bar{m}_{i,t})\bigg\|^2\\
    &\overset{(iii)}{\lesssim}
    \bigg(1+\frac{2}{\delta_i}\bigg)\E\bigg\|\frac{1}{|\G_i|}\sum_{j \in \G_i:j \sim i}\E(y_{j,l_t,t} - m_{j,t})\bigg\|^2 + \bigg(1+\frac{2}{\delta_i}\bigg)\E\bigg\|\frac{1}{|\G_i|}\sum_{j \in \G_i:j \sim i}(\E m_{j,t} - \E_x m_{j,t})\bigg\|^2 \\
    &\phantom{{}=1} + 
    \bigg(1+\frac{2}{\delta_i}\bigg)\E\bigg\|\frac{1}{|\G_i|}\sum_{j \in \G_i:j \sim i}(y_{j,l_t,t} - \E y_{j,l_t,t})\bigg\|^2 +
    \bigg(1+\frac{\delta_i}{2}\bigg)\E\bigg\|\frac{1}{|\G_i|}\sum_{j \in \G_i:j \not\sim i}(y_{j,l_t,t} - \E_x \bar{m}_{i,t})\bigg\|^2\\
    &\overset{(iv)}{\lesssim}
    \bigg(1+\frac{2}{\delta_i}\bigg)\delta_i^2\underbrace{\E_{j \in \G_i:j \sim i}\|\E_x(y_{j,l_t,t} - m_{j,t})\|^2}_{\Tau_1} +
    \bigg(1+\frac{2}{\delta_i}\bigg)\frac{n_i}{|\G_i|^2}\rho^2 \\
    &\phantom{{}=1} + 
    \bigg(1+\frac{2}{\delta_i}\bigg)\frac{n_i}{|\G_i|^2}\underbrace{\E\bigg\|\frac{1}{|\G_i|}\sum_{j \in \G_i:j \sim i}(y_{j,l_t,t} - \E y_{j,l_t,t})\bigg\|^2}_{\Tau_2} +
    \bigg(1+\frac{\delta_i}{2}\bigg)(1-\delta_i)^2\underbrace{\E_{j \in \G_i:j \not\sim i}\|y_{j,l_t,t} - \E_x \bar{m}_{i,t}\|^2}_{\Tau_3}\\
    &\overset{(v)}{\lesssim}
    \delta_i\E_{j \in \G_i:j \sim i}\|\E_x(y_{j,l_t,t} - m_{j,t})\|^2 +
    \frac{\rho^2}{n_i} +
    \bigg(1+\frac{\delta_i}{2}\bigg)(1-\delta_i)^2\E_{j \in \G_i:j \not\sim i}\|y_{j,l_t,t} - \E_x \bar{m}_{i,t}\|^2\\
    &\overset{(vi)}{\lesssim}
    \delta_i\bigg(\overline{c_{k_i,l_t,t}^2}+\frac{\rho^3}{\delta_i\Delta}\bigg) + \frac{\rho^2}{n_i} +
    \bigg(1+\frac{\delta_i}{2}\bigg)(1-\delta_i)^2\bigg(\bigg(1+\frac{\delta_i}{2}\bigg)\overline{c_{k_i,l_t,t}^2} + \frac{\rho^3}{\Delta}\bigg)\\
    &\lesssim
    \delta_i\bigg(\overline{c_{k_i,l_t,t}^2}+\frac{\rho^3}{\delta_i\Delta}\bigg) + \frac{\rho^2}{n_i} +
    (1-\delta_i)\bigg(\overline{c_{k_i,l_t,t}^2} + \frac{\rho^3}{\Delta}\bigg)\\
    &\lesssim
    (1-\delta_i)\overline{c_{k_i,l_t,t}^2} + \bigg(\frac{\rho^2}{n_i} + \frac{\rho^3}{\Delta}\bigg).
\end{align*}
Justifications for the labeled steps are:
\begin{itemize}
    \item (i), (ii), (iii) Young's inequality
    \item (iv) First, we can can interchange the sum and the norm due to independent stochasticity of the momentums. Then, by the Tower Property and Law of Total Variance for the 1st and 3rd steps respectively
    \begin{align}
        \E\|\E_x m_{j,t} - \E m_{j,t}\|^2
        &=
        \E\|\E_x m_{j,t} - \E[\E_x m_{j,t}]\|^2\\
        &=
        \text{Var}(\E_x m_{j,t})\\
        &=
        \text{Var}(m_{j,t}) - \E(\text{Var}_x(m_{j,t}))\\
        &=
        \text{Var}(m_{j,t}) - \E\|m_{j,t} - \E_x m_{j,t}\|^2\\
        &\lesssim
        \rho^2,
    \end{align}
    where the last inequality follows since the two terms above it are both bounded by $\rho^2$. 
    \item (v) We prove this bound in Lemmas \ref{lemma:Tau-1-bound-momentum-clustering}, \ref{lemma:Tau-2-bound-momentum-clustering}, and \ref{lemma:Tau-3-bound-momentum-clustering}. It shows that, from point $i$'s perspective, the error of its cluster-center-estimate is composed of three quantities: $\Tau_1$, the error introduced by our thresholding procedure on the good points which belong to $i$'s cluster (and therefore ideally are included within the thresholding radius); $\Tau_2$, which accounts for the variance of the clipped points in $i$'s cluster; and $\Tau_3$, the error due to the good points which don't belong to $i$'s cluster (and therefore ideally are forced outside the thresholding radius).
    \item (vi) $(1+\nicefrac{x}{2})^2(1-x)^2 \leq 1-x$ for all $x \in [0,1]$
\end{itemize} 
\end{proof}

\begin{lemma}[Bound $\Tau_1$: Error due to In-Cluster Good Points]\label{lemma:Tau-1-bound-momentum-clustering}
\begin{align}
    \overline{\E_{j \in \G_i:j \sim i}\|\E(y_{j,l_t,t} - m_{j,t})\|^2}
    &\lesssim
    \overline{c_{k_i,l_t,t}^2} + \frac{\rho^3}{\delta_i\Delta}.
\end{align}
\end{lemma}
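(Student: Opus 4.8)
The plan is to follow the template of the proof of Lemma \ref{lemma:Tau-1-bound-federated-clustering}, observing that the momentum setting is in fact simpler: Assumption 8 forces $\E_x m_{j,t} = \E_x \bar{m}_{i,t}$ for every $j \sim i$, so the intra-cluster heterogeneity term (the $A^2\E\|\nabla\bar{f}_i(x_{i,t-1})\|^2$ contribution in the gradient version) simply vanishes, and the argument collapses to a clean clipping-contraction estimate.

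First I would exploit that $y_{j,l_t,t} - m_{j,t}$ is nonzero only on the event that $m_{j,t}$ lies outside the clipping ball, where $y_{j,l_t,t} = v_{k_i,l_t-1,t}$. On that event $\|v_{k_i,l_t-1,t}-m_{j,t}\| \le \|v_{k_i,l_t-1,t}-m_{j,t}\|^2/\tau_{k_i,l_t,t}$, so after dropping the indicator and applying Jensen's inequality I obtain
\begin{align}
\|\E(y_{j,l_t,t}-m_{j,t})\|^2 \le \big(\E\|y_{j,l_t,t}-m_{j,t}\|\big)^2 \le \frac{\big(\E\|v_{k_i,l_t-1,t}-m_{j,t}\|^2\big)^2}{\tau_{k_i,l_t,t}^2}.
\end{align}
Then I would control the numerator through the decomposition $v_{k_i,l_t-1,t}-m_{j,t} = (v_{k_i,l_t-1,t}-\E_x\bar{m}_{i,t}) + (\E_x\bar{m}_{i,t}-\E_x m_{j,t}) + (\E_x m_{j,t}-m_{j,t})$ and Young's inequality: the middle term is zero by Assumption 8, the first is exactly $c_{k_i,l_t,t}^2$ by definition, and the last is at most $\rho^2$ by the variance bound, giving $\E\|v_{k_i,l_t-1,t}-m_{j,t}\|^2 \lesssim c_{k_i,l_t,t}^2 + \rho^2$.

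Plugging this and $\tau_{k_i,l_t,t}^2 \approx c_{k_i,l_t,t}^2 + \delta_i\rho\Delta$ into the Jensen bound, expanding $(c_{k_i,l_t,t}^2+\rho^2)^2 \lesssim c_{k_i,l_t,t}^4 + c_{k_i,l_t,t}^2\rho^2 + \rho^4$, and cancelling a matching piece of the denominator against each of the three fractions yields $c_{k_i,l_t,t}^2 + \tfrac{\rho}{\delta_i\Delta}c_{k_i,l_t,t}^2 + \tfrac{\rho^3}{\delta_i\Delta}$; since $\rho^2 = \alpha\sigma^2 \le \sigma^2$ by Lemma \ref{lemma:momentum-variance-reduction} and $\Delta \gtrsim \sigma/\delta_i$, the prefactor $\rho/(\delta_i\Delta)$ is $\lesssim 1$ and the middle term is absorbed into $c_{k_i,l_t,t}^2$. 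Averaging over the in-cluster good clients $j$ (the bound is uniform in such $j$ by the i.i.d. structure of Assumption 8) and over $t\in[T]$ then delivers $\overline{c_{k_i,l_t,t}^2} + \rho^3/(\delta_i\Delta)$. I do not expect a genuine obstacle; this is bookkeeping, and the only points that need care are confirming the clipping radius is large enough to swallow all the cross terms — which is precisely where $\Delta \gtrsim \sigma/\delta_i$ (via $\rho \le \sigma$) is invoked — and remembering that Assumption 8 is exactly what kills the bias term that survives as $A^2\E\|\nabla\bar{f}_i\|^2$ in the gradient-based Lemma \ref{lemma:Tau-1-bound-federated-clustering}.
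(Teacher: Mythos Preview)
Your proposal is correct and follows essentially the same argument as the paper's own proof: the same clipping-to-Jensen inequality, the same decomposition of $v_{k_i,l_t-1,t}-m_{j,t}$ (with Assumption~8 eliminating the intra-cluster bias term), and the same use of $\tau_{k_i,l_t,t}^2\approx c_{k_i,l_t,t}^2+\delta_i\rho\Delta$ together with $\Delta\gtrsim\sigma/\delta_i\ge\rho/\delta_i$ to absorb the $(\rho/\delta_i\Delta)c_{k_i,l_t,t}^2$ cross term. Your explicit invocation of $\rho\le\sigma$ via Lemma~\ref{lemma:momentum-variance-reduction} to justify the last absorption is a slightly more careful statement of what the paper simply calls ``the constraint on $\Delta$.''
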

\begin{proof}[Proof of Lemma \ref{lemma:Tau-1-bound-momentum-clustering}]
By definition of $y_{j,l_t,t}$,
\begin{align}
    \E\|y_{j,l_t,t} - m_{j,t}\|
    &=
    \E[\|v_{k_i,l_t-1,t} - m_{j,t}\|\mathbbm{1}(\|v_{k_i,l_t-1,t} - m_{j,t}\| > \tau_{k_i,l_t,t})]\\
    &\leq
    \frac{\E[\|v_{k_i,l_t-1,t} - m_{j,t}\|^2\mathbbm{1}(\|v_{k_i,l_t-1,t} - m_{j,t}\| > \tau_{k_i,l_t,t})]}{\tau_{k_i,l_t,t}}.
\end{align}
Therefore by Jensen's inequality,
\begin{align}
    \|\E y_{j,l_t,t} - m_{j,t}\|^2
    &\leq
    (\E\|y_{j,l_t,t} - m_{j,t}\|)^2\\
    &\leq
    \frac{(\E\|v_{k_i,l_t-1,t} - m_{j,t}\|^2)^2}{\tau_{k_i,l_t,t}^2}\\
    &\lesssim
    \frac{(\E\|v_{k_i,l_t-1,t} - \E_x \bar{m}_{i,t}\|^2 + \E\|\E_x \bar{m}_{i,t} - m_{j,t}\|^2)^2}{\tau_{k_i,l_t,t}^2}\\
    &=
    \frac{(\E\|v_{k_i,l_t-1,t} - \E_x \bar{m}_{i,t}\|^2 + \E\|\E_x m_{j,t} - m_{j,t}\|^2)^2}{\tau_{k_i,l_t,t}^2}\\
    &\leq
    \frac{(c_{k_i,l_t,t}^2 + \rho^2)^2}{\tau_{k_i,l_t,t}^2}\\
    &\lesssim
    \frac{(c_{k_i,l_t,t}^2 + \rho^2)^2}{c_{k_i,l_t,t}^2+\delta_i\rho\Delta}\\
    &\lesssim
   \bigg(1+\frac{\rho}{\delta_i\Delta}\bigg)c_{k_i,l_t,t}^2+\frac{\rho^3}{\delta_i\Delta}\\
    &\lesssim
   c_{k_i,l_t,t}^2 + \frac{\rho^3}{\delta_i\Delta}
    .
\end{align}
Summing this inequality over $t$ and dividing by $T$, we have
\begin{align}
    \overline{\E_{j \in \G_i:j \sim i}\|\E_x(y_{j,l_t,t} - m_{j,t})\|^2}
    &\leq
    \overline{\E\|y_{j,l_t,t} - m_{j,t}\|^2}\\
    &\lesssim
    \overline{c_{k_i,l_t,t}^2} + \frac{\rho^3}{\delta_i\Delta}.
\end{align}
\end{proof}

\begin{lemma}[Bound $\Tau_2$: Variance of Clipped Points]\label{lemma:Tau-2-bound-momentum-clustering}
\begin{align}
    \E\bigg\|\frac{1}{|\G_i|}\sum_{j \in \G_i: j \sim i}(y_{j,l_t,t} - \E y_{j,l_t,t})\bigg\|^2
    &\leq
    \frac{n_i}{|\G_i|^2}\rho^2.
\end{align}
\end{lemma}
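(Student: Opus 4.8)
The plan is to replicate, in structure, the argument already used for the analogous quantity $\Tau_2$ in the gradient settings (Lemmas~\ref{lemma:Tau-2-bound-threshold-clustering} and~\ref{lemma:Tau-2-bound-federated-clustering}), now with the momentums $m_{j,t}$ playing the role of the gradients. There are three moves. The first is to dispose of the dependence between summands: the clipped quantities $y_{j,l_t,t}$ for $j \in \G_i$ with $j \sim i$ are not independent across $j$, since each depends on the shared center estimate $v_{k_i,l_t-1,t}$, which is itself a function of all the momentums. As in the earlier proofs I invoke the standard resampling device --- imagining that at each Threshold-Clustering round a fresh independent batch of momentums is used to form $v_{k_i,l_t-1,t}$ --- which inflates every bound by at most a constant factor and renders the $y_{j,l_t,t}$ conditionally independent across $j$ given the center. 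The cross terms in the expanded square then vanish, and since there are precisely $n_i$ indices $j$ with $j \in \G_i$, $j \sim i$,
\begin{align*}
    \E\bigg\|\frac{1}{|\G_i|}\sum_{j \in \G_i:\, j\sim i}(y_{j,l_t,t} - \E y_{j,l_t,t})\bigg\|^2
    = \frac{1}{|\G_i|^2}\sum_{j \in \G_i:\, j\sim i}\E\|y_{j,l_t,t} - \E y_{j,l_t,t}\|^2
    \le \frac{n_i}{|\G_i|^2}\,\E\|y_{j,l_t,t} - \E y_{j,l_t,t}\|^2 .
\end{align*}

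The second move passes from the clipped momentum to the raw momentum via the contractivity of the thresholding operation, exactly as in the earlier $\Tau_2$ lemmas: clipping does not increase the variance, so $\E\|y_{j,l_t,t} - \E y_{j,l_t,t}\|^2 \le \E\|m_{j,t} - \E m_{j,t}\|^2$. The third move is simply to invoke the definition of $\rho^2$ introduced in the notation for this proof (the momentum-variance bound, consistent with Lemma~\ref{lemma:momentum-variance-reduction}), which gives $\E\|m_{j,t} - \E m_{j,t}\|^2 \le \rho^2$ and hence the claimed $\tfrac{n_i}{|\G_i|^2}\rho^2$. No averaging over $t$ is required, as the bound holds uniformly in $t$.

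The only step meriting any care is the second one --- that the hard-threshold map $z \mapsto z\,\mathbbm{1}(\|z-v\|\le\tau) + v\,\mathbbm{1}(\|z-v\|>\tau)$ does not inflate the variance; I would treat it precisely as it is treated in Lemmas~\ref{lemma:Tau-2-bound-threshold-clustering} and~\ref{lemma:Tau-2-bound-federated-clustering}, appealing to the contractivity of the thresholding procedure (using that $v_{k_i,l_t-1,t}$ is, under the resampling device, measurable with respect to an independent batch, and that $\E y_{j,l_t,t}$ minimizes $c \mapsto \E\|y_{j,l_t,t}-c\|^2$). Everything else is a one-line variance-of-a-sum computation, so this lemma is essentially a verbatim transcription of the gradient-case argument into the momentum notation.
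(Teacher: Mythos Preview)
Your proposal is correct and follows essentially the same approach as the paper's own proof: invoke the resampling device to treat the summands as independent, expand the squared norm to get the $n_i/|\G_i|^2$ factor, apply contractivity of the thresholding map to pass from $y_{j,l_t,t}$ to $m_{j,t}$, and finish with the momentum-variance bound $\rho^2$. The paper's argument is indeed a near-verbatim transcription of Lemmas~\ref{lemma:Tau-2-bound-threshold-clustering} and~\ref{lemma:Tau-2-bound-federated-clustering} into the momentum notation, exactly as you anticipated.
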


\begin{proof}[Proof of Lemma \ref{lemma:Tau-2-bound-momentum-clustering}]
Note that the elements in the sum $\sum_{j \in \G_i: j \sim i}(y_{j,l_t,t} - \E y_{j,l_t,t})$ are not independent. Therefore, we cannot get rid of the cross terms when expanding the squared-norm. However, if for each round of thresholding we sampled a fresh batch of points to set the new cluster-center estimate, then the terms would be independent. With this resampling strategy, our bounds would only change by a constant factor. Therefore, for ease of analysis, we will assume the terms in the sum are independent. In that case,
\begin{align}
    \E\bigg\|\frac{1}{|\G_i|}\sum_{j \in \G_i: j \sim i}(y_{j,l_t,t} - \E y_{j,l_t,t})\bigg\|^2
    &\leq
    \frac{n_i}{|\G_i|^2}\E\|y_{j,l_t,t} - \E y_{j,l_t,t}\|^2\\
    &\leq
    \frac{n_i}{|\G_i|^2}\E\|m_{j,t} - \E m_{j,t}\|^2\\
    &\leq
    \frac{n_i}{|\G_i|^2}\rho^2,
\end{align}
where the second-to-last inequality follows from the contractivity of the thresholding procedure.
\end{proof}

\begin{lemma}[Bound $\Tau_3$: Error due to Out-of-Cluster Good Points]\label{lemma:Tau-3-bound-momentum-clustering}
\begin{align}
    \overline{\E_{j \in \G_i:j \not\sim i}\|y_{j,l_t,t} - \E_x \bar{m}_{i,t}\|^2}
    &\lesssim
    \bigg(1+\frac{\delta_i}{2}\bigg)\overline{c_{k_i,l_t,t}^2}  +\frac{\rho^3}{\Delta}.
\end{align}
\end{lemma}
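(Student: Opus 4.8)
The plan is to follow the argument of Lemma~\ref{lemma:Tau-3-bound-federated-clustering}, which simplifies in the momentum setting: Assumption~8 makes the in-cluster momentums i.i.d., so $\E_x\bar m_{i,t}=\E_x m_{i,t}$ and there is no intra-cluster drift term, and Assumption~9 gives a clean $\Delta^2$ separation with no $D^2\|\nabla f_i\|^2$ slack. First I would apply Young's inequality through the current center estimate $v_{k_i,l_t-1,t}$:
\[
\E_{j\in\G_i:j\not\sim i}\|y_{j,l_t,t}-\E_x\bar m_{i,t}\|^2
\le (1+\tfrac{\delta_i}{2})\,\E\|v_{k_i,l_t-1,t}-\E_x\bar m_{i,t}\|^2 + (1+\tfrac{2}{\delta_i})\,\E_{j\not\sim i}\|y_{j,l_t,t}-v_{k_i,l_t-1,t}\|^2 .
\]
The first term is exactly $(1+\delta_i/2)\,c_{k_i,l_t,t}^2$. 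For the second, an out-of-cluster good point either keeps $m_{j,t}$ (inside the ball) or is mapped to $v_{k_i,l_t-1,t}$ (outside, contributing zero), so it equals $\E_{j\not\sim i}\big[\|m_{j,t}-v_{k_i,l_t-1,t}\|^2\,\mathbbm{1}\{\|m_{j,t}-v_{k_i,l_t-1,t}\|\le\tau_{k_i,l_t,t}\}\big]\le \tau_{k_i,l_t,t}^2\,\mathbb{P}_{j\not\sim i}(\|m_{j,t}-v_{k_i,l_t-1,t}\|\le\tau_{k_i,l_t,t})$.

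The crux is bounding that probability. On the event in question, expanding $\|\E_x m_{j,t}-\E_x m_{i,t}\|^2$ via the triangle inequality through $m_{j,t}$ and $v_{k_i,l_t-1,t}$ and using $\E_x\bar m_{i,t}=\E_x m_{i,t}$ gives $\|\E_x m_{j,t}-\E_x m_{i,t}\|^2\lesssim \|m_{j,t}-\E_x m_{j,t}\|^2+\tau_{k_i,l_t,t}^2+c_{k_i,l_t,t}^2$; Assumption~9 then forces $\|m_{j,t}-\E_x m_{j,t}\|^2\gtrsim \Delta^2-\tau_{k_i,l_t,t}^2-c_{k_i,l_t,t}^2$. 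Averaging over $t$ and invoking the theorem's constraint $\Delta\gtrsim\sigma/\delta_i$ (which, together with $\tau_{k_i,l_t,t}^2\approx c_{k_i,l_t,t}^2+\delta_i\rho\Delta$ and the bound on $\overline{c_{k_i,l_t,t}^2}$ coming out of Lemma~\ref{lemma:cluster-center-error-momentum-clustering}, dominates $\overline{\tau^2}+\overline{c^2}$) keeps the right-hand side at order $\Delta^2$, so Markov's inequality with $\E\|m_{j,t}-\E_x m_{j,t}\|^2\le\rho^2$ yields probability $\lesssim\rho^2/\Delta^2$. Substituting back, the second term is $\lesssim (1+\tfrac{2}{\delta_i})(c_{k_i,l_t,t}^2+\delta_i\rho\Delta)\tfrac{\rho^2}{\Delta^2}\lesssim \tfrac{\rho^2}{\delta_i\Delta^2}\,c_{k_i,l_t,t}^2+\tfrac{\rho^3}{\Delta}$, and a final use of the (sufficiently large constant in the) $\Delta$ constraint makes $\rho^2/(\delta_i\Delta^2)\le\delta_i/2$, so this $c^2$ contribution is absorbed into the $(1+\delta_i/2)$ coefficient; averaging over $t$ gives the claim.

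The main obstacle is the apparent circularity: $c_{k_i,l_t,t}^2$ appears both as the leading term and inside the probability estimate (through $\tau^2$ and the triangle-inequality expansion), so one must verify it enters the probability only through the negligible factor $\rho^2/\Delta^2$, which the separation constraint renders harmless. The ordering also matters: as in the proof of Lemma~\ref{lemma:Tau-3-bound-federated-clustering}, one should average over $t$ \emph{before} applying Markov's inequality, since it is the constraint $\Delta^2\gtrsim\overline{\tau_{k_i,l_t,t}^2}+\overline{c_{k_i,l_t,t}^2}$ that guarantees the Markov denominator is of order $\Delta^2$. The remaining steps are routine manipulations with Young's inequality and the definition of $\tau_{k_i,l_t,t}$.
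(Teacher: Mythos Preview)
Your proposal follows the paper's proof essentially step for step: Young's inequality through $v_{k_i,l_t-1,t}$, bounding the clipped out-of-cluster contribution by $\tau^2$ times the in-ball probability, a triangle-inequality expansion combined with Assumption~9, averaging over $t$, and then Markov. Two small points to clean up. First, in the expansion you write $c_{k_i,l_t,t}^2$, but that is the \emph{expected} quantity $\E\|v_{k_i,l_t-1,t}-\E_x\bar m_{i,t}\|^2$; on the event $\{\|m_{j,t}-v\|\le\tau\}$ the triangle inequality only gives the random $\|v_{k_i,l_t-1,t}-\E_x\bar m_{i,t}\|^2$. The paper keeps this term random, moves it to the left side together with $\|m_{j,t}-\E_x m_{j,t}\|^2$, and applies Markov to the sum, producing numerator $\rho^2+\overline{c_{k_i,l_t,t}^2}$ rather than just $\rho^2$; the extra $\overline{c^2}/\Delta^2$ factor is then absorbed via the $\Delta$ constraint. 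Second, your appeal to ``the bound on $\overline{c_{k_i,l_t,t}^2}$ coming out of Lemma~\ref{lemma:cluster-center-error-momentum-clustering}'' is circular, since that lemma invokes the present one; the paper instead relies only on the constraint $\Delta\gtrsim\sigma/\delta_i$ from the theorem statement (together with $c_{k_i,1,t}^2\le\rho^2$ and the contraction of the recursion in $l$) to ensure $\Delta^2-\overline{\tau_{k_i,l_t,t}^2}\gtrsim\Delta^2$.
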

\begin{proof}[Proof of Lemma \ref{lemma:Tau-3-bound-momentum-clustering}]
This sequence of steps bounds the clustering error due to points not from client $i$'s cluster. Using Young's inequality for the first step, 
\begin{align}
    &\E_{j \in \G_i:j \not\sim i}\|y_{j,l_t,t}-\E_x \bar{m}_{i,t}\|^2
    \\&\leq
    \bigg(1+\frac{\delta_i}{2}\bigg)\E\|v_{k_i^t,l_t-1}- \E_x \bar{m}_{i,t}\|^2+
    \bigg(1+\frac{2}{\delta_i}\bigg)\E_{j \in \G_i:j \not\sim i}\|y_{j,l_t,t}-v_{k_i,l_t-1,t}\|^2\\
    &\leq
    \bigg(1+\frac{\delta_i}{2}\bigg)c_{k_i,l_t,t}^2 + \bigg(1+\frac{2}{\delta_i}\bigg)\E_{j \in \G_i:j \not\sim i}\|y_{j,l_t,t}-v_{k_i,l_t-1,t}\|^2\\
    &=
    \bigg(1+\frac{\delta_i}{2}\bigg)c_{k_i,l_t,t}^2 
    + \bigg(1+\frac{2}{\delta_i}\bigg)\E_{j \in \G_i:j \not\sim i}[\|m_{j,t}-v_{k_i,l_t-1,t}\|^2\mathbbm{1}\{\|m_{j,t}-v_{k_i,l_t-1,t}\|\leq\tau_{k_i,l_t,t}\}]
    \\
    &\leq
    \bigg(1+\frac{\delta_i}{2}\bigg)c_{k_i,l_t,t}^2 + \bigg(1+\frac{2}{\delta_i}\bigg)\tau_{k_i,l_t,t}^2\mathbb{P}_{j \in \G_i:j \not\sim i}(\|m_{j,t}-v_{k_i,l_t-1,t}\|\leq\tau_{k_i,l_t,t})\\
    &\lesssim
    \bigg(1+\frac{\delta_i}{2}\bigg)c_{k_i,l_t,t}^2 + \bigg(\frac{1}{\delta_i}\bigg)(c_{k_i,l_t,t}^2+\delta_i\rho\Delta)
    \mathbb{P}_{j \in \G_i:j \not\sim i}(\|m_{j,t}-v_{k_i^t,l_t-1}\|\leq\tau_{k_i,l_t,t}). 
\end{align}
If $\|v_{k_i^t,l_t-1}-m_{j,t}\| \leq \tau_{k_i,l_t,t}$, then
\begin{align}
    \|\E_x m_{j,t} - \E_x m_{i,t}\|^2
    &\lesssim
    \|m_{j,t} - \E_x m_{j,t}\|^2 + \|m_{j,t} - v_{k_i,l_t-1,t}\|^2 + 
    \|v_{k_i^t,l_t-1} - \E_x \bar{m}_{i,t}\|^2 \\
    &\lesssim
    \|m_{j,t} - \E_x m_{j,t}\|^2 + \tau_{k_i,l_t,t}^2 + 
    \|v_{k_i,l_t-1,t} - \E_x \bar{m}_{i,t}\|^2
\end{align}
By Assumption 9,
\begin{align}
    \Delta^2
    &\lesssim
    \|m_{j,t} - \E_x m_{j,t}\|^2 + \tau_{k_i,l_t,t}^2 + \|v_{k_i,l_t-1,t} - \E_x \bar{m}_{i,t}\|^2
\end{align}
which, summing over $t$ and dividing by $T$, implies
\begin{align}
    \overline{\|m_{j,t} - \E_x m_{j,t}\|^2} + \overline{\|v_{k_i,l_t-1,t}-\E_x \bar{m}_{i,t}\|^2}
    &\gtrsim
    \Delta^2 - \overline{\tau_{k_i,l_t,t}^2}.
\end{align}
By Markov's inequality, the probability of this event is upper-bounded by
\begin{align}
    \frac{\rho^2 + \overline{\E\|v_{k_i,l_t-1,t}-\E_x \bar{m}_{i,t}\|^2}}{\Delta^2 - \overline{\tau_{k_i,l_t,t}^2}}
    &\lesssim
    \frac{\rho^2 + \overline{c_{k_i,l_t,t}^2}}{\Delta^2},
\end{align}
where the inequality holds due to the constraint on $\Delta$ from the theorem statement.
Therefore,
\begin{align}
    \overline{\E_{j \in \G_i:j \not\sim i}\|y_{j,l_t,t} - \E_x \bar{m}_{i,t}\|^2}
    &\lesssim
    \bigg(1+\frac{\delta_i}{2} + \frac{\rho}{\Delta} + \frac{\rho^2 + \overline{c_{k_i,l_t,t}^2}}{\delta_i\Delta^2}\bigg)\overline{c_{k_i,l_t,t}^2} +
    \frac{\rho^3}{\Delta}\\
    &\lesssim
    \bigg(1+\frac{\delta_i}{2}\bigg)\overline{c_{k_i,l_t,t}^2} + \frac{\rho^3}{\Delta},
\end{align}
where again we apply the constraint on $\Delta$ for the second inequality.
\end{proof}

\begin{lemma}[Clustering Error due to Bad Points]\label{lemma:E-2-bound-momentum-clustering}
    \begin{align}
        \overline{\E\bigg\|\bigg(\frac{1}{|\B_i|}\sum_{j \in \B_i}y_{j,l_t,t}\bigg) - \E_x \bar{m}_{i,t}\bigg\|^2}
        &\lesssim
        \overline{c_{k_i,l_t,t}^2} + \delta_i\rho\Delta
    \end{align}
\end{lemma}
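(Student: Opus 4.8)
The plan is to mirror the proof of Lemma~\ref{lemma:E-2-bound-threshold-clustering} (and its federated analogue Lemma~\ref{lemma:E-2-bound-federated-clustering}), since in the momentum setting the only changes are that gradients are replaced by momentums $m_{j,t}$ and the variance proxy $\sigma$ is replaced by $\rho$. First I would apply convexity of $\|\cdot\|^2$ (Jensen applied to the uniform average over $\B_i$) to pass from the squared norm of the average to the average of the squared norms, and then use a single application of Young's inequality to separate the displacement of a clipped bad point from the current center estimate from a term that is exactly $c_{k_i,l_t,t}^2$ by definition:
\begin{align}
    \E\bigg\|\frac{1}{|\B_i|}\sum_{j\in\B_i}y_{j,l_t,t} - \E_x\bar{m}_{i,t}\bigg\|^2
    &\leq
    \E_{j\in\B_i}\|y_{j,l_t,t} - \E_x\bar{m}_{i,t}\|^2 \\
    &\lesssim
    \E_{j\in\B_i}\|y_{j,l_t,t} - v_{k_i,l_t-1,t}\|^2 + \E\|v_{k_i,l_t-1,t} - \E_x\bar{m}_{i,t}\|^2 \\
    &=
    \E_{j\in\B_i}\|y_{j,l_t,t} - v_{k_i,l_t-1,t}\|^2 + c_{k_i,l_t,t}^2 .
\end{align}

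The key step is bounding $\E_{j\in\B_i}\|y_{j,l_t,t} - v_{k_i,l_t-1,t}\|^2$. By the definition of $y_{j,l_t,t}$ coming from update rule~\eqref{eq:t-c-update-rule}, either $m_{j,t}$ lies inside the ball of radius $\tau_{k_i,l_t,t}$ around $v_{k_i,l_t-1,t}$ — in which case $y_{j,l_t,t}=m_{j,t}$ and $\|y_{j,l_t,t}-v_{k_i,l_t-1,t}\|\leq\tau_{k_i,l_t,t}$ — or it lies outside, in which case $y_{j,l_t,t}=v_{k_i,l_t-1,t}$ and the displacement is $0$. In both cases the clipped value is within $\tau_{k_i,l_t,t}$ of the center, so $\E_{j\in\B_i}\|y_{j,l_t,t}-v_{k_i,l_t-1,t}\|^2 \leq \tau_{k_i,l_t,t}^2 \approx c_{k_i,l_t,t}^2 + \delta_i\rho\Delta$ after substituting the radius fixed in the notation paragraph. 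Plugging this back in gives $\lesssim c_{k_i,l_t,t}^2 + \delta_i\rho\Delta$ pointwise in $t$; averaging over $t\in[T]$ and dividing by $T$ yields the claimed bound on $\overline{\E\|\cdots\|^2}$.

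The main obstacle is conceptual rather than technical: one must recognize that the $\tau_{k_i,l_t,t}^2$ bound on the bad-point displacement cannot be improved. Under the threat model described in the robustness discussion, a malicious client maximizes its influence by placing its momentum exactly at the boundary of the thresholding ball, so that it retains its adversarial value rather than being snapped to the center while still shifting the average by the full radius; hence the $\delta_i\rho\Delta$ term is precisely the price of robustness, and it vanishes smoothly as the radius $\tau\approx\sqrt{\delta_i\rho\Delta}$ shrinks (e.g. as $\rho\to 0$ via momentum variance reduction, Lemma~\ref{lemma:momentum-variance-reduction}). Everything else is routine Jensen and Young estimation, structurally identical to the gradient-based Lemma~\ref{lemma:E-2-bound-federated-clustering}, with the $A^4\E\|\nabla\bar{f}_i(x_{i,t-1})\|^2$ term absent here because the momentum intra-cluster similarity Assumption~8 is i.i.d.\ rather than the weaker growth-type Assumption~4.
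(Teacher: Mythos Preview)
Your proposal is correct and follows essentially the same route as the paper's proof: Jensen to pass to the average of squared norms, Young's inequality to split off the $c_{k_i,l_t,t}^2$ term, bounding the bad-point displacement by $\tau_{k_i,l_t,t}^2$ via the clipping rule, and then substituting $\tau_{k_i,l_t,t}^2 \approx c_{k_i,l_t,t}^2 + \delta_i\rho\Delta$ before averaging over $t$. Your observation that the $A^4\E\|\nabla\bar{f}_i(x_{i,t-1})\|^2$ term is absent here because Assumption~8 is i.i.d.\ is also exactly the right explanation.
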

\begin{proof}[Proof of Lemma \ref{lemma:E-2-bound-momentum-clustering}]
This lemma gives a bound on the clustering error due to the bad clients from client $i$'s perspective. The goal of such clients would be to corrupt the cluster-center estimate of client $i$'s cluster as much as possible at each round. They can have the maximum negative effect by setting their gradients to be just inside the thresholding radius around client $i$'s cluster-center estimate. This way, the gradients will keep their value (rather than be assigned the value of the current cluster-center estimate per our update rule), but they will have maximal effect in moving the cluster-center estimate from its current position. Therefore, in step 3 of the inequalities below, we can not do better than bounding the distance between these bad points and the current cluster center estimate (i.e. $\|y_{j,l_t,t} - v_{k_i,l_t-1,t}\|^2$) by the thresholding radius $\tau_{k_i,l_t,t}$.
\begin{align}
    \E\bigg\|\bigg(\frac{1}{|\B_i|}\sum_{j \in \B_i}y_{j,l_t,t}\bigg) - \E_x \bar{m}_{i,t}\bigg\|^2
    &\leq
    \E_{j \in \B_i}\|y_{j,l_t,t} - \E_x \bar{m}_{i,t}\|^2\\
    &\lesssim
    \E_{j \in \B_i}\|y_{j,l_t,t} - v_{k_i,l_t-1,t}\|^2 + \E\|v_{k_i,l_t-1,t} - \E_x \bar{m}_{i,t}\|^2\\
    &\leq
    \tau_{k_i,l_t,t}^2 + c_{k_i,l_t,t}^2\\
    &\lesssim
    c_{k_i,l_t,t}^2 + \delta_i\rho\Delta.
\end{align}
The last inequality applies the definition of $\tau_{k_i,l_t,t}$, and the result of the lemma follows by summing this inequality over $t$ and dividing by $T$.
\end{proof}

\end{document}